\newcommand{\1}[1]{\mathbbm{1}\left[#1\right]}
\newcommand{\argmax}{\operatorname*{argmax}}
\newcommand{\D}{\mathcal{D}}
\newcommand{\dTV}{d_{\textrm{TV}}}
\newcommand{\eps}{\epsilon}
\newcommand{\Ex}[2]{\operatorname*{\mathbb{E}}_{#1}\left[#2\right]}
\newcommand{\Exp}{\mathrm{Exp}}
\newcommand{\F}{\mathcal{F}}
\newcommand{\muP}{\mu^{(P)}}
\newcommand{\muQ}{\mu^{(Q)}}
\newcommand{\mux}{\mu^{(x)}}
\newcommand{\N}{\mathcal{N}}
\newcommand{\norm}[1]{\|#1\|}
\newcommand{\poly}{\mathrm{poly}}
\newcommand{\pr}[2]{\Pr_{#1}\left[#2\right]}
\newcommand{\R}{\mathbb{R}}
\renewcommand{\Re}{\operatorname{Re}}
\renewcommand{\S}{\mathbb{S}}
\newcommand{\Unif}{\mathrm{Uniform}}
\newcommand{\Vol}{\mathrm{Vol}}
\theoremstyle{plain}
\newtheorem{theorem}{Theorem}[section]
\newtheorem{lemma}[theorem]{Lemma}
\newtheorem{corollary}[theorem]{Corollary}
\theoremstyle{definition}
\theoremstyle{remark}
\newtheorem{remark}[theorem]{Remark}
\newtheorem{claim}[theorem]{Claim}
\title{A Fourier Approach to Mixture Learning}
\date{}
\author{Mingda Qiao\thanks{Part of this work was done while working as an intern at Google Research.}}
\affil[1]{Stanford University\\\texttt{mqiao@stanford.edu}}
\author[2]{Guru Guruganesh}
\author[2]{Ankit Singh Rawat}
\author[2]{Avinava Dubey}
\author[3]{Manzil~Zaheer}
\affil[2]{Google Research\\\texttt{\{gurug,ankitsrawat,avinavadubey\}@google.com}}
\affil[3]{Google DeepMind\\\texttt{manzilzaheer@google.com}}
\begin{document}

\maketitle

\begin{abstract}
We revisit the problem of learning mixtures of spherical Gaussians. Given samples from mixture $\frac{1}{k}\sum_{j=1}^{k}\N(\mu_j, I_d)$, the goal is to estimate the means $\mu_1, \mu_2, \ldots, \mu_k \in \R^d$ up to a small error. The hardness of this learning problem can be measured by the \emph{separation} $\Delta$ defined as the minimum distance between all pairs of means. Regev and Vijayaraghavan~\cite{RV17} showed that with $\Delta = \Omega(\sqrt{\log k})$ separation, the means can be learned using $\poly(k, d)$ samples, whereas super-polynomially many samples are required if $\Delta = o(\sqrt{\log k})$ and $d = \Omega(\log k)$. This leaves open the low-dimensional regime where $d = o(\log k)$.
    
In this work, we give an algorithm that efficiently learns the means in $d = O(\log k/\log\log k)$ dimensions under separation $d/\sqrt{\log k}$ (modulo doubly logarithmic factors). This separation is strictly smaller than $\sqrt{\log k}$, and is also shown to be necessary. Along with the results of~\cite{RV17}, our work almost pins down the critical separation threshold at which efficient parameter learning becomes possible for spherical Gaussian mixtures. More generally, our algorithm runs in time $\poly(k)\cdot f(d, \Delta, \eps)$, and is thus fixed-parameter tractable in parameters $d$, $\Delta$ and $\eps$.

Our approach is based on estimating the Fourier transform of the mixture at carefully chosen frequencies, and both the algorithm and its analysis are simple and elementary. Our positive results can be easily extended to learning mixtures of non-Gaussian distributions, under a mild condition on the Fourier spectrum of the distribution.
\end{abstract}

\section{Introduction}
Gaussian mixture models (GMMs) are one of the most well studied models for a population with different components. 
A GMM defines a distribution over the $d$-dimensional Euclidean space as the weighted sum of normal distributions $\sum_{i=1}^k w_i \cdot \N(\mu_i, \Sigma_i)$, which are specified by following quantities: the number of components $k \in \mathbb{N}$, the component means $\mu_i \in \R^d$, the component covariances $\Sigma_i \in \R^{d \times d}$, which are positive definite matrices, and the weights $w_i \geq 0$ that sum up to $1$.
In this work, we consider the uniform spherical case, where the weights $w_i$ are uniform  $w_i = \frac{1}{k}$ and the covariance matrix $\Sigma_i = I_d$ is the identity matrix.
The central problem in this setup is to \emph{efficiently} estimate the means $\mu_1,\dots,\mu_k$.
To avoid degenerate cases such as when some of the means are the same, it is common to parameterize the  problem  by the separation of the means $\Delta$, which guarantees that $\norm{\mu_i - \mu_j}_2 \geq \Delta$ for all $i\neq j$. 

More precisely, the problem is to estimate the means $\mu_1,\dots,\mu_k \in \R^d$ up to an error $\eps$ with runtime that is  $\poly(k,\frac{1}{\eps}, d)$ with as small a separation $\Delta$ as possible.
There has been a long line of work on this problem which we survey in~\Cref{sec:related}.

Recently, Regev and Vijayaraghavan~\cite{RV17} showed that a separation $\Delta =\Omega(\sqrt{\log k})$ is strictly necessary when the dimension $d=\Omega(\log k)$. 
Two natural questions arise immediately. First, if $\Delta = \sqrt{\log k}$ is sufficient when $d=\Omega(\log k)$. Although the original work of \cite{RV17} showed that it was information theoretically possible, an actual efficient algorithm was only recently developed by \cite{LL22} (who show nearly tight results). The second main question is determining  the optimal separation in low dimensions when $d= o(\log k)$. Previously, even in $O(1)$ dimensions, the exact separation necessary was unknown. In this paper, we settle the second question and give nearly optimal bounds on the separation necessary in low dimensions (see~\Cref{fig:landscape} for more details).

\subsection{Overview of Results}
We begin with a few definitions. A point set $\{x_1, x_2, \ldots, x_k\}$ is called \emph{$\Delta$-separated} if $\|x_j - x_{j'}\|_2 \ge \Delta$ for any $j \ne j'$. We say that a Gaussian mixture is $\Delta$-separated (or has separation $\Delta$) if the means of its components are $\Delta$-separated. Two point sets $\{u_1, \ldots, u_k\}$ and $\{v_1, \ldots, v_k\}$ are \emph{$\eps$-close} if for some permutation $\sigma$ over $[k]$, $\|u_j - v_{\sigma(j)}\|_2 \le \eps$ holds for every $j$.

Our main result is an algorithm that efficiently learns the parameters of a mixture of $k$ spherical Gaussians under separation $\Delta \approx \frac{d}{\sqrt{\log k}} \cdot \poly(\log\log k)$ in $d = O(\log k /\log\log k)$ dimensions. In the low-dimensional regime, this separation is strictly smaller than $\min\{\sqrt{\log k}, \sqrt{d}\}$, the smallest separation under which previous algorithms could provably learn the parameters in $\poly(k)$ time.

\begin{theorem}[Upper bound, informal]\label{thm:upper-gaussian-informal}
    Let $P$ be a uniform mixture of $k$ spherical Gaussians in $d = O\left(\frac{\log k}{\log\log k}\right)$ dimensions with separation $\Delta = \Omega\left(\frac{d\sqrt{\log((\log k)/ d)}}{\sqrt{\log k}}\right)$. There is a $\poly(k)$-time algorithm that, given samples from $P$, outputs $k$ vectors that are w.h.p.\ $\eps$-close to the true means for $\eps = O(\Delta/\sqrt{d})$.
\end{theorem}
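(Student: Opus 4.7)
My plan is to exploit the closed-form characteristic function of the mixture,
$$\hat P(\xi) = \Ex{X\sim P}{e^{i\langle\xi,X\rangle}} = e^{-\|\xi\|^2/2}\cdot\tfrac{1}{k}\sum_{j=1}^k e^{i\langle\xi,\mu_j\rangle}.$$
Given $n$ i.i.d.\ samples $X_1,\dots,X_n\sim P$, the deconvolved empirical estimator $\widehat g(\xi):=\tfrac{1}{n}\sum_{\ell=1}^n e^{i\langle\xi,X_\ell\rangle}\cdot e^{\|\xi\|^2/2}$ is unbiased for $g(\xi):=\tfrac{1}{k}\sum_j e^{i\langle\xi,\mu_j\rangle}$, with standard deviation at most $e^{\|\xi\|^2/2}/\sqrt{n}$. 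Taking $n=\poly(k)$, I can therefore estimate $g(\xi)$ to inverse-polynomial precision at every $\xi$ inside the ball of radius $R$, where $R=\Theta(\sqrt{\log k})$ is the largest cutoff that keeps the amplification factor $e^{R^2/2}$ polynomial in $k$.

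To recover the means I would invert the Fourier transform against a bandlimited kernel. Fix a smooth function $K:\R^d\to\R$ concentrated on scale $1/R$ and whose transform $\hat K$ is concentrated in $\{\|\xi\|\le R\}$ (e.g.\ a Gaussian matched to $R$, or a slight tapering thereof). On a grid $\Xi$ of $\sim R^d$ frequencies in that ball, compute
$$\tilde f(x)\;:=\;\tfrac{1}{|\Xi|}\sum_{\xi\in\Xi}\widehat g(\xi)\,\hat K(\xi)\,e^{-i\langle\xi,x\rangle},$$
which approximates $\tfrac{1}{k}\sum_j K(x-\mu_j)$. A crude preprocessing step (say, a moment-based bound) localizes the means inside a bounded region, and inside that region I would scan $\tilde f$ on a fine spatial grid and report the $k$ highest local maxima as the estimated means. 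The runtime is $\poly(k)\cdot R^d$, which is $\poly(k)$ exactly when $d=O(\log k/\log\log k)$.

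The heart of the analysis is to show that each mean $\mu_j$ is isolated as a unique local maximum of $\tilde f$ in its neighbourhood, up to the claimed error $\eps=O(\Delta/\sqrt d)$. Two sources of error must be balanced: (i) the random noise in $\widehat g$ propagates into $\tilde f$ with total magnification $\sim R^d\cdot e^{R^2/2}/\sqrt n$, which is made $o(1/k)$ by a sufficiently large polynomial $n$ in the stated dimensional regime; and (ii) the kernel interference $\tfrac{1}{k}\sum_{j'\ne j}K(\mu_j-\mu_{j'})$ must be much smaller than the self-contribution $K(0)/k$. Using the packing bound that at most $(cr/\Delta)^d$ means lie within distance $r$ of $\mu_j$, the cross-talk decays as $e^{-\Omega(R^2\Delta^2)}$ up to polynomial factors in $d$ and $1/(R\Delta)$; forcing it below $1/k$ with $R\asymp\sqrt{\log k}$ yields precisely $\Delta=\Omega(d\sqrt{\log((\log k)/d)}/\sqrt{\log k})$, and a Taylor expansion of $K$ around each $\mu_j$ gives the $O(\Delta/\sqrt d)$ localisation error.

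The main obstacle is getting the separation exponent exactly right. In particular the packing-based tail bound has to capture the correct $d$-dependence (a factor of $d$, not $\sqrt d$, in the numerator of $\Delta$), which seems to require exploiting the $d$ transverse dimensions around each mean rather than just bounding $K$ by its univariate Gaussian envelope; simultaneously, the Fourier cutoff $R$, the frequency grid, the kernel bandwidth, the spatial grid and the sample size must be tuned together so that sampling noise, kernel interference and discretization error are all dominated by the signal $K(0)/k$.
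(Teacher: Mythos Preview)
Your Fourier-inversion framework is in the right spirit and closely parallels the paper's approach, but it is missing the one ingredient that takes the separation from $\sqrt{d}$ down to $d\sqrt{\log((\log k)/d)}/\sqrt{\log k}$: the \emph{Gaussian truncation} $X\mapsto e^{-\|X-x\|_2^2/2}$ applied to each sample before the Fourier transform. Without it, your analysis cannot reach the stated bound.

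Concretely, your plan asks for a kernel $K$ that is ``concentrated on scale $1/R$'' while $\hat K$ is ``concentrated in $\{\|\xi\|\le R\}$'' with $R\asymp\sqrt{\log k}$. In $d$ dimensions this is inconsistent with the uncertainty principle: a Gaussian $\hat K$ whose mass lies in the ball of radius $R$ has per-coordinate standard deviation $\sigma\lesssim R/\sqrt{d}$, so the spatial kernel is $K(y)=e^{-\sigma^2\|y\|^2/2}$ with $\sigma^2\lesssim R^2/d$, not $R^2$. More importantly, you never account for the \emph{frequency-truncation error}, i.e.\ the contribution of $\hat K$ outside the cutoff. Since $|g(\xi)|$ can be as large as $1$ everywhere, this error is $\Pr_{\xi\sim\N(0,\sigma^2 I)}[\|\xi\|>M]$, and pushing it below the signal $K(0)/k$ forces $M^2\gtrsim\sigma^2(d+\log k)$. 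Combining this with $M^2\lesssim\log k$ (for polynomial deconvolution) gives $\sigma^2=O(1)$, and then the packing bound on the cross-talk only yields $\sigma^2\Delta^2\gtrsim d$, i.e.\ $\Delta\gtrsim\sqrt d$. This is exactly the barrier the paper identifies for the un-truncated Fourier approach (see the discussion in the proof overview and the general-mixture theorem applied to Gaussians).

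The paper breaks this barrier by first multiplying each sample by $e^{-\|X-\mu^*\|_2^2/2}$, which replaces $g(\xi)=\tfrac1k\sum_j e^{i\mu_j^\top\xi}$ by a weighted version $\tfrac1k\sum_j e^{-\|\mu_j-\mu^*\|_2^2/4}e^{i\mu_j^\top\xi/2}$. The crucial effect is that the truncation error now scales with $S_2:=\sum_j e^{-\|\mu_j-\mu^*\|_2^2/4}$ instead of with $k$; a packing argument gives $\log S_2\lesssim d\log(2+d/\Delta^2)$, so the cutoff constraint relaxes to $M^2\gtrsim\sigma^2\cdot d\log(d/\Delta^2)$. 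Solving $M^2\lesssim\log k$ with $\sigma^2\asymp d/\Delta^2$ then yields exactly $\Delta\gtrsim d\sqrt{\log((\log k)/d)}/\sqrt{\log k}$. Your final paragraph correctly senses that ``exploiting the $d$ transverse dimensions'' is needed; the Gaussian truncation is precisely the mechanism that does this, and it is not equivalent to merely tuning the bandlimited kernel.
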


See~\Cref{thm:upper-gaussian} and Remark~\ref{remark:upper-gaussian} for a more formal statement of our algorithmic result, which holds for a wider range of separation $\Delta$ and accuracy parameter $\eps$. Our learning algorithm is provably correct for arbitrarily small $\Delta, \eps > 0$ (possibly with a longer runtime), whereas for most of the previous algorithms, there is a breakdown point $\Delta^*$ such that the algorithm is not known to work when $\Delta < \Delta^*$. Two exceptions are the algorithms of~\cite{MV10,BS10}, both of which allow an arbitrarily small separation but run in $e^{\Omega(k)}$ time. We also remark that the runtime of our algorithm scales as $\tilde O(k^3)\cdot f(d, \Delta, \eps)$, and is thus fixed-parameter tractable in parameters $d$, $\Delta$ and $\eps$.\footnote{While we focus on the uniform-weight case for simplicity, \Cref{thm:upper-gaussian-informal} can be easily extended to the setting where each weight is in $[1/(Ck), C/k]$ for some constant $C > 1$.}

We complement \Cref{thm:upper-gaussian-informal} with an almost-matching lower bound, showing that the $d/\sqrt{\log k}$ separation is necessary for efficient parameter learning in low dimensions.

\begin{theorem}[Lower bound, informal]\label{thm:lower-gaussian-informal}
    For $d = O\left(\frac{\log k}{\log\log k}\right)$ and $\Delta = o\left(\frac{d}{\sqrt{\log k}}\right)$, there are two mixtures of $k$ spherical Gaussians in $\R^d$ such that: (1) both have separation $\Delta$; (2) their means are not $(\Delta/2)$-close; and (3) the total variation (TV) distance between them is $k^{-\omega(1)}$.
\end{theorem}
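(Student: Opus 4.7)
My plan is to construct two uniform $k$-point measures $\muP, \muQ$ supported on disjoint $\Delta$-separated subsets of $\R^d$ whose empirical tensor moments agree up to a large order $M$, then show that $P := \muP * \N(0, I_d)$ and $Q := \muQ * \N(0, I_d)$ have $\dTV(P,Q) = k^{-\omega(1)}$ by splitting the Fourier-inversion integral at a carefully chosen cutoff. Disjointness of the two supports immediately gives property~(2): the mean configurations are not $(\Delta/2)$-close (in fact not $\Delta$-close) under any permutation; meanwhile $\Delta$-separation of each individual configuration is enforced at construction time.

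For the construction, since $k^{1/d} = O(\log k)$ in the regime $d = O(\log k / \log\log k)$, one can place a $\poly(k)$-sized $\Delta$-net of candidate atoms inside a ball of radius $R = \tilde O(\Delta \log k)$. The tensor moments of total degree $\le M$ span a $\binom{d+M}{d}$-dimensional space, and choosing $M = \tilde\Theta(d \log k)$ keeps this bound polynomial in $k$; a Siegel-lemma / pigeonhole dimension count over $\{-1, 0, +1\}$-labelings of the net then yields two disjoint $k$-subsets whose empirical measures agree on every monomial of total degree $\le M$.

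For the TV bound, write $\hat P(\xi) - \hat Q(\xi) = e^{-\|\xi\|^2/2}\bigl(\hat\muP - \hat\muQ\bigr)(\xi)$. Taylor-expanding and using the matched moments gives $|\hat\muP(\xi) - \hat\muQ(\xi)| \le 4(R\|\xi\|)^{M+1}/(M+1)!$ whenever $R\|\xi\| \lesssim M$, and the trivial bound $\le 2$ otherwise. Split the Plancherel integral at the cutoff $\tau = M/(e^2 R)$: in the inner region the moment bound at the boundary is pointwise $|\hat\muP - \hat\muQ|^2 \lesssim e^{-2M}/M$, so
\[
\int_{\|\xi\|\le\tau} e^{-\|\xi\|^2}\,|\hat\muP - \hat\muQ|^2\,d\xi \;\lesssim\; \frac{e^{-2M}}{M}\cdot\pi^{d/2},
\]
while the outer region gives $\int_{\|\xi\|>\tau} e^{-\|\xi\|^2}\cdot 4\,d\xi \le 4(2\pi)^{d/2}\,e^{-\tau^2/2}$. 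In our regime $e^{-2M} = k^{-\Theta(d)}$ and $\tau^2 = d^2/(e^4\Delta^2) = \omega(\log k)$ precisely when $\Delta = o(d/\sqrt{\log k})$, so both terms are $k^{-\omega(1)}$. Plancherel yields $\|P-Q\|_2 = k^{-\omega(1)}$, and Cauchy--Schwarz on a ball $B_{R'}$ of radius $R' = R + O(\sqrt{d\log k})$ (outside of which $P$ and $Q$ have $k^{-\omega(1)}$ mass), using $\sqrt{\Vol(B_{R'})} = k^{O(1)}$, upgrades this to $\dTV(P,Q) \le \|P-Q\|_1/2 = k^{-\omega(1)}$.

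The hard part is balancing the inner and outer estimates at the right cutoff $\tau$: raising it weakens the outer Gaussian-tail bound while lowering it weakens the inner moment bound, and the choice $\tau = \Theta(M/R) = \Theta(d/\Delta)$ simultaneously makes both super-polynomially small only in the parameter window the theorem identifies---specifically, $\tau^2 = \omega(\log k)$ is exactly $\Delta = o(d/\sqrt{\log k})$, and requiring $M = \omega(\log k)$ matched moments (so that $e^{-2M} = k^{-\omega(1)}$) forces the dimension restriction $d = O(\log k/\log\log k)$ via the constraint $\binom{d+M}{d} \le \poly(k)$.
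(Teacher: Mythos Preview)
Your Fourier--Plancherel analysis from the moment-matching stage onward is essentially the paper's proof: split $\|\tilde P-\tilde Q\|_2^2$ at a cutoff $\tau \asymp M/R$, use the matched moments inside and the Gaussian tail outside, then upgrade $\ell_2$ to $\ell_1$ by Cauchy--Schwarz on a ball containing most of the mass. The parameter balance $\tau^2 = \Theta(d^2/\Delta^2) = \omega(\log k)$ is exactly the paper's (their cutoff is $t/(4R)$ with $t=4C\ln k$, $R=\sqrt{C\ln k}/10$, so $\tau^2=\Theta(C\log k)$ with $C=\omega(1)$). One small slip: you write $M=\tilde\Theta(d\log k)$ in one place and $M=\omega(\log k)$ in another; the paper takes $M=\Theta(C\log k)$ with $C=\omega(1)$, independent of $d$.

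Where you genuinely diverge from the paper is the construction, and this step has a gap. You assert that ``a Siegel-lemma / pigeonhole dimension count over $\{-1,0,+1\}$-labelings of the net yields two disjoint $k$-subsets whose empirical measures agree on every monomial of degree $\le M$.'' But none of the natural readings of this gives what you need. A linear-algebra kernel argument on the moment map produces a nonzero \emph{real} vector, hence two \emph{non-uniform} mixtures---not uniform $k$-point measures. Siegel's lemma gives a small integer kernel vector, but not one supported on $\{-1,0,+1\}$. A pigeonhole over $\{0,1\}^N$-labelings (with lattice points so that moments are discrete) does give two disjoint equal-size subsets $S^+,S^-$ with exactly matching moments, but you have no control over $|S^+|$: you would need $N\le 2k$ so that $|S^+|\le k$, and then pad both mixtures with $k-|S^+|$ common well-separated points. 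You mention none of this, and the counting that makes the lattice pigeonhole go through is more delicate than the bare dimension bound $\binom{d+M}{d}\le\poly(k)$ you cite.

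The paper sidesteps all of this via Borsuk--Ulam: it fixes $N\le k$ points $\mu_1,\dots,\mu_N$ that are $3\Delta$-separated in $B(0,R)$, and for $x\in\S^{Nd-1}$ considers the two perturbed configurations $\mu_i\pm c(x)x_i$. The map $x\mapsto$ (first $M$ moment tensors of the perturbed configuration) is continuous from $\S^{Nd-1}$ to $\R^{\binom{d+M}{d}-1}$, so as soon as $Nd\ge\binom{d+M}{d}$, Borsuk--Ulam gives an $x$ with $f(x)=f(-x)$---two \emph{uniform} $N$-point sets with \emph{exactly} matching moments, automatically $\Delta$-separated and with at least one pair $2\Delta$ apart. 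The paper explicitly contrasts this with the pigeonhole route (which it attributes to Regev--Vijayaraghavan in the $N\gg k$ regime with only approximate matching). Your construction can likely be repaired along the lattice-pigeonhole lines above, but as written the step ``yields two disjoint $k$-subsets'' is not justified.
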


See~\Cref{thm:lower} for a more formal version of the lower bound. 

\Cref{thm:upper-gaussian-informal}~and~\Cref{thm:lower-gaussian-informal} together nearly settle the polynomial learnability of spherical Gaussian mixtures in the low-dimensional regime. Up to a doubly logarithmic factor, the ``critical separation'' where efficient learning becomes possible is $d/\sqrt{\log k}$. To the best of our knowledge, this was previously unknown even for $d = O(1)$.\footnote{An exception is the $d = 1$ case: A result of~\cite{Moi15} implies that $\Delta = \Omega(1/\sqrt{\log k})$ suffices (see Section~\ref{sec:related}), and a matching lower bound was given by~\cite{MV10}.}

See Figure~\ref{fig:landscape} below for a plot of our results in the context of prior work. The green regions cover the parameters $(\Delta, d)$ such that mixtures of $k$ spherical Gaussians in $d$ dimensions with separation $\Delta$ are learnable (up to $O(\Delta)$ error) using $\poly(k)$ samples.\footnote{In terms of the computational complexity, all the green regions (except a small portion of Region~III) admit efficient algorithms. The algorithm of~\cite{LL22} is  efficient when $\Delta = \Omega((\log k)^{1/2+c})$ for any $c > 0$ and thus almost covers Region~III. For Region~IV, \cite{RV17} gave an efficient algorithm only for $d = O(1)$, whereas \Cref{thm:upper-gaussian-informal} covers the entire Region~IV.} The red regions contain the parameters under which polynomial-sample
learning is provably impossible. 

\begin{figure}[t]
    \centering
    \includegraphics[scale=0.35]{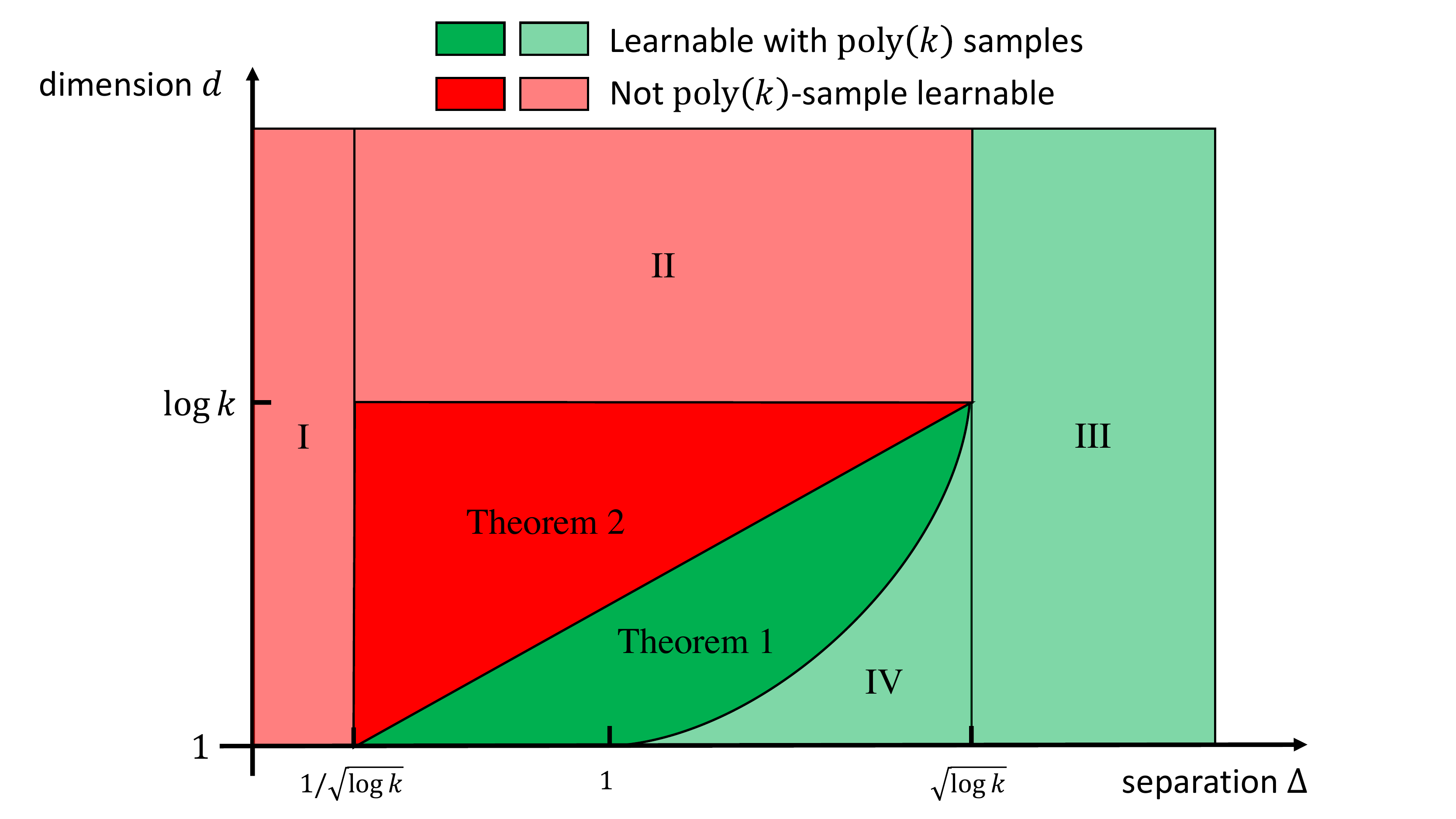}
    \caption{
    Region I is a direct corollary of~\cite[Proposition 15]{MV10}. Regions II, III, and IV are shown by~\cite[Theorems 1.2, 1.3, and 1.4]{RV17} respectively. The upper boundary of Region IV is the curve $\Delta = \sqrt{d}$. Theorems \ref{thm:upper-gaussian-informal}~and~\ref{thm:lower-gaussian-informal} settle the learnability in the remaining area, by proving that the line $\Delta = d/\sqrt{\log k}$ is the boundary between polynomial and super-polynomial sample complexities (up to a doubly-logarithmic factor).}
    \label{fig:landscape}
    
\end{figure}

The algorithm that underlies \Cref{thm:upper-gaussian-informal} can be easily extended beyond the spherical Gaussian case. The following more general result states that for any distribution $\D$ whose the Fourier transform does not decay too fast, we can efficiently learn the parameters of a mixture of $k$ translated copies of $\D$. In the following, let $\D_{\mu}$ denote the distribution of $X + \mu$ when $X$ is drawn from $\D$.

\begin{theorem}[Learning more general mixtures, informal]\label{thm:upper-general-informal}
    Let $P = \frac{1}{k}\sum_{j=1}^{k}\D_{\mu_j}$ for $\Delta$-separated $\mu_1, \ldots, \mu_k \in \R^d$. There is an algorithm that, given $\eps > 0$ and samples from $P$, runs in time
    \[
        \poly\left(k, \Delta/\eps, 1/\delta, \max_{\|\xi\|_2 \le M}\left|\Ex{X \sim \D}{e^{i\xi^{\top}X}}\right|^{-1}\right)
    \]
    for some $\delta = \delta(\D, \eps)$ and $M = M(k, d, \Delta, \eps)$, and outputs $\hat\mu_1, \ldots, \hat\mu_k$ that are w.h.p.\ $\eps$-close to the true parameters.
\end{theorem}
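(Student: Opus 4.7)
The plan is to run essentially the same Fourier-based algorithm used for the spherical Gaussian case (\Cref{thm:upper-gaussian-informal}), with only the ``deconvolution'' step modified. The key algebraic observation is that for $P = \frac{1}{k}\sum_{j=1}^k \D_{\mu_j}$, the characteristic function factors:
\[
    \widehat{P}(\xi) := \Ex{X \sim P}{e^{i\xi^\top X}} \;=\; \widehat{\D}(\xi) \cdot \widehat{\nu}(\xi), \qquad \widehat{\nu}(\xi) := \frac{1}{k}\sum_{j=1}^{k} e^{i \xi^\top \mu_j},
\]
where $\nu$ denotes the uniform measure on $\{\mu_1,\dots,\mu_k\}$. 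Thus, once we estimate $\widehat{P}(\xi)$ from samples and divide by $\widehat{\D}(\xi)$, we obtain estimates of the characteristic function of $\nu$, from which the means can be recovered by the same subroutine as in the Gaussian setting.

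Concretely, I would proceed as follows. First, given $n$ i.i.d.\ samples $X_1,\dots,X_n$ from $P$, form the empirical estimator $\widetilde{P}(\xi) := \frac{1}{n}\sum_{\ell=1}^n e^{i\xi^\top X_\ell}$. Since $|e^{i\xi^\top X_\ell}| = 1$, Hoeffding's inequality (applied to the real and imaginary parts) gives that $|\widetilde{P}(\xi) - \widehat{P}(\xi)| \le \gamma$ w.h.p.\ using $n = O(\log(1/\delta_{\mathrm{fail}})/\gamma^2)$ samples. Second, compute $\widetilde{\nu}(\xi) := \widetilde{P}(\xi) / \widehat{\D}(\xi)$, so that
\[
    \bigl|\widetilde{\nu}(\xi) - \widehat{\nu}(\xi)\bigr| \;\le\; \gamma \cdot |\widehat{\D}(\xi)|^{-1}.
\]
Third, feed these estimates into the mean-recovery subroutine from \Cref{thm:upper-gaussian-informal}: this subroutine only accesses $\widehat{\nu}(\xi)$ at a finite set of frequencies with $\|\xi\|_2 \le M(k,d,\Delta,\eps)$ and tolerates some per-frequency slack $\delta(\D,\eps)$. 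To meet this slack budget, set $\gamma \le \delta \cdot \min_{\|\xi\|_2 \le M} |\widehat{\D}(\xi)|$; substituting into the Hoeffding bound yields the sample/time complexity stated in the theorem, with the factor $\max_{\|\xi\|_2 \le M} |\widehat{\D}(\xi)|^{-1}$ arising exactly from the deconvolution step.

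The main obstacle, and really the only one, is bookkeeping: one must verify that the recovery subroutine used inside the proof of \Cref{thm:upper-gaussian-informal} is genuinely distribution-agnostic --- i.e.\ that at no point does it exploit properties of the Gaussian beyond the factorization above. In the spherical Gaussian case the deconvolution factor $|\widehat{\D}(\xi)|^{-1} = e^{\|\xi\|_2^2/2}$ is absorbed into the explicit analysis; the general statement simply exposes this factor as a parameter. A minor subtlety is that for general $\D$ the Fourier transform $\widehat{\D}(\xi)$ need not be real or even nonvanishing on $\{\|\xi\|_2 \le M\}$, but this only affects the argument through the quantity $\max_{\|\xi\|_2 \le M}|\widehat{\D}(\xi)|^{-1}$, which already appears in the theorem statement; if this quantity is infinite (i.e.\ $\widehat{\D}$ has a zero inside the ball), the algorithm becomes vacuous, consistent with the bound.
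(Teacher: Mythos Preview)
Your deconvolution plan is correct and matches the paper's approach: factor $\widehat{P}(\xi) = \widehat{\D}(\xi)\cdot\widehat{\nu}(\xi)$, estimate $\widehat{\nu}$ on $\{\|\xi\|_2\le M\}$ by dividing out $\widehat{\D}$, and pay the factor $\max_{\|\xi\|_2\le M}|\widehat{\D}(\xi)|^{-1}$ in sample complexity. Two points, however, deserve correction.

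First, you have misread what $\delta=\delta(\D,\eps)$ is. It is \emph{not} a per-frequency slack in the Fourier recovery; in the paper $\delta=\Pr_{X\sim\D}[\|X\|_2\le\eps/2]$, and it enters through the learning-to-testing reduction, not through the Fourier step. The paper does not treat ``recover $\mu_1,\dots,\mu_k$ from approximate access to $\widehat{\nu}$'' as a black box. Instead it (i) draws $\Theta((k/\delta)\log k)$ raw samples from $P$ as candidate means---so that w.h.p.\ each $\mu_j$ has a candidate within $\eps/2$, which is exactly where $1/\delta$ appears---and (ii) for each candidate $\mu^*$ runs a Fourier-based \emph{tester} that decides whether $\min_j\|\mu_j-\mu^*\|_2\le\eps/2$ or $\ge\eps$. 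The tester estimates $T_\mu=\Ex{\xi\sim\N(0,\sigma^2 I_d)}{A_\mu(\xi)\cdot\1{\|\xi\|_2\le M}}$ and thresholds it; the deconvolution factor enters here. Your write-up collapses these two stages and attributes the $1/\delta$ cost to the wrong one.

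Second, the subroutine you propose to reuse from \Cref{thm:upper-gaussian-informal} is not literally distribution-agnostic: that proof applies a Gaussian truncation (weighting samples by $e^{-\|X\|_2^2/2}$), which exploits that the pointwise product of Gaussians is Gaussian. For general $\D$ the paper simply drops this trick and analyzes the untruncated $A_\mu(\xi)=\sum_j e^{i\mu_j^\top\xi}$ directly; this yields a slightly larger $M$ than in the Gaussian-specific bound, but keeps the argument distribution-free. So your ``main obstacle'' is resolved not by checking the Gaussian proof line-by-line, but by reverting to the simpler pre-truncation analysis.
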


See \Cref{thm:upper-general-testing} and \Cref{corollary:upper-general} for a more formal statement of the runtime. \Cref{thm:upper-general-informal} applies to many well-known distribution families that are specified by either a ``location parameter'' or a ``scale parameter''. \Cref{table:general} gives a few examples of applying \Cref{thm:upper-general-informal} to mixtures of single-parameter univariate distributions; see  \Cref{sec:examples}~and~\Cref{sec:exponential} for more details. 

\begin{table}[h]
    \renewcommand{\arraystretch}{1.7}
    \centering
    \begin{tabular}{|c|c|c|c|}
    \hline
    Distribution & Parameter & Density Function  &  Runtime \\ \hline
     Cauchy & $\mu$ & $\frac{1}{\pi(1+(x - \mu)^2)}$ & $O(k^3)\cdot e^{O(\sqrt{\log k}/\Delta)}$ \\ \hline 
     Logistic & $\mu$ & $\frac{e^{-(x - \mu)}}{(1 + e^{-(x - \mu)})^2}$ & $ O(k^3)\cdot e^{O(\sqrt{\log k}/\Delta)}$ \\ \hline
     Laplace & $\mu$ & $\frac{1}{2}e^{-|x - \mu|}$ & $\tilde O(k^3/\Delta^5)$ \\ \hline
     Exponential & $\ln\lambda$ & $\lambda e^{-\lambda x}\cdot \1{x \ge 0}$ & $O(k^3)\cdot e^{O(\sqrt{\log k}/\Delta)}$ \\ \hline
    \end{tabular}
    \caption{Implication of \Cref{thm:upper-general-informal} for learning various families of mixtures of univariate distributions beyond Gaussians, assuming that the parameters of different components are $\Delta$-separated. The algorithm outputs $k$ parameters that are $O(\Delta)$-close to the true parameters.} 
    \label{table:general}
\end{table}

\paragraph{Limitations of our work.} The main limitation is that the positive results only apply to the regime that the dimension $d$ is logarithmic in the number of clusters, and that all clusters are translated copies of the same distribution. A concrete future direction would be to extend our results to learning mixtures of general Gaussians, even in one dimension.

\subsection{Proof Overview}\label{sec:proof-overview}

For simplicity, we focus on the one dimensional case that $P = \frac{1}{k}\sum_{j=1}^k\N(\mu_j, 1)$ for $\Delta$-separated means $\mu_1, \ldots, \mu_k \in \R$. We index the components such that $|\mu_1| \le |\mu_2| \le \cdots \le |\mu_k|$, and focus on the following testing problem: Given $\eps > 0$ and samples from $P$, determine whether $\mu_1 = 0$ or $\mu_1 \ge \eps$, assuming that one of them is true. Note that this testing problem is not harder than estimating $\mu_1, \ldots, \mu_k$ up to error $\epsilon / 3$---in the former case that $\mu_1 = 0$, one of the mean estimates would fall into $[-\epsilon/3, \epsilon/3]$, whereas all of them must be outside $(-2\epsilon/3, 2\epsilon/3)$ in the latter case. Conversely, as we will prove in Section~\ref{sec:learning-to-testing}, an algorithm for the testing version can be used for recovering the means as well.

\paragraph{Examine the Fourier spectrum.} We start by examining the Fourier transform of $P$, $(\F P)(\xi)
\coloneqq \Ex{X \sim P}{e^{i\xi X}}$, more commonly known in the literature as the \emph{characteristic function}. 
Since the Fourier transform of a Gaussian is still a Gaussian, and a translation in the time domain shifts the phase in the frequency domain, we have
{\small
\begin{equation}\label{eq:Amu-intro}
    \Ex{X \sim P}{e^{i\xi X}}
=   \frac{1}{k}\sum_{j=1}^{k}\Ex{X \sim \N(\mu_j, 1)}{e^{i\xi X}}
=   \frac{e^{-\xi^2/2}}{k}\sum_{j=1}^{k}e^{i\mu_j\xi}.
\end{equation}
}
We will focus on the quantity $A_{\mu}(\xi) \coloneqq \sum_{j=1}^{k}e^{i\mu_j\xi}$, which is the ``total phase'' over the $k$ components of $P$. \Cref{eq:Amu-intro} essentially states that $A_{\mu}(\xi)$ can be estimated by averaging $e^{i\xi X}$ over samples $X \sim P$.

The key observation is that each term $e^{i\mu_j\xi}$ of $A_{\mu}(\xi)$ behaves quite differently depending on the magnitude of $\mu_j$: If $\mu_j = 0$, $e^{i\mu_j\xi} = 1$ is a constant, whereas $e^{i\mu_j\xi}$ is a high-frequency wave when $|\mu_j|$ is large. This suggests that the two cases ($\mu_1 = 0$ and $|\mu_1| \ge \eps$) can be distinguished by estimating $A_{\mu}(\xi)$ at different frequencies. The cost of estimating $A_{\mu}(\xi)$, however, depends heavily on the frequency $\xi$ -- \Cref{eq:Amu-intro} together with a simple concentration bound shows that $O(k^2)\cdot e^{O(\xi^2)}$ samples are sufficient for estimating $A_{\mu}(\xi)$ up to a constant error. 

Therefore, the crux of this approach is to find the minimum $M > 0$ such that the two cases can be distinguished by estimating $A_{\mu}(\xi)$ over $\xi \in [-M, M]$. (This is known as the \emph{super-resolution} problem, which we discuss in \Cref{sec:related}.) The sample complexity of the testing problem can then be roughly bounded by $O(k^2)\cdot e^{O(M^2)}$. In the following, we explore different ways of picking $\xi$ from the range $[-M, M]$.

\paragraph{Choosing $\xi$ randomly.} Our overall strategy is to draw $\xi$ randomly from some distribution $\D_{\xi}$ over interval $[-M, M]$ and evaluate $\Ex{\xi\sim\D_{\xi}}{A_{\mu}(\xi)}$. We will argue that this expectation is very close to its first term $\Ex{\xi\sim\D_{\xi}}{e^{i\mu_1\xi}}$, which takes different values depending on whether $\mu_1 = 0$ or $|\mu_1| \ge \eps$. Then, $\D_{\xi}$ needs to be chosen such that: (1) There is a gap in the value of $\Ex{\xi\sim\D_{\xi}}{e^{i\mu_1\xi}}$ between the two cases; (2) $\left|\sum_{j=2}^{k}\Ex{\xi}{e^{i\mu_j\xi}}\right|$ is small enough for the gap in the first term to be easily identified.

As a warmup, let $\D_{\xi}$ be the uniform distribution over $[0, M]$. A simple calculation shows that the gap 
in the first term  $\Ex{\xi\sim\D_{\xi}}{e^{i\mu_1\xi}}$ between the two cases ($\mu_1 = 0$ or $|\mu_1| \ge \eps$) is lower bounded by $\Omega(\min\{M\eps, 1\})$,
whereas the contribution from the $j$-th component satisfies $\left|\Ex{\xi\sim\D_{\xi}}{e^{i\mu_j\xi}}\right| = O\left(\frac{1}{M|\mu_j|}\right)$. Furthermore, the $\Delta$-separation between the means implies $|\mu_j| = \Omega(\Delta j)$. Thus,
{\small
\[
    \left|\sum_{j=2}^{k}\Ex{\xi}{e^{i\mu_j\xi}}\right|
\lesssim \sum_{j=2}^{k}\frac{1}{M|\mu_j|}
\lesssim \frac{1}{M\Delta}\sum_{j=2}^{k}\frac{1}{j}
\lesssim \frac{\log k}{M\Delta}.
\]
}
The above is much smaller than $\min\{M\eps, 1\}$ if we set
$M \gtrsim \max\left\{\frac{\log k}{\Delta}, \sqrt{\frac{\log k}{\Delta\eps}}\right\}$. Unfortunately, even when $\Delta$ and $\eps$ are constants, we have $O(k^2)\cdot e^{O(M^2)} = k^{O(\log k)}$ and the resulting sample complexity is already super-polynomial in $k$.

\paragraph{A better choice of $\D_{\xi}$.} It turns out that choosing $\D_{\xi}$ to be a truncated Gaussian leads to a much lower sample complexity. For some $\sigma \ll M$, we draw $\xi \sim \N(0, \sigma^2)$ and then truncate it to $[-M, M]$. Without the truncation, the expectation of $e^{i\mu_j\xi}$ has a nice closed form:
\[
    \Ex{\xi \sim \N(0, \sigma^2)}{e^{i\mu_j\xi}}
=   e^{-\sigma^2\mu_j^2/2},
\]
which is exactly the Fourier weight of $\N(0, \sigma^2)$ at frequency $\mu_j$. Note that this decreases very fast as $|\mu_j|$ grows, compared to the previous rate of $\frac{1}{M|\mu_j|}$ when $\D_{\xi}$ is uniform.

It again follows from a simple calculation that: (1) Depending on whether $\mu_1 = 0$ or $|\mu_1| \ge \eps$, the gap between $\Ex{\xi}{e^{i\mu_1\xi}}$ is $\Omega(\min\{\sigma^2\eps^2, 1\})$; (2) The total contribution from $j = 2, 3, \ldots, k$ is upper bounded by
{\small
\[
    \left|\sum_{j=2}^{k}\Ex{\xi\sim\N(0, \sigma^2)}{e^{i\mu_j\xi}}\right|
=   \sum_{j=2}^{k}e^{-\sigma^2\mu_j^2/2}
\le    \sum_{j=2}^{k}e^{-\Omega(\sigma^2\Delta^2j^2)}
=   e^{-\Omega(\sigma^2\Delta^2)},
\]
}
where the second step applies $|\mu_j| = \Omega(\Delta j)$ and the last step holds assuming $\sigma = \Omega(1/\Delta)$. In addition, we need to deal with the error incurred by the truncation. The Gaussian tail bounds imply that $|\xi| \ge M$ happens with probability $e^{-\Omega(M^2/\sigma^2)}$. Since $\left|A_{\mu}(\xi)\right| \le k$ for any $\xi$, the noise from the truncation is at most $k\cdot e^{-\Omega(M^2/\sigma^2)}$ in magnitude.

It remains to choose $M, \sigma > 0$ that satisfy the two inequalities:
\[
    e^{-\Omega(\sigma^2\Delta^2)} \ll \min\{\sigma^2\eps^2, 1\}
\quad\text{and}\quad
    k\cdot e^{-\Omega(M^2/\sigma^2)} \ll \min\{\sigma^2\eps^2, 1\}.
\]
It suffices to choose $\sigma \lesssim \frac{1}{\Delta}\sqrt{\log\frac{\Delta}{\eps}}$ and $M \lesssim \frac{1}{\Delta}\sqrt{\log\frac{\Delta}{\eps}}\sqrt{\log\frac{k\Delta}{\eps}}$. For $\eps = \Omega(\Delta)$, the sample complexity $O(k^2)\cdot e^{O(M^2)}$ reduces to $k^{O(1/\Delta^2)}$, which is polynomial in $k$ for any fixed $\Delta$.

We note that in the above derivation, the assumption that each cluster of $P$ is a Gaussian is only applied in \Cref{eq:Amu-intro} (through the Fourier transform). For any distribution $\D$ over $\R$ and $P = \frac{1}{k}\sum_{j=1}^{k}\D_{\mu_j}$, the quantity $A_{\mu}(\xi) = \sum_{j=1}^{k}e^{i\mu_j\xi}$ can still be read off from the Fourier transform of $P$ at frequency $\xi$, except that the extra factor $e^{-\xi^2/2}$ becomes $(\F\D)(\xi)$, the Fourier transform of $\D$ at frequency $\xi$. This observation leads to our algorithm for learning a mixture of multiple translated copies of $\D$ (\Cref{thm:upper-general-informal}).

\paragraph{Gaussian truncation.} For the spherical Gaussian case, however, the above only gives an algorithm with runtime $k^{O(1/\Delta^2)}$, which becomes super-polynomial as soon as $\Delta = o(1)$, falling short of achieving the near-optimal separation of $\sqrt{\frac{\log\log k}{\log k}}$ in \Cref{thm:upper-gaussian-informal}.

We further improve our algorithm for the spherical Gaussian case using a ``Gaussian truncation'' technique. Intuitively, when deciding whether the mixture $P$ contains a cluster with mean zero, a sample $X \sim P$ is much more informative when $|X|$ is small. This motivates us to focus on samples with a small magnitude via a truncation.

We apply such a truncation in a soft way---weighting each sample $X$ with $e^{-X^2/2}$. This turns out to be sufficiently effective while keeping the entire analysis simple. Note that the weighting effectively multiplies the mixture $P$ with the standard Gaussian $\N(0, 1)$ pointwise. The result is still a (un-normalized) mixture of Gaussians: Up to a constant factor, the pointwise product is
$
    \frac{1}{k}\sum_{j=1}^{k}e^{-\mu_j^2/4}\cdot\N(\mu_j / 2, 1/2).
$
Consequently, if we repeat the analysis from previous paragraphs to this weighted mixture, the noise coming from components with large $|\mu_j|$ become even smaller. This eventually allows us to learn the mixture efficiently at separation $\Delta = \sqrt{\frac{\log\log k}{\log k}}$. We prove \Cref{thm:upper-gaussian-informal} via a natural extension of this analysis to the $d$-dimensional case.

\paragraph{Lower bound.} Our proof of \Cref{thm:lower-gaussian-informal} follows an approach similar to those in the previous lower bound proofs of~\cite{MV10,HP15,RV17}: First, construct two sets of well-separated points $\{\muP_1, \ldots, \muP_k\}$ and $\{\muQ_1, \ldots, \muQ_k\}$ with (approximately) matching lower-order moments, i.e., $\frac{1}{k}\sum_{j=1}^{k}\left[\muP_j\right]^{\otimes t} \approx \frac{1}{k}\sum_{j=1}^{k}\left[\muQ_j\right]^{\otimes t}$ for every small $t$. Then, show that these matching moments imply that after a convolution with Gaussian, the resulting mixtures are close in TV-distance and thus hard to distinguish.

In more detail, similar to the proof of~\cite{RV17}, we start by choosing $N$ arbitrary points from a small $\ell_2$ ball, such that they are $\Delta$-separated. The main difference is that~\cite{RV17} pick $N \gg k$, and show that among all the $\binom{N}{k}$ possible mixtures, there are at least two mixtures with similar moments via a pigeon-hole type argument. In contrast, we work in the $N \ll k$ regime, and obtain two point sets with matching lower-order moments by slightly perturbing these $N$ points in opposite directions. The existence of such a good perturbation is shown via a careful application of the Borsuk--Ulam Theorem, which is inspired by a similar application in~\cite{HP15}.

\subsection{Related Work} \label{sec:related}

\paragraph{Learning Gaussian mixture models.}
Most closely related to this paper is the line of work in the theoretical computer science literature on algorithms that provably learn mixtures of Gaussians. The pioneering work of \cite{Das99} showed that an $\Omega(\sqrt{d})$ separation between the means of different components is sufficient for the samples to be easily clustered. Several subsequent work (e.g., \cite{VW04,AK05,AM05,KSV05,DS07,BV08}) further generalized this result and the separation condition is relaxed to $\Delta = \Omega((\min\{k, d\})^{1/4}\cdot\poly(\log k, \log d))$ by~\cite{VW04}. We refer the readers to~\cite{RV17} for a more detailed survey of these results.

\cite{MV10}~and~\cite{BS10} gave the first algorithms for learning general mixtures of Gaussians (with different unknown covariances) under an arbitrarily small separation between different components. Both algorithms are based on the method of moments, and run in time polynomial in $d$ and the inverse of the minimum TV-distance between different components when $k = O(1)$. For large $k$, however, the runtime is exponential in $k$.

\cite{RV17} showed that $\poly(k, d)$ samples are sufficient to estimate the means of a mixture of spherical Gaussians, if the means are separated by $\Delta = \Omega(\sqrt{\log k})$. Unfortunately, their algorithm involves an exhaustive search and is computationally inefficient. Subsequently, three concurrent work~\cite{DKS18,HL18,KSS18} developed algorithms based on the Sum-of-Squares hierarchy that run in time $(dk)^{\poly(1/\gamma)}$ and learn mixtures with separation $\Delta = \Omega(k^{\gamma})$. In particular, setting $\gamma \approx \frac{\log\log k}{\log k}$ gives a quasi-polynomial time algorithm that achieves the $\sqrt{\log k}$ separation in~\cite{RV17}. A very recent work of~\cite{LL22} made further progress towards learning $\Omega(\sqrt{\log k})$-separated mixtures efficiently, by giving a polynomial-time algorithm that succeeds under separation $(\log k)^{1/2+c}$ for any constant $c > 0$.

\paragraph{Lower bounds.} On the lower bound side, \cite{MV10} first proved, by explicitly constructing a hard instance, that learning a mixture of $k$ Gaussians with separation $\Delta = \Omega(1/\sqrt{k})$ requires $e^{\Omega(k)}$ samples even in $d = 1$ dimension. \cite{HP15} focused on the regime where $k = O(1)$ and the recovery error $\eps$ goes to zero, and showed that $\Omega(\eps^{2-6k})$ samples are necessary. \cite{ABG+14} proved a lower bound that strengthens the result of~\cite{MV10}: Separation $\Delta = 1/\poly(k)$ is insufficient for the mixture to be learnable with polynomially many samples, even when the means of the Gaussians are drawn uniformly at random from $[0,1]^d$ for $d = O(\log k/\log\log k)$.

\cite{RV17} proved that in $d = \Omega(\log k)$ dimensions, no polynomial-sample algorithm exists if $\Delta = o(\sqrt{\log k})$. In particular, even when the means are chosen randomly, the resulting mixture is still hard to learn with high probability. This lower bound, together with their positive result for the $\Delta = \Omega(\sqrt{\log k})$ regime, shows that $\sqrt{\log k}$ is the critical separation threshold in high dimensions. 

If we restrict our attention to statistical query (SQ) algorithms, \cite{DKS17} proved that any SQ algorithm needs $d^{\Omega(k)}$ queries if $d \ge \poly(k)$ and non-spherical components are allowed.

\paragraph{Density estimation.} In contrast to the parameter estimation setting that we focus on, the \emph{density estimation} setting requires the learning algorithm to output (the representation of) a hypothesis distribution $\hat P$ that is close to the mixture $P$ in TV-distance. \cite{ABH+20} proved that the sample complexity of learning Gaussian mixtures up to a TV-distance of $\eps$ is $\tilde\Theta(d^2k/\eps^2)$ in general, and $\tilde\Theta(dk/\eps^2)$ if all components are axis-aligned. Unfortunately, their learning algorithms are not computationally efficient and run in time exponential in $k$. For the spherical case, \cite{DK20} gave a more efficient algorithm that runs in quasi-polynomial time $\poly(d)\cdot(k/\eps)^{O(\log^2k)}$. In the proper learning setting (i.e., $\hat P$ must be a Gaussian mixture), \cite{SOAJ14} gave an algorithm that takes $\poly(dk/\eps)$ samples and runs in time $\poly(d)\cdot(k/\eps)^{O(k^2)}$.

\paragraph{Super-resolution, and mixture learning using Fourier analysis.} Finally, we acknowledge that the Fourier approach that we explore in this work is fairly natural, and similar ideas have appeared in prior work. In particular, our approach is closely related to the super-resolution problem (\cite{Don92,CF13,CF14})---to recover $k$ unknown locations $\mu_1, \ldots, \mu_k \in \R^d$ from (exact or noisy) observations of the form $\sum_{j=1}^ke^{i\mu_j\top\xi}$ for $\|\xi\| \le M$, where $M$ is called the cutoff frequency, and the norm is typically $\ell_2$ or $\ell_\infty$. In comparison, our approach (outlined in Section~\ref{sec:proof-overview}) differs in two aspects: (1) We focus on a simpler testing version of the problem---to decide whether one of the locations is near a given reference point; (2) The Gaussian truncation significantly down-weights the observation coming from the points that are far from the reference point.

For the $d=1$ case, \cite{Moi15} showed that a cutoff frequency of $M = O(1/\Delta)$ suffices if the points are $\Delta$-separated. This implies an algorithm that efficiently learns spherical Gaussian mixtures in one dimension under separation $1/\sqrt{\log k}$, which is (nearly) recovered by our positive result. For the general case, \cite{HK15} gave an algorithm that provably works for ($\ell_2$) cutoff frequency $M = O((\sqrt{d\log k} + \log k)/\Delta)$. When applied to learning GMMs, however, their algorithm requires separation $\sqrt{d}$. It is conceivable that the algorithm of~\cite{HK15}, equipped with appropriate modifications and a tighter analysis (e.g., the approach of~\cite{CM21}), \emph{might} give a guarantee similar to ours, but no such analysis is explicit in the literature to the best of our knowledge. Moreover, our approach leads to an arguably simpler algorithm with an elementary analysis.

More recently, \cite{CN20} gave an algorithm that is similar to ours for learning mixtures of spherical Gaussians via deconvolving the mixture. However, their algorithm is only shown to work when $\Delta = \Omega(\sqrt{d})$. \cite{CLS20} studied the problem of learning mixtures of linear regressions (MLRs), which can be reduced to estimating the minimum variance in a mixture of zero-mean Gaussians. They solved this problem by estimating the {\em Fourier moments} --  the moments of the Fourier transform, and gave the first sub-exponential time algorithm for learning MLRs. \cite{CM21} studied learning mixtures of Airy disks, a problem that is motivated by the physics of diffraction. Their algorithm also proceeds by first estimating the Fourier transform of the mixture, and then dividing it pointwise by the Fourier spectrum of the ``base'' distribution.

\subsection{Organization of the Paper}
In \Cref{sec:learning-to-testing}, we formally state our main algorithmic result as well as our main technical theorem, which addresses a testing version of the problem. We then prove our upper bound via a simple reduction from parameter learning to testing. In \Cref{sec:upper-gaussian-testing}, we give a simple algorithm that solves this testing version by examining the Fourier transform of the mixture.

The pseudocode of our algorithms are presented in \Cref{sec:pseudocode}. We defer a few technical proofs to Appendices \ref{sec:auxiliary}~and~\ref{sec:gaussian-omitted}. We prove our lower bound result (\Cref{thm:lower-gaussian-informal}) in \Cref{sec:lower}. Finally, in \Cref{sec:upper-general}, we formally state and prove the guarantees for learning non-Gaussian mixtures and present a few applications of this result.

\section{From Learning to Testing}\label{sec:learning-to-testing}
We first state our main positive result -- the formal version of \Cref{thm:upper-gaussian-informal}.

\begin{theorem}\label{thm:upper-gaussian}
    Let $P = \frac{1}{k}\sum_{j=1}^{k}\N(\mu_j, I_d)$ be a uniform mixture of $k \ge 2$ spherical Gaussians with $\Delta$-separated means in $\R^d$ and $\eps < \min\{\Delta / 100, \Delta / (32\sqrt{\min\{d, \ln k\}})\}$. There is an algorithm (Algorithm~\ref{algo:gaussian-learn}) that, given samples from $P$, outputs $k$ vectors that are $\eps$-close to $\mu_1, \ldots, \mu_k$ with high probability. The runtime (and thus the sample complexity) of the algorithm is upper bounded by
    \[
        O((\Delta/\eps)^4k^3\log^2k)\cdot \max\{(d/\eps)^{O(d)}, d^{O(d)}\} \cdot e^{O(M^2)},
    \]
    where $M^2 \lesssim \frac{1}{\Delta^2}(\min\{d, \log k\} + \log\frac{\Delta}{\eps}) \cdot (\min\{d + \log k, d\log(2 + \frac{d}{\Delta^2})\} + \log\frac{\Delta}{\eps})$.
\end{theorem}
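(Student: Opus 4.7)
I would prove the theorem by reducing parameter learning to a point-testing subroutine, then solving the test using the Gaussian-truncated Fourier approach sketched in Section~\ref{sec:proof-overview}. The tester, given a candidate $x \in \R^d$ and samples from $P$, decides whether some $\mu_j$ lies within $\eps/2$ of $x$ or all $\mu_j$'s are at distance at least $\eps$ from $x$.

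\textbf{From testing to learning.} First I would show that, with high probability, every $\mu_j$ lies within an $\ell_2$-ball of radius $R = O(\sqrt{d\log k})$ around an easily-computed center (e.g., a suitably trimmed empirical mean), since a mean outside this ball would produce anomalous samples. Covering this ball with an $(\eps/4)$-net of size $(R/\eps)^{O(d)}$, I would run the tester at every net point. The accepted net points, by $\Delta$-separation and $\eps \ll \Delta$, form exactly $k$ well-separated clusters of diameter $O(\eps)$, each close to a distinct $\mu_j$; taking one representative per cluster yields the $\eps$-close recovery. This accounts for the $\max\{(d/\eps)^{O(d)},d^{O(d)}\}$ factor and for the polynomial $k$-factor in the stated runtime.

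\textbf{Tester via Gaussian truncation plus Fourier averaging.} After translating so that the candidate point is the origin, the task is to decide whether some $\norm{\mu_j}_2 \le \eps/2$. I would weight each sample $X\sim P$ by $e^{-\norm{X}_2^2/2}$: since the density $\N(\mu_j,I_d)(x)$ multiplied pointwise by $e^{-\norm{x}_2^2/2}$ equals a constant times $e^{-\norm{\mu_j}_2^2/4}\cdot \N(\mu_j/2, I_d/2)(x)$, this re-weighting introduces an exponential suppression $e^{-\norm{\mu_j}_2^2/4}$ on clusters far from the origin. A direct computation then gives
\[
    \Ex{X\sim P}{e^{-\norm{X}_2^2/2}\,e^{i\xi^\top X}}
    \;\propto\; \frac{e^{-\norm{\xi}_2^2/4}}{k}\sum_{j=1}^{k} e^{-\norm{\mu_j}_2^2/4}\,e^{i\xi^\top \mu_j/2}.
\]
I would then draw $\xi$ from $\N(0,\sigma^2 I_d)$ truncated to $\norm{\xi}_2\le M$, use the empirical average of the above quantity (over both samples and $\xi$) as a test statistic, and apply a Hoeffding-type bound (using $|e^{-\norm{X}_2^2/2}e^{i\xi^\top X}|\le 1$) to conclude that $O(k^2)\cdot e^{O(M^2)}$ samples suffice for the additive estimation error to fall below the signal gap.

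\textbf{Parameter choice and main obstacle.} I would then pick $\sigma$ and $M$ to satisfy three constraints: (i)~a \emph{signal gap} of $\Omega(\min\{\sigma^2\eps^2,1\})$ between the two cases in the averaged expectation; (ii)~a total contribution $\sum_{j\ge 2} e^{-\norm{\mu_j}_2^2(1/4+\sigma^2/8)}$ from the other clusters of at most half this gap; and (iii)~a truncation error $k\cdot e^{-\Omega(M^2/\sigma^2)}$ from restricting $\xi$ to the ball of radius $M$, again below the gap. The main obstacle is (ii) in dimension $d>1$: the trivial one-dimensional bound $\norm{\mu_j}_2\ge \Delta j$ no longer holds, so I would partition $\{\mu_j\}_{j\ge 2}$ into dyadic $\ell_2$-shells and apply a volumetric packing inequality for $\Delta$-separated points in each shell. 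The two competing upper bounds on the shell count (the trivial $k$ and the volumetric $(r/\Delta)^{O(d)}$) explain the two $\min$'s in the theorem's expression for $M^2$: constraint~(ii) forces $\sigma^2\Delta^2 \gtrsim \min\{d,\log k\}+\log(\Delta/\eps)$, after which constraint~(iii) forces $M^2/\sigma^2\gtrsim \min\{d+\log k,\,d\log(2+d/\Delta^2)\}+\log(\Delta/\eps)$, multiplying out to the stated $M^2$ bound. Once this shell summation is under control, the remaining steps---Hoeffding concentration, covering, and clustering of accepted net points---are routine.
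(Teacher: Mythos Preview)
Your tester and parameter-choice analysis are essentially the paper's proof of Theorem~\ref{thm:upper-gaussian-testing}: the Gaussian truncation, the truncated-Gaussian averaging over $\xi$, and the shell/packing argument controlling $\sum_{j\ge 2}e^{-(\sigma^2/2+1)\|\mu_j\|_2^2/4}$ all match. (Minor slip: the unbiased estimator for $T_\mu$ is $2^{d/2}k\cdot e^{\|\xi\|_2^2/4}\cdot e^{-\|X\|_2^2/2}e^{i\xi^\top X}$, not just $e^{-\|X\|_2^2/2}e^{i\xi^\top X}$, so Hoeffding contributes $e^{O(d+M^2)}$ rather than the $e^{O(M^2)}$ you wrote.)

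The genuine gap is in your reduction from testing to learning. The claim that ``every $\mu_j$ lies within an $\ell_2$-ball of radius $R=O(\sqrt{d\log k})$ around an easily-computed center'' is false in general: the means are only assumed $\Delta$-separated and can be spread over an arbitrarily large region (e.g.\ $\mu_j = j\Delta e_1$ gives diameter $\Theta(k\Delta)$). No trimmed mean or other global statistic can confine them to a single ball of that radius, so the $(\eps/4)$-net you describe does not exist at the claimed size---and even granting a radius-$R$ ball, its net would have cardinality $(\sqrt{d\log k}/\eps)^{O(d)}$, not $\max\{(d/\eps)^{O(d)},d^{O(d)}\}$. The paper sidesteps localization entirely: it draws $N$ samples from $P$ and uses \emph{the samples themselves} as the candidate set. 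Since a draw from $\N(\mu_j,I_d)$ lands in $B(\mu_j,\eps/2)$ with probability at least
\[
p \;=\; \frac{1}{2^{d/2}k\,\Gamma(d/2+1)}\max_{0\le r\le \eps/2}r^d e^{-r^2/2},
\]
taking $N=(2\ln k)/p = O(k\log k)\cdot\max\{(d/\eps)^{O(d)}, d^{O(d)}\}$ samples guarantees, by a union bound, that every $\mu_j$ has some sample within $\eps/2$. Running the tester (boosted via $\Theta(\log N)$ repetitions) at each sample and clustering the accepted samples by the threshold $\|X_i-X_{i'}\|_2\le 2\eps$ then yields the $k$ means. Your approach could be repaired by placing a local net around each of $O(k\log k)$ samples instead of around a single center, but the paper's version is both simpler and produces the stated factor directly.
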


\begin{remark}\label{remark:upper-gaussian}
    When $d = O\left(\frac{\log k}{\log\log k}\right)$ and $\Delta/\eps = \Theta(\sqrt{d})$, the runtime can be simplified into
    \[
        \poly(k)\cdot\exp\left(O\left(\frac{d}{\Delta^2}\min\left\{\log k, d\log\left(2 + \frac{d}{\Delta^2}\right)\right\}\right)\right),
    \]
    which is $\poly(k)$ if $\Delta = \Omega\left(\frac{d}{\sqrt{\log k}}\cdot \sqrt{\log \frac{\log k}{d}}\right)$. When $d = o(\log k)$, this condition is strictly looser than both $\Delta = \Omega(\sqrt{d})$ and $\Delta = \Omega(\sqrt{\log k})$.
\end{remark}

We prove \Cref{thm:upper-gaussian} by reducing the parameter learning problem into the following testing version: Given samples from mixture $P$, determine whether $P$ contains a cluster with a mean that is close to a given guess $\mu^* \in \R^d$. Formally, we prove the following theorem:

\begin{theorem}\label{thm:upper-gaussian-testing}
    Let $P = \frac{1}{k}\sum_{j=1}^{k}\N(\mu_j, I_d)$ be a uniform mixture of $k \ge 2$ spherical Gaussians with $\Delta$-separated means in $\R^d$. Let $\eps < \min\{\Delta / 100, \Delta / (32\sqrt{\min\{d, \ln k\}})\}$ and $\mu^* \in \R^d$. There is an algorithm (Algorithm~\ref{algo:gaussian-test}) that, given samples from $P$, either ``accepts'' or ``rejects'', such that it:
    \begin{itemize}
        \item Accepts with probability $\ge 2/3$ if $\min_{j \in [k]}\|\mu_j - \mu^*\|_2 \le \eps / 2$.
        \item Rejects with probability $\ge 2/3$ if $\min_{j \in [k]}\|\mu_j - \mu^*\|_2 \ge \eps$.
    \end{itemize}
    The runtime (and thus the sample complexity) of the algorithm is upper bounded by $O(k^2(\Delta/\eps)^4)\cdot e^{O(d + M^2)}$, where $M^2 \lesssim \frac{1}{\Delta^2}(\min\{d, \log k\} + \log\frac{\Delta}{\eps}) \cdot (\min\{d + \log k, d\log(2 + \frac{d}{\Delta^2})\} + \log\frac{\Delta}{\eps})$.
\end{theorem}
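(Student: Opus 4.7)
Shift all samples by $-\mu^*$ so that we may assume $\mu^*=0$; by reindexing, $\|\mu_1\|=\min_j\|\mu_j\|$, which is at most $\eps/2$ in the accept case and at least $\eps$ in the reject case. The algorithm (\Cref{algo:gaussian-test}) will form an estimator whose expectation separates the two cases, and the bulk of the proof is to show that this expectation admits a large enough gap while having small enough variance so that $O(k^2(\Delta/\eps)^4)\cdot e^{O(d+M^2)}$ samples suffice.

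Following Section~\ref{sec:proof-overview}, the estimator combines a Gaussian ``soft truncation'' of the samples (weighting each $X$ by $e^{-\|X\|^2/2}$) with a random frequency $\xi\sim\N(0,\sigma^2 I_d)$ truncated to $\|\xi\|_2\le M$. A direct Gaussian-integral computation gives
\[
    \Ex{X\sim\N(\mu,I_d)}{e^{-\|X\|^2/2}e^{i\xi^\top X}}
    \;=\; 2^{-d/2}\,e^{-\|\mu\|^2/4-\|\xi\|^2/4+i\mu^\top\xi/2},
\]
so multiplying by the deterministic rescaling $k\cdot 2^{d/2}e^{\|\xi\|^2/4}$ turns the sample average into an unbiased estimator of $B_\mu(\xi):=\sum_{j=1}^{k} e^{-\|\mu_j\|^2/4}e^{i\mu_j^\top\xi/2}$. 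Averaging further over $\xi$ and using $\Ex{\xi\sim\N(0,\sigma^2 I_d)}{e^{i\mu^\top\xi/2}}=e^{-\sigma^2\|\mu\|^2/8}$ yields
\[
    \Ex{\xi}{B_\mu(\xi)} \;=\; \sum_{j=1}^{k} e^{-\alpha\|\mu_j\|^2},
    \qquad
    \alpha \;:=\; \tfrac{1}{4}+\tfrac{\sigma^2}{8},
\]
where the $1/4$ is the gift of the Gaussian truncation of samples and the $\sigma^2/8$ is the contribution of the random frequency; crucially, $\alpha$ can be made arbitrarily large by increasing $\sigma$, unlike in the untruncated warmup.

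To separate the two cases, split $\Ex{\xi}{B_\mu(\xi)} = e^{-\alpha\|\mu_1\|^2} + \sum_{j\ge 2}e^{-\alpha\|\mu_j\|^2}$. The first term equals $1$ when $\mu_1=0$ and is at most $1-\Omega(\min\{\alpha\eps^2,1\})$ when $\|\mu_1\|\ge\eps$. For the tail, $\Delta$-separation of $\mu_1,\dots,\mu_k$ yields the volumetric packing bound $|\{j:\|\mu_j\|\le r\}|\le(1+2r/\Delta)^d$, so
\[
    \sum_{j\ge 2}e^{-\alpha\|\mu_j\|^2} \;\le\; \int_0^\infty 2\alpha r\,e^{-\alpha r^2}(1+2r/\Delta)^d\,dr,
\]
which is $e^{-\Omega(\alpha\Delta^2)}$ whenever $\alpha\Delta^2$ dominates $\min\{d+\log k,\,d\log(2+d/(\alpha\Delta^2))\}+\log(\Delta/\eps)$. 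The truncation of $\xi$ to $\|\xi\|\le M$ introduces an additional additive error at most $k\cdot e^{-\Omega(M^2/\sigma^2)}$ after the $e^{M^2/4}$ rescaling.

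Balancing the three quantities (gap, packing tail, truncation tail) forces $\alpha\Delta^2$ and $M^2/\sigma^2$ to take the values appearing in the theorem's bound on $M^2$. Because each summand of the unbiased sample estimator has magnitude at most $k\cdot 2^{d/2}e^{M^2/4}$, a Hoeffding concentration bound then shows that $O(k^2(\Delta/\eps)^4)\cdot e^{O(d+M^2)}$ samples suffice to estimate $\Ex{\xi}{B_\mu(\xi)}$ to within half the gap, completing the test. I expect the volumetric packing step to be the main obstacle: in $d\ge 2$ dimensions one cannot simply order the $\mu_j$'s by their norms as in the one-dimensional warmup of Section~\ref{sec:proof-overview}, and it is precisely handling the dimension-dependent growth of $|\{j:\|\mu_j\|\le r\}|$ that produces the $\min\{d,\log k\}$-type factor inside $M^2$.
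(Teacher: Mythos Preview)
Your overall architecture matches the paper exactly: reduce to $\mu^*=0$, apply the Gaussian soft truncation so that the rescaled Fourier sample averages to $B_\mu(\xi)=\sum_j e^{-\|\mu_j\|^2/4}e^{i\mu_j^\top\xi/2}$, average over a truncated Gaussian $\xi$, and compare $e^{-\alpha\|\mu_1\|^2}$ in the two cases against the packing tail and the $\xi$-truncation tail. The Gaussian-integral identity, the value $\alpha=\tfrac14+\tfrac{\sigma^2}{8}$, and the Hoeffding step are all as in the paper.

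There is, however, one genuine gap, and it is precisely the place where the stated $M^2$ bound gets its second $\min$. You bound the $\xi$-truncation error by $k\cdot e^{-\Omega(M^2/\sigma^2)}$, using only $|B_\mu(\xi)|\le k$. But the whole point of the sample-side Gaussian truncation is that it also shrinks $|B_\mu(\xi)|$: since each summand has modulus $e^{-\|\mu_j\|^2/4}$, you actually have $|B_\mu(\xi)|\le S_2:=\sum_j e^{-\|\mu_j\|^2/4}$ uniformly in $\xi$, and hence the truncation error is at most $S_2\cdot e^{-\Omega(M^2/\sigma^2)}$. The paper then bounds $S_2$ by the same volumetric packing argument you sketched (their Claim~\ref{claim:S2-bound}), obtaining $S_2\lesssim \min\{k,\,(Cd/\Delta^2)^{d/2+1}\}$, i.e.\ $\ln S_2\lesssim \min\{\log k,\,d\log(2+d/\Delta^2)\}$. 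This is exactly where the factor $\min\{d+\log k,\,d\log(2+d/\Delta^2)\}$ in the theorem's $M^2$ bound comes from. With your cruder $k$ in place of $S_2$, balancing gives only $M^2\lesssim\frac{1}{\Delta^2}(\min\{d,\log k\}+\log\tfrac{\Delta}{\eps})(d+\log k+\log\tfrac{\Delta}{\eps})$, which loses the $d\log(2+d/\Delta^2)$ branch and hence the headline separation $\Delta\approx d/\sqrt{\log k}$ of Theorem~\ref{thm:upper-gaussian-informal} (you would be stuck at $\Delta\gtrsim\sqrt{d}$).

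Relatedly, you have placed the $\min\{d+\log k,\,d\log(2+d/(\alpha\Delta^2))\}$ expression in the wrong error term. The packing tail $\sum_{j\ge 2}e^{-\alpha\|\mu_j\|^2}$ only needs $\alpha\Delta^2\gtrsim \min\{d,\log k\}+\log(\Delta/\eps)$ (this fixes $\sigma^2$ and gives the \emph{first} factor in the $M^2$ bound); the $\min\{d+\log k,\,d\log(2+d/\Delta^2)\}$ factor enters through $\ln S_2$ in the truncation term and, together with the requirement $M^2/\sigma^2\ge 5d$, gives the \emph{second} factor. Once you replace $k$ by $S_2$ and bound $S_2$ via packing, the rest of your outline goes through as written.
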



We first prove our main theorem assuming \Cref{thm:upper-gaussian-testing}, essentially via an exhaustive search over the means. (We prove \Cref{thm:upper-gaussian-testing} in \Cref{sec:upper-gaussian-testing}.)

\begin{proof}[Proof of \Cref{thm:upper-gaussian}] The proof proceeds by first generating many candidates for the means $\mu_1, \ldots, \mu_k$, and then verifying them using the tester from \Cref{thm:upper-gaussian-testing}.

\paragraph{Finding candidates.} We first show that if we draw sufficiently many samples from $P$, for every mean vector $\mu_j$ there is a sample that is $O(\eps)$-close to it. Indeed, for any $j \in [k]$, the probability that a sample from $P$ is $(\eps/2)$-close to $\mu_j$ is at least
    \begin{align*}
        \frac{1}{k}\cdot\pr{X\sim\N(\mu_j, I_d)}{\|X - \mu_j\|_2 \le \eps/2}
    \ge &~\frac{1}{k}\cdot\max_{0 \le r \le \eps/2}\left[\Vol_d(r)\cdot\frac{1}{(2\pi)^{d/2}}e^{-r^2/2}\right]\\
    =   &~\frac{1}{k}\cdot\max_{0 \le r \le \eps/2}\left[\frac{\pi^{d/2}r^d}{\Gamma(d/2+1)}\cdot\frac{e^{-r^2/2}}{(2\pi)^{d/2}}\right] \tag{Lemma~\ref{lemma:ball-volume}}\\
    =   &~\frac{1}{2^{d/2}k\cdot\Gamma(d/2+1)}\cdot\max_{0 \le r \le \eps/2}r^de^{-r^2/2} \eqqcolon p.
    \end{align*}
    Our algorithm first draws $N \coloneqq (2\ln k)/p$ points $X_1, X_2, \ldots, X_N$ from $P$. The probability that none of them is $(\eps/2)$-close to $\mu_j$ is at most $(1 - p)^{N} \le 1/k^2$. By a union bound, w.h.p.\ it holds for every $j\in [k]$ that some $X_i$ is $(\eps/2)$-close to $\mu_j$.
    
    \paragraph{Running the tester.} Then, we run the testing algorithm from \Cref{thm:upper-gaussian-testing} with $\mu^*$ set to each of the $N$ candidates $X_1, X_2, \ldots, X_N$. For each point $X_i$, we repeat the tester $\Theta(\log N)$ times (with new samples each time) and take the majority vote of the decisions. By a Chernoff bound, w.h.p.\ the decisions are simultaneously correct for all the $N$ points, i.e., for each $X_i$ the majority vote ``accepts'' if $\min_{j\in[k]}\|\mu_j - X_i\|_2 \le \eps / 2$ and ``rejects'' if this minimum distance is at least $\eps$.

    \paragraph{Clustering.} We focus on $C \coloneqq \{i \in [N]: \text{majority vote accepts }X_i\}$ and define $C_j \coloneqq \{i \in C: \|X_i - \mu_j\|_2 \le \eps\}$ for $j \in [k]$. The correctness of the tester implies that for every $i \in C$, $X_i$ must be $\eps$-close to some mean vector $\mu_j$, so $\bigcup_{j \in [k]}C_j = C$. Furthermore, we claim that $\{C_j\}_{j \in [k]}$ are pairwise disjoint and thus constitute a partition of $C$. Suppose towards a contradiction that some $i \in C_j \cap C_{j'}$. Then, since $\eps < \Delta/100$, we get $\|\mu_j - \mu_{j'}\|_2 \le \|\mu_j - X_i\|_2 + \|\mu_{j'} - X_i\|_2 \le 2\eps < \Delta$, a contradiction. We also claim that every $C_j$ is non-empty. This is because we proved earlier that at least one of the $X_i$'s is $\eps/2$-close to $\mu_j$, and the majority vote must accept such an $X_i$.
    
    It remains to show that we can easily identify such a partition $\{C_j\}$ without knowing $\mu_1, \ldots, \mu_k$. We note that for any $i \in C_j$ and $i' \in C_{j'}$. If $j = j'$, we have
    \[\|X_i - X_{i'}\|_2 \le \|X_i - \mu_j\|_2 + \|X_{i'} - \mu_j\|_2 \le 2\eps.\]
    If $j \ne j'$, the condition that $\eps < \Delta/100$ gives
    \[\|X_i - X_{i'}\|_2 \ge \|\mu_j - \mu_{j'}\| - \|X_i - \mu_j\|_2 - \|X_{i'} - \mu_{j'}\|_2 \ge \Delta - 2\eps > 2\eps.\]
    Therefore, if we cluster $C$ such that $i, i' \in C$ belong to the same cluster if and only if $\|X_i - X_{i'}\|_2 \le 2\eps$, we obtain the exact partition $\{C_j\}$. We can then recover $\mu_1, \ldots, \mu_k$ up to an error of $\eps$ by outputting an arbitrary element in each of the $k$ clusters.
    
    \paragraph{Runtime.} To upper bound the runtime, we note that
    \begin{align*}
        N
    &=   (2\ln k) / p
    =   O(k\log k) \cdot 2^{d/2}\Gamma(d/2+1)\cdot \min_{0 \le r \le \eps/2}e^{r^2/2}r^{-d}\\
    &=   O(k\log k)\cdot d^{O(d)}\cdot\begin{cases} (2/\eps)^de^{\eps^2/8}, & \eps \le 2\sqrt{d},\\
    (e/d)^{d/2}, & \eps > 2\sqrt{d}
    \end{cases} \tag{Lemma~\ref{lemma:ball-volume}}\\
    &=  O(k \log k)\cdot\max\{(d/\eps)^{O(d)}, d^{O(d)}\}.
    \end{align*}
    Therefore, the runtime of the algorithm is upper bounded by
    \[
        O(N\log N)
    =   O(k\log^2k) \cdot \max\left\{(d/\eps)^{O(d)}, d^{O(d)}\right\}
    \]
    times that of the tester from \Cref{thm:upper-gaussian-testing}. This finishes the proof.
\end{proof}

\section{Solve the Testing Problem using Fourier Transform}\label{sec:upper-gaussian-testing}
Now we solve the testing problem and prove \Cref{thm:upper-gaussian-testing}. We assume without loss of generality that $\mu^* = 0$, since we can reduce to this case by subtracting $\mu^*$ from each sample from $P$. We also re-index the unknown means $\mu_1, \ldots, \mu_k$ so that $\|\mu_j\|_2$ is non-decreasing in $j$. The testing problem is then equivalent to deciding whether $\|\mu_1\|_2 \le \epsilon/2$ or $\|\mu_1\|_2 \ge \eps$.

The following simple lemma, which we prove in \Cref{sec:gaussian-omitted}, gives the Fourier transform of the mixture $P$ when a ``Gaussian truncation'' of $e^{-\|x\|_2^2/2}$ is applied.
\begin{lemma}\label{lemma:Fourier-d-dim}
    For $P = \frac{1}{k}\sum_{j=1}^{k}\N(\mu_j, I_d)$ and any $\xi \in \R^d$,
    \[
        \Ex{X\sim P}{e^{-\|X\|_2^2/2}\cdot e^{i(\xi^{\top}X)}}
    =   \frac{e^{-\|\xi\|_2^2/4}}{2^{d/2}k}\sum_{j=1}^{k}e^{-\|\mu_j\|_2^2/4} \cdot e^{i(\mu_j^{\top}\xi/2)}
    =   \frac{e^{-\|\xi\|_2^2/4}}{2^{d/2}k} \cdot A_{\mu}(\xi),
    \]
    where we define $A_{\mu}(\xi) \coloneqq \sum_{j=1}^{k}e^{-\|\mu_j\|_2^2/4} \cdot e^{i(\mu_j^{\top}\xi/2)}$.
\end{lemma}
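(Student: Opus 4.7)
The plan is a direct integral computation. By linearity of expectation over the mixture components, it suffices to compute, for a single Gaussian component,
\[
I_j(\xi) \;\coloneqq\; \Ex{X \sim \N(\mu_j, I_d)}{e^{-\|X\|_2^2/2}\cdot e^{i(\xi^{\top}X)}}
\;=\; \frac{1}{(2\pi)^{d/2}}\int_{\R^d} e^{-\|x\|_2^2/2}\cdot e^{-\|x-\mu_j\|_2^2/2}\cdot e^{i\xi^{\top}x}\,dx,
\]
and then average $\frac{1}{k}\sum_j I_j(\xi)$.

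The next step is to collect everything in the exponent and complete the square. Expanding $\|x-\mu_j\|_2^2 = \|x\|_2^2 - 2\mu_j^{\top}x + \|\mu_j\|_2^2$, the exponent becomes
\[
-\|x\|_2^2 + (\mu_j + i\xi)^{\top}x - \tfrac{1}{2}\|\mu_j\|_2^2
\;=\; -\bigl\|x - \tfrac{1}{2}(\mu_j + i\xi)\bigr\|_{\mathrm{sym}}^2 + \tfrac{1}{4}(\mu_j + i\xi)^{\top}(\mu_j + i\xi) - \tfrac{1}{2}\|\mu_j\|_2^2,
\]
where the subscript ``sym'' reminds us that we are using the (complex-extended) symmetric bilinear form, not the Hermitian inner product. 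Expanding the quadratic in $\mu_j + i\xi$ gives $\tfrac{1}{4}(\|\mu_j\|_2^2 - \|\xi\|_2^2 + 2i\mu_j^{\top}\xi)$, and combining with $-\tfrac{1}{2}\|\mu_j\|_2^2$ produces the clean constant $-\tfrac{1}{4}\|\mu_j\|_2^2 - \tfrac{1}{4}\|\xi\|_2^2 + \tfrac{i}{2}\mu_j^{\top}\xi$. Pulling this constant out of the integral leaves a shifted Gaussian integral.

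Finally, I would evaluate the remaining integral $\int_{\R^d} e^{-\|x - c\|_{\mathrm{sym}}^2}\,dx$ with complex shift $c = (\mu_j + i\xi)/2$. The cleanest justification is contour deformation coordinate by coordinate (shifting each $\R$-contour by a purely imaginary amount, which is valid because $e^{-z^2}$ is entire and decays super-polynomially in real directions), reducing it to the real Gaussian integral $\int_{\R^d} e^{-\|y\|_2^2}\,dy = \pi^{d/2}$. Dividing by $(2\pi)^{d/2}$ yields the prefactor $1/2^{d/2}$, so
\[
I_j(\xi) \;=\; \frac{1}{2^{d/2}}\,e^{-\|\xi\|_2^2/4}\cdot e^{-\|\mu_j\|_2^2/4}\cdot e^{i(\mu_j^{\top}\xi)/2},
\]
and summing over $j$ and dividing by $k$ gives exactly the claimed expression with $A_{\mu}(\xi)$.

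The only mildly delicate point is the complex contour shift; everything else is a mechanical calculation. If one prefers to avoid contour deformation, an equivalent approach is to verify the identity for $\xi = 0$ (a real Gaussian integral) and then note that both sides, as functions of $\xi \in \R^d$, are the Fourier transforms of the same $L^1$ function $x \mapsto (1/k)\sum_j (2\pi)^{-d/2} e^{-\|x\|_2^2/2} e^{-\|x-\mu_j\|_2^2/2}$, whose explicit Gaussian form (obtained by the same completion of the square with real $c = \mu_j/2$) has a known Fourier transform. This avoids complex integration entirely and is the route I would actually take to minimize technicalities.
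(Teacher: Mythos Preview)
Your proposal is correct and essentially matches the paper's proof: both compute the single-component expectation by completing the square in the Gaussian exponent and then average over $j$. The only cosmetic difference is that the paper first uses coordinate independence to factor the $d$-dimensional integral into a product of one-dimensional integrals (and simply states the 1D result), whereas you complete the square directly in $\R^d$ and discuss the contour-shift justification more explicitly; neither variant requires any idea the other lacks.
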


We will show that $A_{\mu}(\xi)$ behaves quite differently depending on whether $\|\mu_1\|_2 \le \eps/2$ or $\|\mu_1\|_2 \ge \eps$. Thus, we can solve the testing problem by estimating $A_{\mu}(\xi)$ for carefully chosen $\xi$. Concretely, we will draw $\xi$ from $\N(0, \sigma^2I_d)$ and then truncate it to $B(0, M):= \{x \in \mathbb{R}^d : \|x\|_2 \leq M\}$,
for parameters $M, \sigma > 0$ to be chosen later. Formally, we focus on the following expectation:
\[
    T_{\mu}
\coloneqq
    \Ex{\xi \sim \N(0, \sigma^2 I_d)}{A_{\mu}(\xi)\cdot\1{\|\xi\|_2 \le M}}.
\]
The key step of our proof of \Cref{thm:upper-gaussian-testing} is to show that $T_{\mu}$ is close to $e^{-(\sigma^2/2+1)\|\mu_1\|_2^2/4}$, and is thus helpful for deciding whether $\|\mu_1\|_2 \le \epsilon/2$ or $\|\mu_1\|_2 \ge \eps$. The following lemma, the proof of which is relegated to \Cref{sec:gaussian-omitted}, helps us to bound the difference between $T_{\mu}$ and $e^{-(\sigma^2/2+1)\|\mu_1\|_2^2/4}$.

\begin{lemma}\label{lemma:expected-Amu}
    For any $M, \sigma > 0$ that satisfy $M^2/\sigma^2 \ge 5d$,
    \[
        T_{\mu}
    =   \sum_{j=1}^{k}e^{-(\sigma^2/2+1)\|\mu_j\|_2^2/4} + O\left(e^{-M^2/(5\sigma^2)}\right) \cdot \sum_{j=1}^{k}e^{-\|\mu_j\|_2^2/4},
    \]
    where the $O(x)$ notation hides a complex number with modulus $\le x$.
\end{lemma}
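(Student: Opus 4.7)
The plan is to decompose $T_\mu$ into an unrestricted expectation plus a tail correction, compute the former in closed form using the Gaussian characteristic function, and control the latter using a chi-squared tail bound together with the trivial triangle-inequality bound on $|A_\mu(\xi)|$.

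First I would write
\[
    T_\mu
=   \Ex{\xi \sim \N(0,\sigma^2 I_d)}{A_\mu(\xi)}
    - \Ex{\xi \sim \N(0,\sigma^2 I_d)}{A_\mu(\xi)\cdot\1{\|\xi\|_2 > M}}.
\]
By linearity of expectation, the first (unrestricted) term equals
\[
    \sum_{j=1}^{k} e^{-\|\mu_j\|_2^2/4}\cdot \Ex{\xi \sim \N(0,\sigma^2 I_d)}{e^{i\mu_j^\top \xi/2}}.
\]
The inner expectation is exactly the characteristic function of $\N(0,\sigma^2 I_d)$ evaluated at $\mu_j/2$, which gives $e^{-\sigma^2\|\mu_j\|_2^2/8}$. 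Multiplying the two exponentials produces $e^{-(\sigma^2/2+1)\|\mu_j\|_2^2/4}$, which matches the main term in the lemma.

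For the tail term, I would use the triangle inequality to obtain the deterministic bound $|A_\mu(\xi)| \le \sum_{j=1}^{k} e^{-\|\mu_j\|_2^2/4}$ (the factors $|e^{i\mu_j^\top \xi/2}|$ are all $1$), so that
\[
    \left|\Ex{\xi}{A_\mu(\xi)\cdot\1{\|\xi\|_2 > M}}\right|
\le \pr{\xi \sim \N(0,\sigma^2 I_d)}{\|\xi\|_2 > M} \cdot \sum_{j=1}^{k} e^{-\|\mu_j\|_2^2/4}.
\]
It remains to show $\pr{}{\|\xi\|_2 > M} = O(e^{-M^2/(5\sigma^2)})$ under the hypothesis $M^2/\sigma^2 \ge 5d$. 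Since $\|\xi\|_2^2/\sigma^2$ is $\chi^2_d$, a standard Chernoff bound (using the MGF $(1-2\lambda)^{-d/2}$ with a suitable $\lambda \in (0,1/2)$) yields $\pr{}{\|\xi\|_2^2/\sigma^2 \ge t} \le e^{-\Omega(t)}$ whenever $t \ge 5d$, which applied with $t = M^2/\sigma^2$ gives exactly the required tail. Combining the two pieces yields the claimed identity with the stated $O(\cdot)$ error.

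The only slightly delicate step will be choosing the Chernoff parameter $\lambda$ so that the chi-squared tail comes out as $e^{-M^2/(5\sigma^2)}$ rather than something with a worse constant; the hypothesis $M^2/\sigma^2 \ge 5d$ is tight enough to absorb the $d$-dependent prefactor $(1-2\lambda)^{-d/2}$ into the exponent. Everything else is a routine application of linearity and the closed-form Gaussian characteristic function, so I expect no further obstacle.
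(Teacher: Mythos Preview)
Your proposal is correct and follows essentially the same approach as the paper: split off the tail event $\{\|\xi\|_2 > M\}$, compute the unrestricted expectation via the Gaussian characteristic function, and bound the tail using the chi-squared bound $\pr{}{\chi^2_d \ge t} \le e^{-t/5}$ for $t \ge 5d$ (which the paper cites as a lemma from Laurent--Massart rather than rederiving via Chernoff). The only cosmetic difference is that the paper does the decomposition term-by-term for each $j$ and then sums, whereas you decompose $T_\mu$ globally and use the uniform bound $|A_\mu(\xi)| \le \sum_j e^{-\|\mu_j\|_2^2/4}$; these are equivalent.
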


By Lemma~\ref{lemma:expected-Amu}, the difference $T_{\mu} - e^{-(\sigma^2/2+1)\|\mu_1\|_2^2/4}$ is given by
\[
    \sum_{j=2}^{k}e^{-(\sigma^2/2+1)\|\mu_j\|_2^2/4} + O\left(e^{-M^2/(5\sigma^2)}\right)\cdot\sum_{j=1}^{k}e^{-\|\mu_j\|_2^2/4}.
\]
Let $S_1 \coloneqq \sum_{j=2}^{k}e^{-(\sigma^2/2+1)\|\mu_j\|_2^2/4}$ and $S_2 \coloneqq \sum_{j=1}^{k}e^{-\|\mu_j\|_2^2/4}$ denote the two summations above. We have the following upper bounds on $S_1$ and $S_2$, which we prove in \Cref{sec:gaussian-omitted}:
\begin{claim}\label{claim:S1-bound}
    Assuming $(\sigma^2/2 + 1)\Delta^2 \ge 100\min\{\ln k, d\}$,
\[
    S_1 \le 2e^{-(\sigma^2/2 + 1)\Delta^2/64}\cdot \min\{k, 2^d\}.
\]
\end{claim}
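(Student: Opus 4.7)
The plan is to establish two separate upper bounds on $S_1$ and observe that together they imply $S_1 \le 2 e^{-\alpha\Delta^2/64}\min\{k, 2^d\}$. Throughout I would write $\alpha \coloneqq \sigma^2/2 + 1$ and $\gamma \coloneqq \alpha \Delta^2/16$, so the claim reduces to $S_1 \le 2 e^{-\gamma/4}\min\{k, 2^d\}$.

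A preliminary observation is that for every $j \ge 2$, the ordering $\|\mu_1\|_2 \le \|\mu_j\|_2$ together with $\Delta$-separation forces $\|\mu_j\|_2 \ge \Delta/2$, via $\Delta \le \|\mu_j - \mu_1\|_2 \le \|\mu_1\|_2 + \|\mu_j\|_2 \le 2\|\mu_j\|_2$. This alone yields the trivial bound
\[
    S_1 \le (k-1)\, e^{-\alpha\Delta^2/16} = (k-1)\, e^{-\gamma} \le 2k\, e^{-\gamma/4},
\]
which is the desired inequality whenever $k \le 2^d$.

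For the complementary case $2^d < k$ (equivalently $d \le \log_2 k$), I would use a shell/packing argument. Define $S^{(t)} \coloneqq \{j \ge 2 : \|\mu_j\|_2 \in [t\Delta/2, (t+1)\Delta/2)\}$ for $t = 1, 2, \ldots$. Disjoint balls of radius $\Delta/2$ centered at the points of $S^{(t)}$ fit inside $B(0, (t+2)\Delta/2)$, so a volume comparison gives $|S^{(t)}| \le (t+2)^d$. Each term in shell $t$ is bounded by $e^{-\gamma t^2}$, so
\[
    S_1 \le \sum_{t \ge 1}(t+2)^d\, e^{-\gamma t^2}.
\]
The hypothesis $\alpha\Delta^2 \ge 100\min\{\ln k, d\}$, combined with $d \le \log_2 k$ (so that $\min\{\ln k, d\} \ge d \ln 2$ even when $\min = \ln k$), yields $\gamma \ge 4d$ with room to spare. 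This is enough to drive the consecutive-term ratio $((t+3)/(t+2))^d\, e^{-\gamma(2t+1)} \le (4/3)^d\, e^{-3\gamma}$ below $1/2$, so the series is geometric and bounded by $2 \cdot 3^d\, e^{-\gamma}$. A final conversion using $\gamma \ge (4/3)\, d \ln(3/2)$ gives $3^d\, e^{-\gamma} \le 2^d\, e^{-\gamma/4}$, hence $S_1 \le 2\cdot 2^d\, e^{-\gamma/4}$.

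There is no real conceptual obstacle; the argument is a routine packing-plus-geometric-series estimate. The only care required is in the constants: the factor $100$ in the hypothesis must simultaneously suffice for the geometric-ratio step $(4/3)^d e^{-3\gamma} \le 1/2$ and the final absorption step $(3/2)^d \le e^{3\gamma/4}$, both of which need $\gamma$ to be at least a modest multiple of $d$. Verifying that $\gamma \ge 4d$ indeed suffices (with a cushion) in both subcases $d \le \ln k$ and $\ln k < d \le \log_2 k$ is the most bookkeeping-heavy but still elementary part of the proof.
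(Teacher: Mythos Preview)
Your proposal is correct and follows essentially the same route as the paper: split on $k \le 2^d$ versus $k > 2^d$, dispatch the first case with the trivial bound $\|\mu_j\|_2 \ge \Delta/2$, and in the second case combine a packing count with a geometric-ratio estimate on $\sum_t (t+O(1))^d e^{-c t^2}$. The only cosmetic difference is that the paper routes the packing through the separately stated Lemma~\ref{lemma:norm-lb} (which gives $\|\mu_j\|_2 \ge \Delta j^{1/d}/4$ and then groups by $t=\lfloor j^{1/d}\rfloor$), whereas you do the shell decomposition and volume comparison directly; this leads to slightly different constants ($3^d e^{-\gamma}$ for your first term versus the paper's $2^d e^{-\gamma/4}$), which you then reconcile with one extra elementary inequality.
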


\begin{claim}\label{claim:S2-bound}
We have
    \[
        S_2 \le \begin{cases}
        2, & \Delta^2 \ge 100d,\\
        1 + \frac{267d}{\Delta^2} \cdot \max\left\{\left(\frac{32d}{\Delta^2}\right)^{d/2}, 1\right\}, & \Delta^2 < 100d.
        \end{cases}
    \]
Furthermore,
\[    S_2 \le 10\cdot\min\left\{k, 1 + \left(\frac{32d}{\Delta^2}\right)^{d/2+1}\right\}.\]
\end{claim}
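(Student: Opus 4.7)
The plan is to rewrite $S_2$ as a one-dimensional integral against a counting function and then control that count by a standard Euclidean packing argument. Define $N(r) \coloneqq |\{j \in [k] : \|\mu_j\|_2 \le r\}|$; using the identity $e^{-\|\mu_j\|_2^2/4} = \int_{\|\mu_j\|_2}^{\infty} \tfrac{r}{2} e^{-r^2/4}\, dr$ and swapping sum and integral, I would write
\[
    S_2 = \int_0^{\infty} \tfrac{r}{2} e^{-r^2/4}\, N(r)\, dr.
\]
The $\Delta$-separation of the means yields two useful facts: (a) $N(r) \le 1$ whenever $r < \Delta/2$, since two means inside $B(0,r)$ would lie within distance $2r < \Delta$; and (b) the open balls of radius $\Delta/2$ around the means are pairwise disjoint and all contained in $B(0, r + \Delta/2)$, so a volume comparison gives $N(r) \le (1 + 2r/\Delta)^d$. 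I then split the integral at $r = \Delta/2$: the head contributes at most $1 - e^{-\Delta^2/16} \le 1$, and on the tail $r \ge \Delta/2$ I bound $1 + 2r/\Delta \le 4r/\Delta$ and substitute $u = r^2/4$ to reduce the problem to estimating
\[
    (8/\Delta)^d \int_{\Delta^2/16}^{\infty} u^{d/2} e^{-u}\, du.
\]

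\textbf{Two regimes of $\Delta$.} When $\Delta^2 \ge 100d$ the lower limit $\Delta^2/16$ sits well past the peak $u = d/2$ of the integrand, so $u^{d/2} e^{-u/2}$ is monotonically decreasing on $[\Delta^2/16,\infty)$ and a routine tail estimate for the upper incomplete gamma gives $\int_{\Delta^2/16}^{\infty} u^{d/2} e^{-u}\, du \le 2(\Delta^2/16)^{d/2} e^{-\Delta^2/16}$. Plugging back yields $S_2 \le 1 + 2^{d+1} e^{-\Delta^2/16}$, and under the hypothesis $\Delta^2 \ge 100d$ the second term is at most $2 \cdot 2^d e^{-100d/16} \le 1$ for every $d \ge 1$, establishing $S_2 \le 2$. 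For the complementary regime $\Delta^2 < 100d$, I fall back on the crude $\int_{\Delta^2/16}^{\infty} u^{d/2} e^{-u}\, du \le \Gamma(d/2+1)$ and invoke Stirling in the form $\Gamma(d/2+1) \lesssim \sqrt{d}\,(d/2)^{d/2} e^{-d/2}$. Since $(8/\Delta)^d (d/2)^{d/2}$ rearranges cleanly as $(32d/\Delta^2)^{d/2}$, I obtain $(8/\Delta)^d \Gamma(d/2+1) \lesssim \sqrt{d}\, e^{-d/2}\, (32d/\Delta^2)^{d/2}$, and comparing to the claimed target $\frac{267 d}{\Delta^2}\max\{(32d/\Delta^2)^{d/2}, 1\}$ reduces to checking $\sqrt{d}\, e^{-d/2} \le \frac{267 d}{\Delta^2}$, which is very comfortable since $\sqrt{d}\, e^{-d/2}$ peaks at $0.607$ while $267d/\Delta^2 > 267/100 > 2$. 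The internal $\max$ simply records whether $\Delta^2 \le 32d$ (in which case $(32d/\Delta^2)^{d/2} \ge 1$) or $\Delta^2 > 32d$.

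\textbf{``Furthermore'' bound and main obstacle.} For the final inequality, $S_2 \le 10k$ is immediate since each summand is at most $1$. For $S_2 \le 10\bigl(1 + (32d/\Delta^2)^{d/2+1}\bigr)$ I split three ways: if $\Delta^2 \ge 100d$, then $S_2 \le 2 \le 10$; if $\Delta^2 \le 32d$, rewriting $\frac{267d}{\Delta^2}(32d/\Delta^2)^{d/2} = \frac{267}{32}(32d/\Delta^2)^{d/2+1} \le 9\,(32d/\Delta^2)^{d/2+1}$ gives $1 + 9(32d/\Delta^2)^{d/2+1} \le 10 + 10(32d/\Delta^2)^{d/2+1}$; and if $32d < \Delta^2 < 100d$ the internal $\max$ is $1$ and $1 + 267d/\Delta^2 \le 1 + 267/32 < 10$. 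The main obstacle will be purely numerical bookkeeping: tracking constants through Stirling so that the literal $267$ and the base $32$ inside the $\max$ come out as stated. A small subtlety worth flagging is that the sharp tail estimate on the incomplete gamma only kicks in once $\Delta^2/16 \gtrsim d$, which is exactly why the clean $S_2 \le 2$ bound requires $\Delta^2$ to exceed a fixed multiple of $d$ rather than holding universally.
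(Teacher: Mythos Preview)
Your approach is correct and takes a genuinely different route from the paper. The paper first invokes a discrete norm lower bound $\|\mu_j\|_2 \ge \Delta j^{1/d}/4$ (itself proved by the same ball-packing argument you use), then rewrites $S_2$ as a sum over level sets $t = \lfloor j^{2/d}\rfloor$ to obtain $\sum_t e^{-\Delta^2 t/64}(t+1)^{d/2}$, and controls this by analyzing the ratio between consecutive terms; everything is handled with geometric-series arguments, and the constants $32$ and $267$ drop out of that discrete bookkeeping. You instead stay continuous throughout: the integral representation $S_2 = \int \tfrac{r}{2}e^{-r^2/4}N(r)\,dr$ together with the packing bound $N(r)\le(1+2r/\Delta)^d$ reduces the whole claim to an incomplete-gamma tail estimate, with Stirling supplying the second case. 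Your approach is arguably cleaner and more conceptual, while the paper's is more elementary in that it avoids Stirling and special-function tail bounds entirely. As you correctly anticipate, the only place you will have to work is matching the exact constants: your derivation naturally produces $(32d/(e\Delta^2))^{d/2}$ rather than $(32d/\Delta^2)^{d/2}$, and absorbing the implicit Stirling constant into the prefactor $267d/\Delta^2$ requires the kind of numerical check you outlined. Your three-way split for the ``furthermore'' part is essentially identical to the paper's.
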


Now we put all the pieces together and prove \Cref{thm:upper-gaussian-testing}.

\begin{proof}[Proof of \Cref{thm:upper-gaussian-testing}]~~By Lemma~\ref{lemma:Fourier-d-dim},
    \begin{align*}
        T_{\mu}
    &=  \Ex{\xi \sim \N(0, \sigma^2 I_d)}{A_{\mu}(\xi)\cdot\1{\|\xi\|_2 \le M}}\\
    &=  \Ex{X\sim P\atop \xi \sim \N(0, \sigma^2 I_d)}{2^{d/2}k\cdot e^{\|\xi\|_2^2/4}\cdot e^{-\|X\|_2^2/2}\cdot e^{i(\xi^{\top}X)}\cdot\1{\|\xi\|_2 \le M}}.
    \end{align*}
    Since the term inside the expectation has modulus $\le k\cdot e^{O(d + M^2)}$, a Chernoff bound implies that we can estimate $T_{\mu}$ up to any additive error $\gamma > 0$ using $O((k/\gamma)^2)\cdot e^{O(d + M^2)}$ samples from $P$.
    
    By Lemma~\ref{lemma:expected-Amu} and our definition of $S_1$ and $S_2$, assuming $M^2/\sigma^2 \ge 5d$,
    \[
        \left|T_{\mu} - e^{-(\sigma^2/2+1)\|\mu_1\|_2^2/4}\right|
    \le S_1 + {e^{-M^2/(5\sigma^2)}}S_2.
    \]
    In the rest of the proof, we will pick $\sigma$ and $M$ carefully so that both $S_1$ and ${e^{-M^2/(5\sigma^2)}}S_2$ are upper bounded by $\gamma \coloneqq (\sigma^2/2+1)\eps^2/64$.
    Assuming this, we are done: We can simply estimate $T_{\mu}$ up to an additive error of $\gamma$. Let $\widehat{T_{\mu}}$ denote the estimate. We accept if and only if $\Re\widehat{T_{\mu}} \ge \theta \coloneqq \frac{1}{2}\left[e^{-(\sigma^2/2+1)\eps^2/16} + e^{-(\sigma^2/2+1)\eps^2/4}\right]$. Indeed, suppose that $\|\mu_1\|_2 \le \eps / 2$, we have
    \[
        \Re\widehat{T_{\mu}}
    \ge \Re T_{\mu} - \gamma
    \ge e^{-(\sigma^2/2+1)\|\mu_1\|_2^2/4} - (S_1 + e^{-M^2/(5\sigma^2)}S_2) - \gamma
    \ge e^{-(\sigma^2/2+1)\eps^2/16} - 3\gamma.
    \]
    Similarly, $\Re\widehat{T_{\mu}}
    \le e^{-(\sigma^2/2+1)\eps^2/4} + 3\gamma$ if $\|\mu_1\|_2 \ge \eps$.
    If we set $\sigma$ such that $(\sigma^2/2+1)\eps^2 \le 1$, we have
    $
        e^{-(\sigma^2/2+1)\eps^2/16} - e^{-(\sigma^2/2+1)\eps^2/4}
    \ge \frac{1}{8}(\sigma^2/2+1)\eps^2
    =   8\gamma,
    $
    which implies $\Re\widehat{T_{\mu}} > \theta$ in the former case and $\Re\widehat{T_{\mu}} < \theta$ in the latter case. Therefore, our algorithm decides correctly.
    
    \paragraph{Choice of parameters.} Claim~\ref{claim:S1-bound} implies that if we set $\sigma^2/2 + 1
    = \frac{512}{\Delta^2}\left(\min\{d, \ln k\} + \ln\frac{\Delta}{\eps}\right)$,
    we can ensure $S_1 \le (\sigma^2/2+1)\eps^2/64 = \gamma$. Furthermore, this choice of $\sigma$ and the assumption $\eps < \min\{\Delta / 100, \Delta / (32\sqrt{\min\{d, \ln k\}})\}$ guarantee the condition $(\sigma^2/2+1)\eps^2 \le 1$ that we need.
    
    It remains to pick $M$ such that $e^{-M^2/(5\sigma^2)} \cdot S_2 \le \gamma = (\sigma^2/2 + 1)\eps^2/64$.
    We also need $M^2/\sigma^2 \ge 5d$ to ensure that Lemma~\ref{lemma:expected-Amu} can be applied. It suffices to let $M^2 \ge 5\sigma^2\cdot\left(d + \ln S_2 + \ln\frac{64}{(\sigma^2/2 + 1)\eps^2}\right)$.
    Our choice of $\sigma$ guarantees $\ln\frac{64}{(\sigma^2/2 + 1)\eps^2} \le 2\ln\frac{\Delta}{\eps}$, so it is in turn sufficient to pick $M$ such that
    \[
        M^2
    =   \frac{5120}{\Delta^2}\left(\min\{d, \ln k\} + \ln\frac{\Delta}{\eps}\right)\left(d + 2\ln\frac{\Delta}{\eps} + \ln S_2\right).
    \]
    Applying Claim~\ref{claim:S2-bound} shows that $M$ can be chosen such that
    \[
        M^2 \lesssim
        \frac{1}{\Delta^2}\left(\min\{d, \log k\} + \log\frac{\Delta}{\eps}\right) \cdot \left(\min\left\{d + \log k, d\log\left(2 + \frac{d}{\Delta^2}\right)\right\} + \log\frac{\Delta}{\eps}\right).
    \]
    
    \paragraph{Runtime.} The runtime of our algorithm is dominated by the number of samples drawn from $P$---$O((k/\gamma)^2)\cdot e^{O(d+M^2)}$, where $\gamma = \Theta((\sigma^2+1)\eps^2)$. Plugging our choice of $\sigma$ into $\gamma$ gives $1/\gamma = O((\Delta/\eps)^2)$. The runtime can thus be upper bounded by $O(k^2(\Delta/\eps)^4)\cdot e^{O(d + M^2)}$.
\end{proof}

\section*{Acknowledgments}
We would like to thank the anonymous reviewers of earlier versions of this paper for their suggestions on the presentation and for pointers to the literature.

\bibliographystyle{alpha}
\bibliography{main}

\newpage
\appendix

\section{Pseudocode of Algorithms}\label{sec:pseudocode}

The pseudocode of our algorithms for learning and testing spherical Gaussian mixtures follow immediately from the proofs of \Cref{thm:upper-gaussian} and \Cref{thm:upper-gaussian-testing}. In the following, $P$ is a uniform mixture of $k$ spherical Gaussians in $\R^d$ with unknown $\Delta$-separated means.

\begin{algorithm2e}[H]\label{algo:gaussian-learn}
    \caption{Learning Gaussian Mixtures via Testing}
    \KwIn{Sample access to mixture $P$. Accuracy parameter $\eps > 0$.}
    \KwOut{Mean estimates $\hat\mu_1, \hat\mu_2, \ldots, \hat\mu_k$.}
    $p \gets \frac{1}{2^{d/2}k\cdot\Gamma(d/2+1)}\cdot \max_{0\le r\le \eps / 2}r^de^{-r^2/2}$\;
    $N \gets (2\ln k)/p$;
    $C \gets \emptyset$\;
    \For{$i = 1, 2, \ldots, N$} {
        Draw $X_i$ from $P$\;
        Run Algorithm~\ref{algo:gaussian-test} with $\mu^* = X_i$ for $\Theta(\log N)$ times\;
        \uIf{Algorithm~\ref{algo:gaussian-test} accepts more than half of the times}{
            $C \gets C \cup \{X_i\}$\;
        }
    }
    Partition $C$ into $C_1, C_2, \ldots, C_k$ such that $x, y \in C$ are in the same cluster if $\|x - y\|_2 \le 2\eps$\;
    Arbitrarily pick $\hat\mu_1 \in C_1, \hat\mu_2 \in C_2, \ldots, \hat\mu_k \in C_k$\;
    \Return $\hat\mu_1, \ldots, \hat\mu_k$\;
\end{algorithm2e}

\begin{algorithm2e}[H]\label{algo:gaussian-test}
    \caption{Testing Gaussian Mixtures using Fourier Transform}
    \KwIn{Sample access to mixture $P$. Parameters $\eps, \sigma, M > 0$ and candidate mean $\mu^* \in \R^d$.}
    \KwOut{``Accept'' or ``Reject''.}
    $\gamma \gets (\sigma^2/2+1)\eps^2/64$\;
    $N \gets \Theta((k/\gamma)^2)\cdot e^{\Theta(d + M^2)}$\;
    $\mathrm{Avg} \gets 0$\;
    \For{$i = 1, 2, \ldots, N$}{
        Draw $X \sim P$ and $\xi \sim \N(0, \sigma^2I_d)$\;
        \uIf{$\|\xi\|_2 \le M$}{
            $\mathrm{Avg} \gets \mathrm{Avg} + \frac{2^{d/2}k}{N}\cdot e^{\|\xi\|_2^2/4}\cdot e^{-\|X - \mu^*\|_2^2/2}\cdot e^{i\xi^{\top}(X - \mu^*)}$\;
        }
    }
    $\theta \gets \frac{1}{2}\left[e^{-(\sigma^2/2+1)\eps^2/16} + e^{-(\sigma^2/2+1)\eps^2/4}\right]$\;
    \Return ``Accept'' if $\Re\mathrm{Avg} \ge \theta$ and ``Reject'' otherwise\;
\end{algorithm2e}

\section{Auxiliary Lemmas}\label{sec:auxiliary}

\begin{lemma}\label{lemma:factorial}
    For integers $n, k \ge 0$, $k! \ge e^{-k}k^k$ and $\binom{n}{k} \le \left(\frac{en}{k}\right)^k$.
\end{lemma}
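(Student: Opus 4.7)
The plan is to prove both inequalities by the standard Taylor series argument, handling $k=0$ as a trivial convention.

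For the first inequality $k! \ge e^{-k}k^k$, I would start from the Taylor expansion $e^k = \sum_{j=0}^{\infty} k^j/j!$. Since every term is nonnegative, in particular $e^k \ge k^k/k!$, which rearranges to $k! \ge k^k e^{-k}$. The $k=0$ case holds by the convention $0^0 = 1$, giving $0! = 1 \ge 1$.

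For the second inequality $\binom{n}{k} \le (en/k)^k$, I would bound the falling factorial in the numerator by $n^k$ and then invoke the first inequality:
\[
\binom{n}{k} = \frac{n(n-1)\cdots(n-k+1)}{k!} \le \frac{n^k}{k!} \le \frac{n^k}{e^{-k}k^k} = \left(\frac{en}{k}\right)^k.
\]
Again the $k=0$ case is trivial since both sides equal $1$ under the usual conventions.

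There is no real obstacle here; this is a completely routine calculation. The only minor care needed is to state the boundary convention $0^0 = 1$ so that the second inequality makes sense when $k=0$, and to note that the first step of the second inequality uses $n \ge k$ implicitly (if $k > n$, then $\binom{n}{k} = 0$ and the bound is vacuous).
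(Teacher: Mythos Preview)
Your proposal is correct and follows essentially the same argument as the paper: the Taylor expansion of $e^k$ gives $k! \ge e^{-k}k^k$, and then bounding the falling factorial by $n^k$ and applying the first inequality yields the binomial bound. Your explicit treatment of the $k=0$ boundary case is a small addition the paper omits, but otherwise the proofs are identical.
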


\begin{proof}
    By the Taylor expansion of $e^x$, $e^k = \sum_{j=0}^{+\infty}\frac{k^j}{j!} \ge \left.\frac{k^j}{j!}\right|_{j=k} = \frac{k^k}{k!}$. Rearranging gives the first inequality. The second inequality then follows from $\binom{n}{k} = \frac{n(n-1)\cdots(n-k+1)}{k!} \le \frac{n^k}{e^{-k}k^k} = \left(\frac{en}{k}\right)^k$.
\end{proof}

We will need the following standard facts about the volume of an $\ell_2$-ball in high dimensions.

\begin{lemma}\label{lemma:ball-volume}
    The volume of a $d$-ball of radius $r$ is
    \[
        \Vol_d(r)
    =   \frac{\pi^{d/2}}{\Gamma(d/2+1)}\cdot r^d
    \le (2r)^d.
    \]
    Furthermore, $\Gamma(d/2 + 1) = d^{O(d)}$.
\end{lemma}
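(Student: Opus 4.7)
The plan is to derive the closed-form volume via the standard Gaussian-integral trick, then obtain the two stated bounds by elementary containment and Stirling-type estimates.

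First I would compute $\Vol_d(r)$. By scaling it suffices to handle $r=1$ and multiply by $r^d$. To identify the unit-ball volume I would evaluate the $d$-dimensional Gaussian integral $I_d \coloneqq \int_{\R^d} e^{-\|x\|_2^2}\,dx$ in two ways: on one hand, factoring coordinate-wise gives $I_d = \left(\int_\R e^{-t^2}\,dt\right)^d = \pi^{d/2}$; on the other hand, passing to polar (spherical) coordinates and letting $S_{d-1}$ denote the surface area of the unit $(d-1)$-sphere yields
\[
    I_d = S_{d-1}\int_0^\infty e^{-s^2} s^{d-1}\,ds = \frac{S_{d-1}}{2}\,\Gamma(d/2)
\]
after the substitution $u=s^2$. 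Equating the two expressions gives $S_{d-1} = 2\pi^{d/2}/\Gamma(d/2)$, and integrating the shell area over radius $s \in [0,r]$ gives $\Vol_d(r) = S_{d-1} r^d/d = \pi^{d/2} r^d/\Gamma(d/2+1)$, using the recurrence $\Gamma(d/2+1) = (d/2)\Gamma(d/2)$.

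For the inequality $\Vol_d(r) \le (2r)^d$ I would simply observe that the Euclidean ball of radius $r$ centered at the origin is contained in the axis-aligned cube $[-r,r]^d$, whose volume is exactly $(2r)^d$; monotonicity of Lebesgue measure finishes this part in one line.

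Finally, for the bound $\Gamma(d/2+1) = d^{O(d)}$ I would use the elementary estimate $\Gamma(n+1) \le n^n$ for real $n \ge 1$ (which follows, for instance, from the integral representation $\Gamma(n+1) = \int_0^\infty t^n e^{-t}\,dt$ together with the standard Stirling bound $\Gamma(n+1) \le e\sqrt{n}\,(n/e)^n$, or even more crudely from Lemma~\ref{lemma:factorial} applied after rounding $n$ up to an integer). Setting $n = d/2$ yields $\Gamma(d/2+1) \le (d/2)^{d/2} \cdot \mathrm{poly}(d) = d^{O(d)}$, as claimed. I do not anticipate a genuine obstacle here: every step is textbook, and the only care needed is to handle small values of $d$ (say $d=1,2$) where the Stirling-type bound must be stated for real arguments rather than integers.
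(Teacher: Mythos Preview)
Your proposal is correct and matches the paper's approach: the inequality via containment in the cube $[-r,r]^d$ and the Gamma bound via Stirling are exactly what the paper does. The only difference is that you supply the standard Gaussian-integral derivation of the volume formula, whereas the paper simply cites a reference for that identity.
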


\begin{proof}
    We refer the reader to~\cite{SV89} for a proof of the identity. The inequality $\Vol_d(r) \le (2r)^d$ holds since the $\ell_2$ ball is contained in the $\ell_{\infty}$ ball of radius $r$, which trivially has volume $(2r)^d$. Finally, the bound $\Gamma(d/2+1) = d^{O(d)}$ follows from Stirling's approximation $\Gamma(z) = (1 + O(1/z))\sqrt{2\pi/z}(z/e)^z$.
\end{proof}

We need the following tail bound of $\chi^2$-distributions proved by~\cite[Equation (4.3)]{LM00}, to control the probability that a Gaussian random variable has a large norm.
\begin{lemma}\label{lemma:chi-square}
    Let random variable $X$ be sampled from the $\chi^2$-distribution with $d$ degrees of freedom. Then, for any $t > 0$,
    \[
        \pr{X}{X \ge d + 2\sqrt{dt} + 2t} \le e^{-t}.
    \]
    Furthermore, for any $t \ge 5d$,
    \[
        \pr{X}{X \ge t} \le e^{-t/5}.
    \]
\end{lemma}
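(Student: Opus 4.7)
The first inequality is precisely the bound of Laurent--Massart \cite[Eq.~(4.3)]{LM00}, so my plan is to simply cite it rather than reprove it. The only real work is deducing the second inequality from the first.

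The plan for the second bound is a direct substitution argument. Given $t \ge 5d$, I would apply the first bound with parameter $s = t/5$, which yields $\Pr[X \ge d + 2\sqrt{ds} + 2s] \le e^{-s} = e^{-t/5}$. It then suffices to verify that the threshold $d + 2\sqrt{ds} + 2s$ is at most $t$, since the event $\{X \ge t\}$ is then contained in $\{X \ge d + 2\sqrt{ds} + 2s\}$ and the desired tail bound follows.

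To verify $d + 2\sqrt{dt/5} + 2t/5 \le t$, I would rearrange this as $d + 2\sqrt{dt/5} \le 3t/5$ and use the hypothesis $t \ge 5d$ term by term: first, $d \le t/5$; second, $\sqrt{dt/5} \le \sqrt{(t/5)(t/5)\cdot 5/5} = t/5$ using $d \le t/5$. Adding these gives $d + 2\sqrt{dt/5} \le t/5 + 2t/5 = 3t/5$, exactly what is needed.

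There is no real obstacle here; the argument is a two-line substitution, and the only thing to double-check is the arithmetic that makes $s = t/5$ the right choice (any slightly larger constant than $5$ in the hypothesis would also work, but $5$ is tight for this clean choice of $s$). I would present the argument as one short paragraph that cites Laurent--Massart for the first inequality and then performs the substitution and verification for the second.
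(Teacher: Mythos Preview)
Your proposal is correct and matches the paper's treatment: the paper does not give a proof of this lemma at all, simply citing \cite[Eq.~(4.3)]{LM00} for the first inequality and leaving the ``furthermore'' part implicit. Your substitution $s=t/5$ together with the check $d+2\sqrt{dt/5}+2t/5\le t$ under $t\ge 5d$ is exactly the intended (and only natural) derivation, and the arithmetic is clean.
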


The following classic theorem in topology is used in our lower bound proof. See, e.g., \cite{MBZ03} for a proof of the Borsuk--Ulam theorem. We remark that Borsuk--Ulam has been applied in a similar fashion in the literature of mixture learning~\cite{HP15,CLS20}.
\begin{theorem}[The Borsuk--Ulam Theorem]\label{thm:borsuk-ulam}
    Suppose that $n \ge m$ and $f: \S^n \to \R^m$ is continuous. Then, there exists $x \in \S^n$ such that $f(x) = f(-x)$.
\end{theorem}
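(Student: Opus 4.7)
The plan is to reduce the statement in two steps: first from the general case $n \ge m$ to the diagonal case $n = m$, and then from $n = m$ to the classical fact that no continuous odd map $\S^n \to \S^{n-1}$ exists. The first reduction is immediate: given continuous $f : \S^n \to \R^m$ with $n \ge m$, define $\tilde f : \S^n \to \R^n$ by padding with zeros, $\tilde f(x) := (f(x), 0, \ldots, 0)$. Then $\tilde f$ is continuous, and any $x$ with $\tilde f(x) = \tilde f(-x)$ already satisfies $f(x) = f(-x)$, so we may assume $m = n$.

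For the case $n = m$, I would argue by contradiction. Suppose $f(x) \ne f(-x)$ for every $x \in \S^n$. Then
\[
    g(x) \;:=\; \frac{f(x) - f(-x)}{\|f(x) - f(-x)\|_2}
\]
is a well-defined continuous map $g : \S^n \to \S^{n-1}$, and it is odd since the numerator negates under $x \mapsto -x$. Hence it suffices to show that no continuous odd map $\S^n \to \S^{n-1}$ can exist.

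For that last step, the standard route is cohomological. Any continuous odd $g : \S^n \to \S^{n-1}$ descends to a continuous $\bar g : \mathbb{RP}^n \to \mathbb{RP}^{n-1}$ between the antipodal quotients. Since $g$ is a fibrewise map of the canonical $\mathbb{Z}/2$-bundles, $\bar g^*$ on $H^1(\cdot\,;\mathbb{Z}/2)$ sends the generator of $H^1(\mathbb{RP}^{n-1};\mathbb{Z}/2)$ to the generator $\alpha$ of $H^1(\mathbb{RP}^n;\mathbb{Z}/2)$. Using the ring structure $H^*(\mathbb{RP}^k;\mathbb{Z}/2) \cong \mathbb{Z}/2[\alpha]/(\alpha^{k+1})$ and taking $n$-th cup powers, one gets $\bar g^*(\alpha^n) = \alpha^n \ne 0$ in $H^n(\mathbb{RP}^n) \cong \mathbb{Z}/2$; but $\alpha^n$ already vanishes in $H^n(\mathbb{RP}^{n-1}) = 0$, a contradiction.

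The main obstacle is precisely this last step: the non-existence of odd maps $\S^n \to \S^{n-1}$ requires genuine algebraic-topological input, whether via the cohomology calculation above, a mod-$2$ degree argument, or a combinatorial route through Tucker's lemma. Since the theorem is used here only as a black box for the lower-bound construction, I would invoke the cited reference \cite{MBZ03} rather than reproduce the full topological proof.
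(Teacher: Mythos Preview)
Your sketch is correct and in fact goes well beyond what the paper does: the paper does not prove the Borsuk--Ulam theorem at all, but simply quotes it as a black box and refers the reader to \cite{MBZ03}. Your two-step reduction (pad with zeros to get $n=m$, then normalize $f(x)-f(-x)$ to obtain an odd map $\S^n\to\S^{n-1}$) and the cohomological contradiction via $H^*(\mathbb{RP}^k;\mathbb{Z}/2)\cong\mathbb{Z}/2[\alpha]/(\alpha^{k+1})$ are the standard argument and are sound. Since the paper only needs the statement for the lower-bound construction and explicitly treats it as a cited result, your final remark---that one should simply invoke \cite{MBZ03}---already matches the paper's approach exactly; the preceding outline is extra detail the paper omits.
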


\section{Deferred Proofs from Section~\ref{sec:upper-gaussian-testing}}\label{sec:gaussian-omitted}
We start with the proof of Lemma~\ref{lemma:Fourier-d-dim}, which we restate below.

\vspace{6pt}
\noindent\textbf{Lemma~\ref{lemma:Fourier-d-dim}}~~{\it
For $P = \frac{1}{k}\sum_{j=1}^{k}\N(\mu_j, I_d)$ and any $\xi \in \R^d$,
    \[
        \Ex{X\sim P}{e^{-\|X\|_2^2/2}\cdot e^{i(\xi^{\top}X)}}
    =   \frac{e^{-\|\xi\|_2^2/4}}{2^{d/2}k}\sum_{j=1}^{k}e^{-\|\mu_j\|_2^2/4} \cdot e^{i (\mu_j^{\top}\xi/2)}
    =   \frac{e^{-\|\xi\|_2^2/4}}{2^{d/2}k} \cdot A_{\mu}(\xi),
    \]
    where we define $A_{\mu}(\xi) \coloneqq \sum_{j=1}^{k}e^{-\|\mu_j\|_2^2/4} \cdot e^{i (\mu_j^{\top}\xi/2)}$.
}\vspace{6pt}

\begin{proof}[Proof of Lemma~\ref{lemma:Fourier-d-dim}]~~For any $v, \xi \in \R^d$, we have
    \begin{align*}
        \Ex{X \sim \N(v, I_d)}{e^{-\|X\|_2^2/2}\cdot e^{i(\xi^{\top}X)}}
    =   &~\Ex{X \sim \N(v, I_d)}{\prod_{j=1}^{d}e^{-X_j^2/2}\cdot e^{i(\xi_jX_j)}}\\
    =   &~\prod_{j=1}^d\Ex{X_j \sim \N(v_j, 1)}{e^{-X_j^2/2}\cdot e^{i(\xi_jX_j)}}. \tag{$X_1, \ldots, X_d$ are independent}
    \end{align*}
    The $j$-th term in the product above is given by
    \[
        \frac{1}{\sqrt{2\pi}}\int_{-\infty}^{+\infty}e^{-(x-v_j)^2/2 - x^2/2}\cdot e^{i\xi_jx}~\mathrm{d}x
    =   \frac{1}{\sqrt{2}}\cdot e^{-\xi_j^2/4} \cdot e^{-v_j^2/4} \cdot e^{iv_j\xi_j/2},
    \]
    so we have
    \begin{align*}
        \Ex{X \sim \N(v, I_d)}{e^{-\|X\|_2^2/2}\cdot e^{i(\xi^{\top}X)}}
    &=  \prod_{j=1}^{d}\left(\frac{1}{\sqrt{2}}\cdot e^{-\xi_j^2/4} \cdot e^{-v_j^2/4} \cdot e^{iv_j\xi_j/2}\right)\\
    &=  \frac{e^{-\|\xi\|_2^2/4}}{2^{d/2}}\cdot e^{-\|v\|_2^2/4} \cdot e^{i(v^{\top}\xi/2)}.
    \end{align*}
    Finally, the lemma follows from the above identity and averaging over $v \in \{\mu_1, \ldots, \mu_k\}$.
\end{proof}

\vspace{6pt}
\noindent\textbf{Lemma~\ref{lemma:expected-Amu}}{\it~~
    For any $M, \sigma > 0$ that satisfy $M^2/\sigma^2 \ge 5d$,
    \[
        T_{\mu}
    =   \sum_{j=1}^{k}e^{-(\sigma^2/2+1)\|\mu_j\|_2^2/4} + O\left(e^{-M^2/(5\sigma^2)}\right) \cdot \sum_{j=1}^{k}e^{-\|\mu_j\|_2^2/4},
    \]
    where the $O(x)$ notation hides a complex number with modulus $\le x$.
}\vspace{6pt}

\begin{proof}[Proof of Lemma~\ref{lemma:expected-Amu}]~~Recall that $A_{\mu}(\xi) = \sum_{j=1}^{k}e^{-\|\mu_j\|_2^2/4} \cdot e^{i (\mu_j^{\top}\xi/2)}$. The contribution of the $j$-th term of $A_{\mu}(\xi)$ to $T_{\mu}$ is
    \begin{align*}
        &~\Ex{\xi \sim \N(0, \sigma^2 I_d)}{e^{-\|\mu_j\|_2^2/4} \cdot e^{i (\mu_j^{\top}\xi/2)}\cdot\1{\|\xi\|_2 \le M}}\\
    =   &~e^{-\|\mu_j\|_2^2/4} \cdot \left[\Ex{\xi \sim \N(0, \sigma^2 I_d)}{e^{i (\mu_j^{\top}\xi/2)}} + O\left(\pr{\xi \sim \N(0, \sigma^2 I_d)}{\|\xi\|_2 > M}\right)\right]\\
    =   &~e^{-\|\mu_j\|_2^2/4} \cdot \left[e^{-\sigma^2\|\mu_j\|_2^2/8} + O\left(e^{-M^2/(5\sigma^2)}\right)\right] \tag{$M^2/\sigma^2 \ge 5d$ and Lemma~\ref{lemma:chi-square}}\\
    =   &~e^{-(\sigma^2/2+1)\|\mu_j\|_2^2/4} + e^{-\|\mu_j\|_2^2/4} \cdot O\left(e^{-M^2/(5\sigma^2)}\right).
    \end{align*}
    The lemma then follows from a summation over $j \in [k]$.
\end{proof}

To prove Claims \ref{claim:S1-bound}~and~\ref{claim:S2-bound}, we need the following lower bound on the norms of the mean vectors.

\begin{lemma}\label{lemma:norm-lb}
    Suppose that $\mu_1, \mu_2, \ldots, \mu_k \in \R^d$ are $\Delta$-separated, and $\|\mu_1\|_2 \le \|\mu_2\|_2 \le \cdots \le \|\mu_k\|_2$. For any $j \ge 2$,
    \[
        \|\mu_j\|_2 \ge \max\left\{\frac{\Delta}{2}, \frac{\Delta j^{1/d}}{4}\right\}.
    \]
\end{lemma}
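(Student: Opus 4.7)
The plan is to establish the two claimed lower bounds on $\|\mu_j\|_2$ separately, and then observe that together they cover all cases $j \ge 2$.

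\textbf{Bound 1: $\|\mu_j\|_2 \ge \Delta/2$.} This follows from a direct triangle-inequality argument. Since $\|\mu_1\|_2 \le \|\mu_j\|_2$ by the ordering assumption, and $\|\mu_j - \mu_1\|_2 \ge \Delta$ by $\Delta$-separation, I would write $\Delta \le \|\mu_j - \mu_1\|_2 \le \|\mu_j\|_2 + \|\mu_1\|_2 \le 2\|\mu_j\|_2$, which rearranges to $\|\mu_j\|_2 \ge \Delta/2$.

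\textbf{Bound 2: $\|\mu_j\|_2 \ge \Delta j^{1/d}/4$ when $j \ge 2^d$.} This is the standard volume-packing argument. Because the means are $\Delta$-separated, the open balls $B(\mu_i, \Delta/2)$ for $i = 1, 2, \ldots, j$ are pairwise disjoint. Moreover, each $\mu_i$ with $i \le j$ satisfies $\|\mu_i\|_2 \le \|\mu_j\|_2$, so every such ball $B(\mu_i, \Delta/2)$ is contained in the larger ball $B(0, \|\mu_j\|_2 + \Delta/2)$. Comparing volumes via \Cref{lemma:ball-volume} (the common constant $\pi^{d/2}/\Gamma(d/2+1)$ cancels) gives
\[
    j \cdot (\Delta/2)^d \le (\|\mu_j\|_2 + \Delta/2)^d,
\]
and extracting $d$-th roots yields $\|\mu_j\|_2 \ge (j^{1/d} - 1)\Delta/2$.

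\textbf{Combining the bounds.} It remains to deduce $\|\mu_j\|_2 \ge \Delta j^{1/d}/4$ in both regimes. If $j \le 2^d$, then $j^{1/d} \le 2$, so $\Delta j^{1/d}/4 \le \Delta/2$ and the first bound suffices. If $j \ge 2^d$, then $j^{1/d} \ge 2$, and the packing bound gives $\|\mu_j\|_2 \ge (j^{1/d} - 1)\Delta/2 \ge (j^{1/d}/2)\cdot \Delta/2 = \Delta j^{1/d}/4$. Taking the maximum of the two bounds completes the proof.

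There is no real obstacle here; the only subtlety is verifying that the crossover between the two bounds happens cleanly at $j = 2^d$, which is what the factor of $4$ (rather than $2$) in the second bound buys us.
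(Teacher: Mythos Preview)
Your proof is correct and essentially identical to the paper's: the same triangle-inequality argument for $\|\mu_j\|_2 \ge \Delta/2$, the same volume-packing bound yielding $\|\mu_j\|_2 \ge (\Delta/2)(j^{1/d}-1)$, and the same case split at $j = 2^d$ to obtain $\Delta j^{1/d}/4$.
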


\begin{proof}[Proof of Lemma~\ref{lemma:norm-lb}]~~Fix $j \ge 2$. The first lower bound follows from
\[
    \Delta
\le \|\mu_1 - \mu_j\|_2
\le \|\mu_1\|_2 + \|\mu_j\|_2
\le 2\|\mu_j\|_2.
\]
Now we turn to the second bound. Since $\mu_1, \ldots, \mu_j$ are $\Delta$-separated, the $j$ balls $B(\mu_1, \Delta/2)$, $B(\mu_2, \Delta/2)$, $\ldots$, $B(\mu_j, \Delta/2)$ are disjoint and all contained in $B(0, \|\mu_j\|_2 + \Delta/2)$. Thus, we have $j \cdot (\Delta/2)^d \le (\|\mu_j\|_2 + \Delta/2)^d$, which implies $\|\mu_j\|_2 \ge \frac{\Delta}{2}\cdot(j^{1/d} - 1)$.

For $j \ge 2^d$, we have $j^{1/d} / 2 \ge 1$ and the lower bound can be relaxed to
\[
    \|\mu_j\|_2 \ge \frac{\Delta}{2}\cdot \left(j^{1/d} - \frac{j^{1/d}}{2}\right)
=   \frac{\Delta j^{1/d}}{4}.
\]
For $2 \le j < 2^d$, we have $j^{1/d} < 2$, so the first lower bound implies $\|\mu_j\|_2 \ge \frac{\Delta}{2} \ge \frac{\Delta j^{1/d}}{4}$.
\end{proof}

With the lower bounds on $\|\mu_j\|_2$,
we are ready to bound $S_1 = \sum_{j=2}^{k}e^{-(\sigma^2/2+1)\|\mu_j\|_2^2/4}$ and $S_2 = \sum_{j=1}^{k}e^{-\|\mu_j\|_2^2/4}$. We first restate and prove Claim~\ref{claim:S1-bound}:

\vspace{6pt}
\noindent\textbf{Claim~\ref{claim:S1-bound}}{\it~~
    Assuming $(\sigma^2/2 + 1)\Delta^2 \ge 100\min\{\ln k, d\}$,
\[
    S_1 \le 2e^{-(\sigma^2/2 + 1)\Delta^2/64}\cdot \min\{k, 2^d\}.
\]
}\vspace{6pt}

\begin{proof}[Proof of Claim~\ref{claim:S1-bound}]~~If $k \le 2^d$, we apply the bound $\|\mu_j\|_2 \ge \Delta/2$ to all $j \ge 2$ and get
$S_1
\le (k - 1) \cdot e^{-(\sigma^2/2+1)\Delta^2/16}$, which is stronger than what we need. Otherwise, we apply $\|\mu_j\|_2 \ge \Delta j^{1/d}/4$ and get
\begin{align*}
    S_1
&\le \sum_{j=2}^{k}\exp\left(-\frac{(\sigma^2/2+1)\Delta^2j^{2/d}}{64}\right)\\
&\le \sum_{j=2}^{k}\exp\left(-\frac{(\sigma^2/2+1)\Delta^2\lfloor j^{1/d}\rfloor^2}{64}\right) \tag{$j^{1/d} \ge \lfloor j^{1/d}\rfloor$}\\
&=  \sum_{t=1}^{+\infty}\exp\left(-\frac{(\sigma^2/2+1)\Delta^2t^2}{64}\right)\cdot\sum_{j=2}^{k}\1{\lfloor j^{1/d}\rfloor = t}\\
&\le \sum_{t=1}^{+\infty}\exp\left(-\frac{(\sigma^2/2+1)\Delta^2t^2}{64}\right)\cdot (t+1)^d. \tag{$\lfloor j^{1/d}\rfloor = t \implies j < (t + 1)^d$}
\end{align*}
In the last summation, the ratio between the $(t+1)$-th term and the $t$-th term is given by
\[
    \exp\left(-\frac{(\sigma^2/2+1)\Delta^2(2t+1)}{64}\right)\cdot\left(1 + \frac{1}{t+1}\right)^d
\le \exp\left(-\frac{3(\sigma^2/2+1)\Delta^2}{64}\right)\cdot\left(\frac{3}{2}\right)^d,
\]
which is smaller than $1/2$ under the assumption that $(\sigma^2/2+1)\Delta^2 \ge 100\min\{\ln k, d\}$. Thus, the summation is at most twice the first term, i.e.,
    $S_1 \le 2\cdot e^{-(\sigma^2/2+1)\Delta^2/64}\cdot 2^d$.
\end{proof}

Now we restate and prove Claim~\ref{claim:S2-bound}:

\vspace{6pt}
\noindent\textbf{Claim~\ref{claim:S2-bound}}{\it~~
We have
    \[
        S_2 \le \begin{cases}
        2, & \Delta^2 \ge 100d,\\
        1 + \frac{267d}{\Delta^2} \cdot \max\left\{\left(\frac{32d}{\Delta^2}\right)^{d/2}, 1\right\}, & \Delta^2 < 100d.
        \end{cases}
    \]
Furthermore, $S_2 \le 10\cdot\min\left\{k, 1 + \left(\frac{32d}{\Delta^2}\right)^{d/2+1}\right\}$.
}\vspace{6pt}

\begin{proof}[Proof of Claim~\ref{claim:S2-bound}]~~Using $\|\mu_j\|_2 \ge \Delta j^{1/d}/4$ for $j \ge 2$, we can upper bound $S_2$ as follows:
    \begin{align*}
        S_2
    =   \sum_{j=1}^{k}e^{-\|\mu_j\|_2^2/4}
    &\le 1 + \sum_{j=2}^{k}\exp\left(-\frac{\Delta^2j^{2/d}}{64}\right)\\
    &\le 1 + \sum_{j=2}^{k}\exp\left(-\frac{\Delta^2\lfloor j^{2/d}\rfloor}{64}\right) \tag{$j^{2/d} \ge \lfloor j^{2/d}\rfloor$}\\
    &=   1 + \sum_{t=1}^{\lfloor k^{2/d}\rfloor}\exp(-\Delta^2 t/64)\cdot\sum_{j=2}^{k}\1{\lfloor j^{2/d}\rfloor = t}\\
    &\le 1 + \sum_{t=1}^{\lfloor k^{2/d}\rfloor}\exp(-\Delta^2 t/64)\cdot (t+1)^{d/2}. \tag{$\lfloor j^{2/d}\rfloor = t \implies j < (t + 1)^{d/2}$}
    \end{align*}
    In the last summation, the ratio between the $(t+1)$-th term and the $t$-th term is given by
    \[
        R_t \coloneqq
    \frac{\exp(-\Delta^2 (t+1)/64)\cdot (t+2)^{d/2}}{\exp(-\Delta^2 t/64)\cdot (t+1)^{d/2}}
    =   e^{-\Delta^2/64}\cdot\left(1 + \frac{1}{t+1}\right)^{d/2}.
    \]
    
    \paragraph{The first case.} When $\Delta^2 \ge 100d$, for every $t \ge 1$ we have
    \[
        R_t \le
        \exp\left(-\frac{100d}{64}\right) \cdot \left(\frac{3}{2}\right)^{d/2}
        \le \left(e^{-25/16}\cdot\sqrt{\frac{3}{2}}\right)^d
        < 1/2.
    \]
    This implies that the summation $\sum_{t=1}^{\lfloor k^{2/d}\rfloor}\exp(-\Delta^2 t/64)\cdot (t+1)^{d/2}$ is dominated by twice its first term, i.e.,
    \[
        S_2 \le 1 + 2\cdot e^{-\Delta^2/64}\cdot 2^{d/2}
    \le 1 + 2\cdot\left(e^{-25/16}\cdot\sqrt{2}\right)^d
    < 2.
    \]
    
    \paragraph{The second case.} By a straightforward calculation,
    \[
        R_t \le e^{-\Delta^2/128}
    \iff 1 + \frac{1}{t+1} \le \exp\left(\frac{\Delta^2}{64d}\right)
    \impliedby
    e^{\frac{1}{t+1}}
    \le \exp\left(\frac{\Delta^2}{64d}\right)
    \iff t + 1 \ge \frac{64d}{\Delta^2}.
    \]
    In other words, the terms in $\sum_{t=1}^{\lfloor k^{2/d}\rfloor}\exp(-\Delta^2 t/64)\cdot (t+1)^{d/2}$ start to decay at a rate of at least $e^{\Delta^2/128}$ after the first $O(d/\Delta^2)$ terms. Therefore, we have
    \[
        S_2
    \le 1 + \left(\frac{64d}{\Delta^2} + \frac{1}{1 - e^{-\Delta^2/128}}\right)\cdot\max_{t \ge 1}\left[\exp(-\Delta^2 t/64)\cdot (t+1)^{d/2}\right].
    \]
    Using the inequality $1 - e^{-x} \ge \frac{e-1}{e}\cdot\min\{x, 1\}$ for $x \ge 0$, we have
    \begin{align*}
        \frac{1}{1 - e^{-\Delta^2/128}}
    &\le \frac{e}{e-1} \cdot \max\left\{\frac{128}{\Delta^2}, 1\right\}\\
    &\le \frac{e}{e-1} \cdot \max\left\{\frac{128d}{\Delta^2}, \frac{100d}{\Delta^2}\right\}
    <   \frac{203d}{\Delta^2}. \tag{$1 \le d$ and $\Delta^2 < 100d$}
    \end{align*}
    On the other hand, elementary calculus shows that the function $t \mapsto e^{-\Delta^2t/64}\cdot(t+1)^{d/2}$ defined over $[0, +\infty)$ is maximized at $t^* = \frac{32d}{\Delta^2} - 1$ if $\frac{32d}{\Delta^2} \ge 1$, and at $t^* = 0$ otherwise. In either case, the maximum value is upper bounded by $\max\{\left(\frac{32d}{\Delta^2}\right)^{d/2}, 1\}$. Therefore, we conclude that in the second case,
    \[
        S_2 \le 1 + \frac{267d}{\Delta^2} \cdot \max\left\{\left(\frac{32d}{\Delta^2}\right)^{d/2}, 1\right\}.
    \]

    \paragraph{The ``furthermore'' part.} If $\frac{d}{\Delta^2} \le \frac{1}{100}$, the first case implies that
    \[
        S_2
    \le 2
    \le 10\cdot\left[1 + \left(\frac{32d}{\Delta^2}\right)^{d/2+1}\right].
    \]
    If $\frac{d}{\Delta^2} \in (\frac{1}{100}, \frac{1}{32}]$, the bound for the second case reduces to
    \[
        S_2
    \le 1 + \frac{267d}{\Delta^2}
    \le 1 + \frac{267}{32}
    <   10 \cdot \left[1 + \left(\frac{32d}{\Delta^2}\right)^{d/2+1}\right].
    \]
    If $\frac{d}{\Delta^2} > \frac{1}{32}$, the bound for the second case implies
    \[
        S_2
    \le 1 + \frac{267}{32}\cdot\left(\frac{32d}{\Delta^2}\right)^{d/2+1}
    <   10\cdot \left[1 + \left(\frac{32d}{\Delta^2}\right)^{d/2+1}\right].
    \]
    Finally, the ``furthermore'' part follows from the above and the observation that $k$ is a trivial upper bound on $S_2$.
\end{proof}

\section{Proof of Lower Bound}\label{sec:lower}
We first state the formal version of \Cref{thm:lower-gaussian-informal}:

\begin{theorem}\label{thm:lower}
    Suppose that $k \ge 3$, $d \le \frac{\ln k}{\ln\ln k}$, $C \ge 100$, and $\ln(8eC) \le (1 - 1/e)\frac{\ln k}{d}$. Then, there are two mixtures $\tilde P$ and $\tilde Q$ of $k$ spherical Gaussians in $\R^d$, such that for some $\Delta = \Theta\left(\frac{d}{\sqrt{C\log k}}\right)$:
    \begin{itemize}
        \item Both $\tilde P$ and $\tilde Q$ are $\Delta$-separated.
        \item The means of $\tilde P$ and $\tilde Q$ are not $(\Delta/2)$-close.
        \item The total variation distance between $\tilde P$ and $\tilde Q$ satisfies $\dTV(\tilde P, \tilde Q) \le 2k^{-C}$.
    \end{itemize}
\end{theorem}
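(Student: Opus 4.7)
The plan is to construct two mean sets $S_P$ and $S_Q$ of size $k$ that share a common ``background'' of $k-N$ points and differ in $N \ll k$ ``interesting'' points whose $\pm$ perturbations of the same base produce mixtures with matching low-order moments, following the Borsuk--Ulam strategy from \cite{HP15}. First I would pick base points $u_1, \ldots, u_N$ packed in a ball $B(0, r)$ so that $\|u_j - u_{j'}\|_2 \ge \Delta + 2\rho$ for $j \ne j'$; such a packing exists whenever $r \gtrsim (\Delta + 2\rho)\, N^{1/d}$. The construction then depends on a perturbation vector $x = (x_1, \ldots, x_N) \in \R^{Nd}$ of norm $\rho$, producing candidate interesting mean sets $\{u_j + x_j\}_{j=1}^N$ and $\{u_j - x_j\}_{j=1}^N$, each padded with a common collection of $k-N$ background means placed $\Delta$-separated and far from $B(0, r)$ so that they contribute identically to $\tilde P$ and $\tilde Q$.

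The core step is to find $x$ for which the two mixtures match moments up to order $T$. Define the odd polynomial map $g : \R^{Nd} \to \R^m$ by
\[
    g(x) = \Bigl(\sum_{j=1}^{N}\bigl[(u_j + x_j)^{\otimes t} - (u_j - x_j)^{\otimes t}\bigr]\Bigr)_{t=1, 2, \ldots, T},
\]
where $m = \sum_{t=1}^{T}\binom{d+t-1}{t}$ is the dimension of the relevant symmetric-tensor space. Restricting $g$ to the sphere of radius $\rho$ in $\R^{Nd}$ yields an odd continuous map $\S^{Nd-1} \to \R^m$. Provided $Nd - 1 \ge m$, this map must vanish somewhere, else its normalization would be an odd continuous map $\S^{Nd-1} \to \S^{m-1}$, contradicting \Cref{thm:borsuk-ulam}. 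This yields $x^*$ with $\|x^*\|_2 = \rho$ and $g(x^*) = 0$, i.e., the corresponding mean sets have matching first $T$ tensor moments.

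Next I would verify separation and non-closeness. Since $\|x_j^*\|_2 \le \rho$ and the $u_j$'s are $(\Delta + 2\rho)$-separated, the perturbed interesting points remain $\Delta$-separated within each mixture, and the background is arranged to preserve $\Delta$-separation globally. For non-$(\Delta/2)$-closeness, the far-away background forces any candidate $(\Delta/2)$-close bijection between $S_P$ and $S_Q$ to send the background identically to itself, hence to bijectively pair $\{u_j + x_j^*\}_{j}$ with $\{u_j - x_j^*\}_{j}$ via some permutation $\pi$. Pigeonhole on $\|x^*\|_2 = \rho$ yields some $j$ with $\|x_j^*\|_2 \ge \rho/\sqrt{N}$; choosing $\rho = \Theta(\Delta\sqrt{N})$ makes $2\|x_j^*\|_2 > \Delta/2$, ruling out $\pi = \mathrm{id}$. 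For $\pi \ne \mathrm{id}$, pick $j \ne \pi(j)$; the triangle inequality gives $\|(u_j + x_j^*) - (u_{\pi(j)} - x_{\pi(j)}^*)\|_2 \ge (\Delta + 2\rho) - 2\rho = \Delta > \Delta/2$.

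Finally, I would bound the TV distance using the matched moments. The background cancels, so $\tilde P - \tilde Q = \frac{1}{k}\sum_{j=1}^{N}[\N(u_j + x_j^*, I_d) - \N(u_j - x_j^*, I_d)]$. Expanding the Gaussian density via Hermite polynomials and using that the first $T$ tensor moments of the signed measure $\sum_j (\delta_{u_j + x_j^*} - \delta_{u_j - x_j^*})$ vanish yields $\|\tilde P - \tilde Q\|_1 \lesssim \frac{N}{k}\cdot \frac{(2R)^{T+1}}{(T+1)!}$, with $R = r + \rho$ bounding the norm of any interesting mean. The main obstacle will be balancing the parameters so that Borsuk--Ulam's dimension condition $Nd \ge \binom{d+T}{T}$, the packing condition $r \gtrsim (\Delta + \rho)\, N^{1/d}$, the non-closeness condition $\rho \gtrsim \Delta\sqrt{N}$, and the TV condition $R^{T+1}/(T+1)! \ll k^{-C}$ all hold while $\Delta = \Theta(d/\sqrt{C\log k})$. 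The hypothesis $d \le (\ln k)/(\ln\ln k)$ together with $\ln(8eC) \le (1 - 1/e)(\ln k)/d$ leaves just enough slack: taking $T = \Theta(C\log k / \log\log k)$ makes $\binom{d+T}{T}$ sub-polynomial in $k$ (keeping $N \ll k$), while the target $\Delta = \Theta(d/\sqrt{C\log k})$ is exactly tuned so that $R$ stays small enough for the factorial decay to beat the $R^{T+1}$ growth.
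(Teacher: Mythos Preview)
Your overall architecture (Borsuk--Ulam perturbation, moment matching, background padding) follows the paper, but the non-closeness step contains a genuine gap that wrecks the parameter balance. Fixing the perturbation to the Euclidean sphere $\|x^*\|_2=\rho$ and pigeonholing only guarantees $\max_j\|x_j^*\|_2\ge\rho/\sqrt{N}$, so forcing $2\|x_j^*\|_2>\Delta/2$ requires $\rho\gtrsim\Delta\sqrt{N}$. This inflates the packing radius to $r\gtrsim\rho\,N^{1/d}\gtrsim\Delta\,N^{1/2+1/d}$, and since the Borsuk--Ulam dimension count already forces $N\gtrsim(e(T+d)/d)^d$, the extra $\sqrt{N}$ makes $R$ far too large for $(2R)^{T+1}/(T+1)!$ to reach $k^{-C}$ at the target $\Delta=\Theta(d/\sqrt{C\log k})$. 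Your attempted compensation of shrinking $T$ to $\Theta(C\log k/\log\log k)$ does not rescue this: even for constant $d$, the constraints $\rho\gtrsim\Delta\sqrt{N}$, $R\gtrsim\rho\,N^{1/d}$, and $(eR/T)^T\lesssim k^{-C}$ become mutually inconsistent once you plug in $\Delta=\Theta(d/\sqrt{C\log k})$.

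The missing idea is the paper's normalization. Instead of restricting to $\|x\|_2=\rho$, define the moment map using the perturbed points $\mu_i+c(x)x_i$ with $c(x)\coloneqq\Delta/\max_i\|x_i\|_2$. Since $c$ is continuous and even on $\S^{Nd-1}$, the resulting map is still continuous and Borsuk--Ulam still yields a point $x$ where the $\pm$ configurations have identical first-$T$ moments. But now the largest block satisfies $\|c(x)x_{i^*}\|_2=\Delta$ exactly, so $\mu_{i^*}^{(P)}$ is at distance $2\Delta$ from $\mu_{i^*}^{(Q)}$ and at distance $\ge\Delta$ from every other $\mu_j^{(Q)}$: non-$(\Delta/2)$-closeness comes for free, while every individual perturbation stays bounded by $\Delta$. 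With this normalization the packing radius remains $R=\Theta(\Delta N^{1/d})$ with no $\sqrt{N}$ blowup, the correct moment order is $T=\Theta(C\log k)$ (not $C\log k/\log\log k$), $R=\Theta(\sqrt{C\log k})$, and the TV bound closes via the paper's Fourier/$\ell_2$ calculation.
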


Setting $\omega(1) = C = k^{o(1/d)}$ in Theorem~\ref{thm:lower} shows that with a separation of $\Delta = o(d/\sqrt{\log k})$, $k^{\omega(1)}$ samples are needed to recover the means up to an $O(\Delta)$ error.

The proof of Theorem~\ref{thm:lower} applies standard moment matching techniques in the literature~\cite{HP15,RV17}. The proof proceeds by first constructing two sets of $\le k$ points in $\R^d$ that have the same lower order moments, and then showing that these matching moments imply that their convolutions with a standard Gaussian are close in TV-distance.

The following lemma states that we can choose the centers of two mixtures $\tilde P$ and $\tilde Q$ such that their low-degree mean moments are identical, whereas their parameters are $\Delta$-apart from each other.

\begin{lemma}\label{lemma:matching-moments}
    Suppose that $R > \Delta > 0$ and $\left[\frac{e(t+d)}{d}\right]^d \le N \le \left(\frac{R}{3\Delta}\right)^d$. There exist $2N$ points $\muP_1, \ldots, \muP_N$, $\muQ_1, \ldots, \muQ_N$ in $B(0, 2R)$ such that:
    \begin{itemize}
        \item $\left\|\muP_i - \muP_j\right\|_2 \ge \Delta$ and $\left\|\muQ_i - \muQ_j\right\|_2 \ge \Delta$ for $i \ne j$.
        \item For any permutation $\pi$ over $[N]$, $\max_{i\in[N]}\left\|\muP_i - \muQ_{\pi(i)}\right\|_2 \ge \Delta$.
        \item For any $t' \in [t]$, $\frac{1}{N}\sum_{i=1}^{N}\left[\muP_i\right]^{\otimes t'} = \frac{1}{N}\sum_{i=1}^{N}\left[\muQ_i\right]^{\otimes t'}$.
    \end{itemize}
\end{lemma}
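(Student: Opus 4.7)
The plan is to construct $\{\muP_i\}$ and $\{\muQ_i\}$ as antipodal perturbations of a common base set. Pick $3\Delta$-separated base points $x_1, \dots, x_N \in B(0, R)$ and set $\muP_i := x_i + v_i$ and $\muQ_i := x_i - v_i$ for some perturbation $v = (v_1, \dots, v_N) \in \R^{Nd}$ satisfying $\|v_i\|_2 \le \Delta$ for every $i$. The role of Borsuk--Ulam will be to select $v$ so that (a) all moments of order $\le t$ match, i.e.\ $\sum_i (x_i + v_i)^{\otimes t'} = \sum_i (x_i - v_i)^{\otimes t'}$ for every $t' \in [t]$, and (b) $\max_i \|v_i\|_2 = \Delta$ (which is what handles the identity permutation in the second conclusion).

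Producing $x_1, \dots, x_N$ is a standard greedy packing argument: take a maximal $3\Delta$-separated subset $\mathcal{P} \subseteq B(0, R)$. Maximality forces $B(0, R) \subseteq \bigcup_{x \in \mathcal{P}} B(x, 3\Delta)$, so comparing volumes gives $|\mathcal{P}| \ge (R/(3\Delta))^d \ge N$, and any $N$ of these points suffice.

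The main step is the Borsuk--Ulam argument. Let $F \colon \R^{Nd} \to \R^D$ be the moment map whose entries are the independent coordinates of the symmetric tensors $\sum_{i=1}^N (x_i + v_i)^{\otimes t'}$ across $t' = 1, \dots, t$; the codomain has dimension $D = \sum_{t'=1}^t \binom{d+t'-1}{t'} = \binom{d+t}{d} - 1$, and (a) is precisely the equation $F(v) = F(-v)$. To simultaneously enforce (b), I restrict $v$ to the corner manifold $\Omega := \{v \in \R^{Nd} : \max_i \|v_i\|_2 = \Delta\}$, the boundary of the product of balls $B(0, \Delta)^N \subseteq \R^{Nd}$. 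The antipodal map $v \mapsto -v$ acts freely on $\Omega$, and the odd radial map $\psi(u) := (\Delta / \max_i \|u_i\|_2)\,u$ is a $\mathbb{Z}/2$-equivariant homeomorphism $\S^{Nd-1} \to \Omega$ (continuous, odd, and a bijection between compact Hausdorff spaces). Now \Cref{lemma:factorial} gives $\binom{d+t}{d} \le (e(d+t)/d)^d \le N \le Nd$, hence $Nd - 1 \ge D$, so \Cref{thm:borsuk-ulam} applied to $F \circ \psi$ yields $u^* \in \S^{Nd-1}$ with $F(\psi(u^*)) = F(\psi(-u^*)) = F(-\psi(u^*))$. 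Setting $v^* := \psi(u^*) \in \Omega$ and $\muP_i := x_i + v_i^*$, $\muQ_i := x_i - v_i^*$ completes the construction.

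Verification is then routine. The bound $\|v_i^*\|_2 \le \Delta < R$ places every $\muP_i, \muQ_i$ in $B(0, 2R)$. Combining $\|v_i^*\|_2 \le \Delta$ with $\|x_i - x_j\|_2 \ge 3\Delta$ and the triangle inequality gives $\Delta$-separation of each perturbed set. For any non-identity permutation $\pi$, picking $i$ with $\pi(i) \ne i$ yields $\|\muP_i - \muQ_{\pi(i)}\|_2 \ge \|x_i - x_{\pi(i)}\|_2 - 2\Delta \ge \Delta$; for $\pi = \mathrm{id}$, the constraint $v^* \in \Omega$ supplies $i^*$ with $\|v_{i^*}^*\|_2 = \Delta$, so $\|\muP_{i^*} - \muQ_{i^*}\|_2 = 2\Delta$. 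The main subtlety, I expect, is precisely the choice of the domain $\Omega$ in place of a round sphere: parameterizing $v$ on $\S^{Nd-1}(r) \subseteq \R^{Nd}$ would only force $\max_i \|v_i^*\|_2 \ge r/\sqrt{N}$, which is far too small to provide the needed $\Delta$-gap once $N$ is large. Using the $\ell_\infty$-type boundary $\Omega$ together with its equivariant identification with $\S^{Nd-1}$ is what sidesteps this obstacle while still keeping the codomain dimension low enough for Borsuk--Ulam to apply.
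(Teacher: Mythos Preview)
Your proof is correct and takes essentially the same approach as the paper's: the rescaling map $\psi(u) = (\Delta/\max_i\|u_i\|_2)\,u$ is exactly the paper's $c(x)\cdot x$, so your $F\circ\psi$ coincides with the paper's $f$, and the remainder of the argument (packing, dimension count via \Cref{lemma:factorial}, Borsuk--Ulam, and the triangle-inequality verifications) matches line for line. Your closing remark about why the round sphere alone would not suffice is a nice articulation of the point the paper leaves implicit.
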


\begin{proof}[Proof of Lemma~\ref{lemma:matching-moments}]~~Let $S$ be an arbitrary maximal $(3\Delta)$-separated subset of $B(0, R)$. Then, the collection $\{B(\mu, 3\Delta): \mu \in S\}$ must cover the ball $B(0, R)$; otherwise, we could add to $S$ the point that is not covered. This implies $|S| \ge \frac{\Vol_d(R)}{\Vol_d(3\Delta)} = (\frac{R}{3\Delta})^d \ge N$, so we can choose $N$ arbitrary points $\mu_1, \mu_2, \ldots, \mu_N$ from $S$.
    
In the following, we construct the point sets $\{\muP_i\}$ and $\{\muQ_i\}$ by slightly perturbing $\{\mu_1, \ldots, \mu_N\}$. Let $M = \binom{t + d}{d}$. For any $t' \ge 1$, the degree-$t'$ moment tensor in $d$ dimensions has exactly $\binom{d + t' - 1}{d - 1}$ distinct entries. So, among the first $t$ moment tensors, the total number of distinct entries is
\[
    \binom{d}{d - 1} + \binom{d + 1}{d - 1} + \cdots + \binom{d + t - 1}{d - 1}
=   \binom{d + t}{d} - 1 = M - 1.
\]

We define a function $f: \S^{Nd-1} \to \R^{M-1}$ as follows. Given $x \in \S^{Nd-1}$, we group the $Nd$ coordinates of $x$ into $N$ groups, and view them as $N$ points $x_1, x_2, \ldots, x_N \in \R^d$. Let $c(x) \coloneqq \frac{\Delta}{\max_{i\in[N]}\|x_i\|_2}$ and $\mux_i \coloneqq \mu_i + c(x)x_i$. ($c(x)$ is well-defined, since $x \in \S^{Nd - 1}$ guarantees that some $x_i$ is non-zero.) Finally, $f(x) \in \R^{M - 1}$ is defined as the concatenation of the $M - 1$ entries in the first $t$ moment tensors of the uniform distribution over $\{\mux_i\}_{i \in [N]}$.

It can be easily verified that $f$ is continuous. Furthermore, our assumption that $\left[\frac{e(t+d)}{d}\right]^d \le N$ together with Lemma~\ref{lemma:factorial} implies
\[
    Nd - 1
\ge N - 1 \ge \left[\frac{e(t+d)}{d}\right]^d - 1 \ge \binom{t + d}{d} - 1 = M - 1.
\]
Thus, by the Borsuk--Ulam theorem (Theorem~\ref{thm:borsuk-ulam}), there exists $x \in \S^{Nd-1}$ such that $f(x) = f(-x)$.

We prove the lemma by setting $\muP_i = \mux_i$ and $\muQ_i = \mu^{(-x)}_i$. By definition of $c(x)$, we have $\|\muP_i - \mu_i\|_2 = \|\muQ_i - \mu_i\|_2 = c(x)\|x_i\|_2 \le \Delta$ for every $i \in [N]$. Thus, $\|\muP_i\|_2 \le \|\mu_i\|_2 + \|\muP_i - \mu_i\|_2 \le R + \Delta \le 2R$ and similarly $\muQ_i \in B(0, 2R)$ for every $i \in [N]$. Furthermore, for any $i \ne j$,
\[
    \|\muP_i - \muP_j\|_2
\ge \|\mu_i - \mu_j\|_2 - \|\muP_i - \mu_i\|_2 - \|\muP_j - \mu_j\|_2
\ge 3\Delta - \Delta - \Delta
=   \Delta,
\]
and similarly $\|\muQ_i - \muQ_j\|_2 \ge \Delta$. This proves the first condition.

Moreover, we note that for $i^* \in \argmax_{i\in[N]}\|x_i\|_2$, it holds that $c(x)\|x_{i^*}\|_2 = \Delta$. Thus, $\|\muP_{i^*} - \muQ_{i^*}\|_2 = 2c(x)\|x_{i^*}\|_2 = 2\Delta$. On the other hand, for any $j \ne i^*$,
\[
    \|\muP_{i^*} - \muQ_{j}\|_2
\ge \|\mu_{i^*} - \mu_j\|_2 - \|\muP_{i^*} - \mu_{i^*}\|_2 - \|\muQ_j - \mu_j\|_2
\ge 3\Delta - \Delta - \Delta
=   \Delta,
\]
so $\muP_{i^*}$ cannot be matched to any $\muQ_j$ without incurring an error of $\Delta$. This proves the second condition.

Finally, the third condition follows from $f(x) = f(-x)$ and our definition of $f$.
\end{proof}

The following lemma allows us to relate the total variation distance between $\tilde P$ and $\tilde Q$ to their $\ell_2$ distance, which is more Fourier-friendly.

\begin{lemma}\label{lemma:l1-to-l2}
    Let $\tilde P$ and $\tilde Q$ be two mixtures of spherical Gaussians with all means contained in $B(0, 2R)$. Then, for any $\eps \in (0, 1)$ and $R' = 2R + \sqrt{d} + \sqrt{2\ln(1/\eps)}$,
    \[
        \dTV(\tilde P, \tilde Q)
    \le \eps + \frac{\sqrt{\Vol_d(R')}}{2}\|\tilde P - \tilde Q\|_2,
    \]
    where $\Vol_d(r)$ denotes the volume of a $d$-ball with radius $r$.
\end{lemma}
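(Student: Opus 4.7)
\medskip

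\noindent\textbf{Proof proposal for Lemma~\ref{lemma:l1-to-l2}.}
My plan is to write $\dTV(\tilde P, \tilde Q) = \tfrac{1}{2}\|\tilde P - \tilde Q\|_1$, split the integral of $|\tilde P - \tilde Q|$ into the contribution inside the ball $B(0, R')$ and its complement, and handle the two parts separately. Inside $B(0, R')$, the $\ell_1$ norm is controlled by the $\ell_2$ norm via Cauchy--Schwarz, while outside the ball the contribution is controlled by the Gaussian tail bound since every component of $\tilde P$ and $\tilde Q$ is concentrated near its mean.

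\medskip

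\noindent\textbf{Step 1: Cauchy--Schwarz on the inside.}
For the integral over $B(0,R')$, applied to the non-negative function $|\tilde P - \tilde Q|$ with the indicator of $B(0, R')$, Cauchy--Schwarz gives
\[
    \int_{B(0,R')} |\tilde P(x) - \tilde Q(x)|\,\mathrm{d}x
\le \sqrt{\Vol_d(R')} \cdot \left(\int_{B(0,R')} (\tilde P - \tilde Q)^2\,\mathrm{d}x\right)^{1/2}
\le \sqrt{\Vol_d(R')}\cdot \|\tilde P - \tilde Q\|_2.
\]
Multiplying by $\tfrac12$ gives the second term in the claimed bound.

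\medskip

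\noindent\textbf{Step 2: Tail bound on the outside.}
For the integral over $\R^d\setminus B(0,R')$, the triangle inequality $|\tilde P - \tilde Q| \le \tilde P + \tilde Q$ reduces the problem to showing that each of $\tilde P$ and $\tilde Q$ puts mass at most $\eps$ outside $B(0, R')$. Since both are convex combinations of $\N(\mu, I_d)$ for $\|\mu\|_2 \le 2R$, it suffices to bound $\pr{X\sim \N(\mu, I_d)}{\|X\|_2 \ge R'}$ uniformly for such $\mu$. By the triangle inequality, this probability is at most $\pr{}{\|X - \mu\|_2 \ge R' - 2R} = \pr{}{\|X - \mu\|_2^2 \ge (\sqrt{d} + \sqrt{2\ln(1/\eps)})^2}$, and $\|X - \mu\|_2^2$ is $\chi^2$-distributed with $d$ degrees of freedom. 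Expanding the square gives threshold $d + 2\sqrt{2d\ln(1/\eps)} + 2\ln(1/\eps)$, which is at least $d + 2\sqrt{d\ln(1/\eps)} + 2\ln(1/\eps)$. Applying Lemma~\ref{lemma:chi-square} with $t = \ln(1/\eps)$ yields the bound $\eps$. Hence $\frac{1}{2}\int_{\R^d\setminus B(0,R')} |\tilde P - \tilde Q|\,\mathrm{d}x \le \frac{1}{2}(\eps + \eps) = \eps$.

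\medskip

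\noindent\textbf{Assembly and main obstacle.}
Adding the two contributions gives the claimed bound. The only non-routine step is matching the threshold produced by the chi-squared tail bound of Lemma~\ref{lemma:chi-square} to the algebraic form $(\sqrt{d}+\sqrt{2\ln(1/\eps)})^2$, which is why the factor $\sqrt{2}$ (rather than $1$) appears in front of $\sqrt{\ln(1/\eps)}$ in the definition of $R'$. Everything else -- the Cauchy--Schwarz split and the triangle inequality reduction to a single Gaussian -- is standard.
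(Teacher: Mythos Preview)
Your proposal is correct and matches the paper's proof essentially step for step: the same split of the $\ell_1$ integral at radius $R'$, Cauchy--Schwarz on the inside, and the $\chi^2$ tail bound (Lemma~\ref{lemma:chi-square}) on the outside after reducing to a single Gaussian component. The paper even handles the threshold algebra the same way, noting $(\sqrt{d}+\sqrt{2\ln(1/\eps)})^2 \ge d + 2\sqrt{d\ln(1/\eps)} + 2\ln(1/\eps)$ before applying the tail lemma.
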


\begin{proof}[Proof of Lemma~\ref{lemma:l1-to-l2}]~~By definition of the TV-distance,
    \begin{equation}\begin{split}\label{eq:dTV-decomp}
        \dTV(\tilde P, \tilde Q)
    &=   \frac{1}{2}\int \left|\tilde P(x) - \tilde Q(x)\right|~\mathrm{d}x\\
    &\le \frac{1}{2}\int_{\|x\|_2 \ge R'}\left[\tilde P(x) + \tilde Q(x)\right]~\mathrm{d}x + \frac{1}{2}\int_{\|x\|_2 \le R'}\left|\tilde P(x) - \tilde Q(x)\right|~\mathrm{d}x.
    \end{split}\end{equation}
    
    To bound the first term above, we note that $\int_{\|x\|_2 \ge R'}\tilde P(x)~\mathrm{d}x$ is exactly $\tilde P(\R^d\setminus B(0, R'))$, the probability that a sample from $\tilde P$ has norm greater than $R'$. Since the mean of every cluster of $\tilde P$ is contained in $B(0, 2R)$, this probability is upper bounded by the probability that a standard Gaussian random variable has norm $\ge R' - 2R$:
    \begin{align*}
        \tilde P\left(\R^d \setminus B(0, R')\right)
    &\le \pr{X \sim N(0, I_d)}{\|X\|_2 \ge R' - 2R}\\
    &=   \pr{X \sim \chi^2(d)}{X \ge (\sqrt{d} + \sqrt{2\ln(1/\eps)})^2}\\
    &\le \pr{X \sim \chi^2(d)}{X \ge d + 2\sqrt{d\ln(1/\eps)} + 2\ln(1/\eps)}\\
    &\le \eps. \tag{Lemma~\ref{lemma:chi-square}}
    \end{align*}
    Similarly, we have $\tilde Q\left(\R^d \setminus B(0, R')\right) \le \eps$, so the first term of Equation~\eqref{eq:dTV-decomp} is upper bounded by $\eps$.
    
    The second term of Equation~\eqref{eq:dTV-decomp} can be bounded in terms of the $\ell_2$ distance between $\tilde P$ and $\tilde Q$ using Cauchy-Schwarz:
    \begin{align*}
        \int_{\|x\|_2 \le R'}\left|\tilde P(x) - \tilde Q(x)\right|~\mathrm{d}x
    &\le \sqrt{\int_{\|x\|_2 \le R'}\left[\tilde P(x) - \tilde Q(x)\right]^2~\mathrm{d}x} \cdot \sqrt{\int_{\|x\|_2 \le R'}1~\mathrm{d}x}\\
    &\le \|\tilde P - \tilde Q\|_2\cdot\sqrt{\Vol_d(R')}.
    \end{align*}
    Plugging the above into Equation~\eqref{eq:dTV-decomp} proves $\dTV(\tilde P, \tilde Q) \le \eps + \frac{\sqrt{\Vol_d(R')}}{2}\|\tilde P - \tilde Q\|_2$.
\end{proof}

The following lemma upper bounds the $\ell_2$-distance between $\tilde P$ and $\tilde Q$ under the assumption that their low-degree moments are equal.

\begin{lemma}\label{lemma:l2-upper-bound}
    Suppose that $t/(4R) \ge \sqrt{5d}$, the supports of $P$ and $Q$ are contained in $B(0, 2R)$, and the first $t$ moment tensors of $P$ and $Q$ are equal. Let $\tilde P = P \ast \N(0, I_d)$ and $\tilde Q = Q \ast \N(0, I_d)$. We have
    \[
        \|\tilde P -\tilde Q\|_2^2
    \le 4\exp\left(-\frac{t^2}{80R^2}\right)
    +   2\left(\frac{t}{4R}\right)^d\cdot\frac{(2R)^{2t}}{t!}.
    \]
\end{lemma}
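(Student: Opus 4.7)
The plan is to pass to the Fourier domain and exploit that convolution with a Gaussian becomes multiplication by $e^{-\|\xi\|_2^2/2}$ on the Fourier side. With the convention $\hat f(\xi) = \int f(x)e^{-i\xi^\top x}\,dx$, Parseval's identity gives
\[
\|\tilde P - \tilde Q\|_2^2 = \frac{1}{(2\pi)^d}\int_{\R^d}|\hat P(\xi) - \hat Q(\xi)|^2\, e^{-\|\xi\|_2^2}\,d\xi,
\]
where $\hat P(\xi) = \Ex{X\sim P}{e^{-i\xi^\top X}}$ is the conjugate characteristic function. I would then split this integral at the threshold $L := t/(4R)$ and bound the high- and low-frequency pieces by qualitatively different arguments.

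For the high-frequency piece ($\|\xi\|_2 > L$), I apply the trivial bound $|\hat P - \hat Q|\le 2$. After rescaling $\xi = \eta/\sqrt{2}$, the remaining Gaussian-weighted integral reduces to $\pi^{d/2}\cdot\Pr_{\eta\sim\N(0,I_d)}[\|\eta\|_2^2>2L^2]$. The hypothesis $t/(4R)\ge\sqrt{5d}$ yields $2L^2\ge 10d\ge 5d$, so Lemma~\ref{lemma:chi-square} gives a chi-squared tail bound of $e^{-2L^2/5}=e^{-t^2/(40R^2)}$. Combined with the Parseval normalization $(2\pi)^{-d}\pi^{d/2}=(4\pi)^{-d/2}\le 1$, the high-frequency contribution is at most $4e^{-t^2/(40R^2)}\le 4e^{-t^2/(80R^2)}$, matching the first term of the bound.

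For the low-frequency piece ($\|\xi\|_2\le L$), I would invoke Taylor's theorem with integral remainder applied to $e^{ix}$ at $x = \xi^\top X$. Since moments of orders $0,1,\ldots,t-1$ agree between $P$ and $Q$ (this follows from the hypothesis that orders $1,\ldots,t$ agree, together with the trivial agreement at order $0$), the first $t$ Taylor terms cancel in the difference, and the support bound $\|X\|_2\le 2R$ yields
\[
|\hat P(\xi)-\hat Q(\xi)|\le\frac{2(2R\|\xi\|_2)^t}{t!}.
\]
Squaring, I would bound $(2R\|\xi\|_2)^{2t}e^{-\|\xi\|_2^2}$ by its pointwise maximum $(4R^2t/e)^t$ (attained at $\|\xi\|_2=\sqrt{t}$) and the domain by $\Vol_d(L)\le(2L)^d=(t/(2R))^d$. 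Stirling's inequality $t!\ge\sqrt{2\pi t}(t/e)^t$ then lets me rewrite $(4R^2t/e)^t/(t!)^2$ as $(2R)^{2t}/t!$ times a small absolute factor, which when combined with the $(2\pi)^{-d}$ Parseval factor and the volume factor $(t/(2R))^d$ collapses to the target form $(t/(4R))^d\cdot(2R)^{2t}/t!$.

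The main obstacle is bookkeeping: several independent numerical constants (the $(2\pi)^{-d}$ from Parseval, the $\pi^{d/2}$ from the Gaussian rescaling, the factor $2^d$ from $\Vol_d(L)\le(2L)^d$, and the $\sqrt{2\pi t}$ from Stirling) must combine in just the right way to collapse into the clean constants $4$ and $2$ stated in the lemma. A secondary subtlety is that one should \emph{not} push the Taylor expansion one step further to exploit the full strength of the moment-matching hypothesis: doing so would replace $(2R\|\xi\|_2)^t/t!$ by $(2R\|\xi\|_2)^{t+1}/(t+1)!$, and after squaring the resulting $((t+1)!)^2$ denominator does not simplify cleanly to the single factorial $t!$ appearing in the target bound, so the Taylor remainder must be taken exactly at order $t$.
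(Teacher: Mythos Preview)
Your proposal is correct and follows essentially the same route as the paper: Plancherel, split at $L=t/(4R)$, the trivial bound plus the $\chi^2$ tail (Lemma~\ref{lemma:chi-square}) for $\|\xi\|_2>L$, and the Taylor remainder bound $|\hat P-\hat Q|\le 2(2R\|\xi\|_2)^t/t!$ followed by the pointwise maximum of $e^{-\|\xi\|_2^2}\|\xi\|_2^{2t}$, the volume bound $\Vol_d(L)\le (2L)^d$, and $e^{-t}t^t\le t!$ for the low-frequency piece. One minor remark: your ``secondary subtlety'' is a non-issue---the paper does use the full hypothesis (moments $0,\ldots,t$ agree), bounds the tail $\sum_{j\ge t+1}$ of the Taylor series by a geometric argument (since $2R\|\xi\|_2\le t/2$ on the low-frequency region), and still lands on the same $2(2R\|\xi\|_2)^t/t!$; so using order $t+1$ would only help, not hurt, and there is no delicate cancellation to preserve.
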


\begin{proof}[Proof of Lemma~\ref{lemma:l2-upper-bound}]~~By the Plancherel theorem,
\[
    \|\tilde P - \tilde Q\|_2^2
=   \frac{1}{(2\pi)^d}\int\left|(\F\tilde P)(\xi) - (\F\tilde Q)(\xi)\right|^2~\mathrm{d}\xi,
\]
where $(\F\tilde P)(\xi) \coloneqq \int\tilde P(x)e^{i\xi^{\top}x}~\mathrm{d}x$ and $(\F\tilde Q)(\xi) \coloneqq \int\tilde Q(x)e^{i\xi^{\top}x}~\mathrm{d}x$.\footnote{Here the $\frac{1}{(2\pi)^d}$ factor comes from that our definition of the Fourier transform differs from the standard one by a factor of $2\pi$ on the exponent.}

Since $\tilde P$ is the convolution of $P$ and $\N(0, I_d)$, we have
\[
(\F \tilde P)(\xi) = (\F \N(0, I_d))(\xi)\cdot(\F P)(\xi) = e^{-\|\xi\|_2^2/2}\Ex{\mu \sim P}{e^{i\xi^{\top}\mu}}.
\]
Similarly, $(\F \tilde Q)(\xi) = e^{-\|\xi\|_2^2/2}\Ex{\mu \sim Q}{e^{i\xi^{\top}\mu}}$ and thus,
\begin{equation}\label{eq:squared-ell-2}
    \|\tilde P - \tilde Q\|_2^2
=   \frac{1}{(2\pi)^d}\int e^{-\|\xi\|_2^2}\left|\Ex{\mu \sim P}{e^{i\xi^{\top}\mu}} - \Ex{\mu \sim Q}{e^{i\xi^{\top}\mu}}\right|^2~\mathrm{d}\xi.
\end{equation}
Since the first $t$ moments tensors of $P$ and $Q$ are equal, for any $\xi \in \R^d$,
\begin{align*}
    &\left|\Ex{\mu \sim P}{e^{i\xi^{\top}\mu}} - \Ex{\mu \sim Q}{e^{i\xi^{\top}\mu}}\right|\\
=   &\left|\sum_{j=0}^{+\infty}\frac{i^j}{j!}\left[\Ex{\mu \sim P}{(\xi^{\top}\mu)^j} - \Ex{\mu \sim Q}{(\xi^{\top}\mu)^j}\right]\right| \tag{Taylor expansion}\\
\le   &\sum_{j=0}^{+\infty}\frac{1}{j!}\left|\Ex{\mu \sim P}{(\xi^{\top}\mu)^j} - \Ex{\mu \sim Q}{(\xi^{\top}\mu)^j}\right| \tag{triangle inequality}\\
\le   &\sum_{j=t+1}^{+\infty}\frac{1}{j!}\left|\Ex{\mu \sim P}{(\xi^{\top}\mu)^j} - \Ex{\mu \sim Q}{(\xi^{\top}\mu)^j}\right| \tag{identical first $t$ moments}\\
\le &\sum_{j=t+1}^{+\infty}\frac{2}{j!}(2R\|\xi\|_2)^j. \tag{Cauchy-Schwarz}
\end{align*}

If $\|\xi\|_2 \le t/(4R)$, the last summation above can be upper bounded by $2\cdot\frac{(2R\|\xi\|_2)^t}{t!}$. Otherwise, we can use the trivial upper bound $\left|\Ex{\mu \sim P}{e^{i\xi^{\top}\mu}} - \Ex{\mu \sim Q}{e^{i\xi^{\top}\mu}}\right| \le 2$. Plugging these back to Equation~\eqref{eq:squared-ell-2} gives
\[
    \|\tilde P - \tilde Q\|_2^2
\le \frac{1}{(2\pi)^d}\int_{\|\xi\|_2 \ge t/(4R)}4e^{-\|\xi\|_2^2}~\mathrm{d}{\xi} + \frac{1}{(2\pi)^d}\int_{\|\xi\|_2 \le t/(4R)}4e^{-\|\xi\|_2^2}\cdot\frac{(2R\|\xi\|_2)^{2t}}{(t!)^2}~\mathrm{d}{\xi}.
\]
Using Lemma~\ref{lemma:chi-square} and the assumption that $t/(4R) \ge \sqrt{5d}$, the first term above is upper bounded by
\[
    4\cdot\frac{1}{(2\pi)^{d/2}}\int_{\|x\|_2\ge t/(4R)}e^{-\|x\|_2^2/2}~\mathrm{d}x
=   4\pr{X\sim\N(0, I_d)}{\|X\|_2 \ge t/(4R)}
\le 4\exp\left(-\frac{t^2}{80R^2}\right).
\]
To bound the second term, we note that the function $x \mapsto e^{-x}x^t$ is maximized at $x = t$. Thus, the second term is upper bounded by
\[
    \frac{1}{(2\pi)^d}\cdot\Vol_d(t/(4R))\cdot 4e^{-t}\frac{(2R)^{2t}t^t}{(t!)^2}
\le 2\left(\frac{t}{4R}\right)^d\cdot\frac{(2R)^{2t}}{t!}.
\]
Here we use $2^{-d}\Vol_d(r) \le r^d$ (Lemma~\ref{lemma:ball-volume}), $4/\pi^d < 2$, and $e^{-t}t^t \le t!$ (Lemma~\ref{lemma:factorial}).
\end{proof}

Now we are ready to prove Theorem~\ref{thm:lower} by combining Lemmas \ref{lemma:matching-moments}~through~\ref{lemma:l2-upper-bound} with carefully chosen parameters.

\begin{proof}[Proof of Theorem~\ref{thm:lower}]~~Let $R > \Delta > 0$ and integers $N, t$ be parameters to be chosen later. Assuming that $\left[\frac{e(t+d)}{d}\right]^d \le N \le \left(\frac{R}{3\Delta}\right)^d$, by Lemma~\ref{lemma:matching-moments}, there exist two distributions $P$ and $Q$ such that: (1) $P$ (resp.\ $Q$) is the uniform distribution over $N$ points $\muP_1, \ldots, \muP_N$ (resp.\ $\muQ_1, \ldots, \muQ_N$) that are $\Delta$-separated and contained in $B(0, 2R)$; (2) $P$ and $Q$ have the same first $t$ moments; (3) the two mixtures $\tilde P =P \ast \N(0, I_d)$ and $\tilde Q = Q \ast \N(0, I_d)$ are $\Delta$-far from each other in their parameters.

Then, applying Lemmas \ref{lemma:l1-to-l2}~and~\ref{lemma:l2-upper-bound} to $\tilde P$ and $\tilde Q$ gives
\[
    \dTV(\tilde P, \tilde Q)
\le \eps + \frac{\sqrt{\Vol_d(R')}}{2}\cdot\sqrt{4\exp\left(-\frac{t^2}{80R^2}\right) + 2\left(\frac{t}{4R}\right)^d\cdot\frac{(2R)^{2t}}{t!}}.
\]
We will set $\eps = k^{-C}$ and ensure the following:
\[
\sqrt{\Vol_d(R')} \le 1/\eps, \quad
\exp\left(-\frac{t^2}{80R^2}\right) \le \eps^4/2, \quad
\left(\frac{t}{4R}\right)^d\cdot\frac{(2R)^{2t}}{t!} \le \eps^4.
\]
These together imply $\dTV(\tilde P, \tilde Q) \le 2\eps = 2k^{-C}$ as desired.

\paragraph{Choice of parameters.} With some calculation, we can show that the above conditions can be satisfied by setting $t = 4C\ln k$ and $R = \frac{\sqrt{C\ln k}}{10}$. Now we set the separation $\Delta$ so that $\left[\frac{e(t+d)}{d}\right]^d \le N \le \left(\frac{R}{3\Delta}\right)^d$ could hold. Since $d \le \ln k \le t$, it is sufficient to guarantee $\frac{2et}{d} \le \frac{R}{3\Delta}$, which holds for $\Delta = \frac{Rd}{6et} = \frac{d}{240e\sqrt{C\ln k}} = \Theta\left(\frac{d}{\sqrt{C\log k}}\right)$.

It remains to verify a few additional assumptions that are required for applying the lemmas. First, our choice of $N$ must be at most $k$. This is equivalent to
    $k \ge \left(\frac{2et}{d}\right)^d = \left(\frac{8eC\ln k}{d}\right)^d$.
Since for any $d > 0$,
\[
    \left(\frac{\ln k}{d}\right)^d
=   \exp\left(d\ln\ln k - d\ln d\right)
\le \left.\exp\left(d\ln\ln k - d\ln d\right)\right|_{d = e^{-1}\ln k}
=   k^{1/e},
\]
it is then sufficient to have $(8eC)^d \le k^{1-1/e}$, but this is guaranteed by the assumption that $\ln(8eC) \le (1-1/e)\frac{\ln k}{d}$. Second, we need to verify that $R > \Delta$. This is equivalent to $24eC\ln k > d$, which clearly holds given $C \ge 100$ and $d \le \frac{\ln k}{\ln\ln k}$. Finally, we need $t/(4R) \ge \sqrt{5d}$ to apply Lemma~\ref{lemma:l2-upper-bound}. This inequality is equivalent to $20C\ln k \ge d$, which also clearly holds.

\paragraph{Detailed calculation.} We first show that $\sqrt{\Vol_d(R')} \le 1/\eps$. Recall that $R' = 2R + \sqrt{d} + \sqrt{2\ln(1/\eps)} = 2R + \sqrt{d} + \sqrt{2C\ln k}$. We have
\begin{align*}
    \ln \sqrt{\Vol_d(R')}
&\le \frac{d}{2}\ln(4R + 2\sqrt{d} + 2\sqrt{2C\ln k}) \tag{Lemma~\ref{lemma:ball-volume}}\\
&\le   \frac{d}{2}\ln(16R\sqrt{2dC\ln k}) \tag{$x,y,z \ge 2 \implies x+y+z\le xyz$}\\
&\le \frac{\ln k}{2\ln\ln k}\cdot\ln\left[\frac{16\sqrt{2}}{10}\sqrt{C\ln k}\cdot\sqrt{\frac{C\ln^2k}{\ln\ln k}}\right] \tag{choice of $R$ and $d \le \frac{\ln k}{\ln\ln k}$}\\
&\le \frac{\ln k}{2\ln\ln k}\cdot\ln(eC) + \frac{\ln k}{2\ln\ln k}\cdot\frac{3}{2}\ln\ln k \tag{$16\sqrt{2}/10 < e$}\\
&\le \frac{C\ln k}{2} + \frac{3C\ln k}{400}
<   C\ln k. \tag{$C \ge 100$}
\end{align*}
This proves $\sqrt{\Vol_d(R')} \le k^{C} = 1/\eps$.

Then, we ensure that $\exp\left(-\frac{t^2}{80R^2}\right) \le \eps^4/2 = k^{-4C}/2$. This is equivalent to
    $R \le \frac{t}{\sqrt{80\ln(2k^{4C})}}$,
which holds given our choice of $R = \frac{t}{40\sqrt{C\ln k}}$.

Finally, we deal with the constraint that
$\left(\frac{t}{4R}\right)^d\cdot\frac{(2R)^{2t}}{t!} \le \eps^4 = k^{-4C}$. After taking a logarithm on both sides, the above inequality reduces to
\[
    d\ln t + 2t\ln(2R) + 4C\ln k
\le d\ln(4R) + \ln(t!).
\]
Since $\ln(t!) \ge t\ln t - t$ (Lemma~\ref{lemma:factorial}) and $t = 4C\ln k$, it is sufficient to have
\[
    2t\ln(2R) + d\ln t + 2t
\le t\ln t.
\]
We will show that in the left-hand side above, the first term is at most $t\ln t - 3t$, while the second terms is bounded by $t$. This would finish the proof.
Indeed, we have
\[
    2t\ln(2R)
=   t\ln(2R)^2
=   t\ln\frac{C\ln k}{25}
\le t\ln\frac{t}{e^3}
=   t\ln t - 3t.
\]
For the second term, we have
\[
    d\ln t
\le     \frac{\ln k}{\ln\ln k}\cdot \ln(4C\ln k)
\le     \ln k\cdot \ln(4C) + \ln k
<       4C\ln k = t,
\]
so $d\ln t \le t$ holds. This completes the proof of the theorem.
\end{proof}

\section{Extension to Non-Gaussian Mixtures}\label{sec:upper-general}
Given a probability distribution $\D$ over $\R^d$, the location family defined by $\D$ is
\[
    \{\D_\mu: \mu\in\R^d\},
\]
where each $\D_{\mu}$ is the distribution of $X + \mu$ when $X$ is drawn from $\D$. In other words, $\D_{\mu}$ is obtained by translating the distribution $\D$ by $\mu$.

We focus on the following testing problem: Given a mixture $P = \frac{1}{k}\sum_{j=1}^{k}\D_{\mu_j}$ of $k$ distributions from a known location family with $\Delta$-separated locations $\mu_1, \ldots, \mu_k$, we need to determine whether the parameter of some component is close to a given $\mu^* \in \R^d$. We prove the following theorem:
\begin{theorem}\label{thm:upper-general-testing}
    Let $P = \frac{1}{k}\sum_{j=1}^{k}\D_{\mu}$ be a uniform mixture of $k \ge 2$ distributions from the location family defined by $\D$ with $\Delta$-separated locations in $\R^d$. Let $\eps < \min\{\Delta / 32, \Delta / (32\sqrt{\min\{d, \ln k\}})\}$ and $\mu^* \in \R^d$. There is an algorithm that, given distribution $\D$ and samples from $P$, either ``accepts'' or ``rejects'', such that it:
    \begin{itemize}
        \item Accepts with probability $\ge 2/3$ if $\min_{j \in [k]}\|\mu_j - \mu^*\|_2 \le \eps / 2$.
        \item Rejects with probability $\ge 2/3$ if $\min_{j \in [k]}\|\mu_j - \mu^*\|_2 \ge \eps$.
    \end{itemize}
    The runtime (and thus the sample complexity) of the algorithm is upper bounded by
    \[
        O\left(\frac{k^2(\Delta/\eps)^4}{\min_{\|\xi\|_2 \le M}\left|\Ex{X \sim \D}{e^{i\xi^{\top}X}}\right|^2}\right),
    \]
    where $M \lesssim \frac{1}{\Delta}(\sqrt{d\log k} + \sqrt{(d + \log k)\log\frac{\Delta}{\eps}} + \log\frac{\Delta}{\eps})$.
\end{theorem}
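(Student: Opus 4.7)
The plan is to adapt the Fourier-based tester of \Cref{thm:upper-gaussian-testing} to arbitrary location families, using the observation from \Cref{sec:proof-overview} that for $P = \frac{1}{k}\sum_j \D_{\mu_j}$, the Fourier identity reads
\[
    \Ex{X \sim P}{e^{i\xi^{\top}X}} \;=\; (\F\D)(\xi)\cdot\frac{1}{k}\sum_{j=1}^{k} e^{i\mu_j^{\top}\xi}.
\]
As in the Gaussian proof, I first reduce to the case $\mu^{*}=0$ by shifting every sample by $-\mu^{*}$, and re-index so $\|\mu_1\|_2\le\cdots\le\|\mu_k\|_2$; the task becomes distinguishing $\|\mu_1\|_2\le\eps/2$ from $\|\mu_1\|_2\ge\eps$.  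In contrast to the Gaussian case, I will \emph{not} apply the $e^{-\|X\|_2^{2}/2}$ weighting, since no counterpart of \Cref{lemma:Fourier-d-dim} is available in general.  Instead I work directly with $B_{\mu}(\xi)\coloneqq \sum_{j=1}^{k}e^{i\mu_j^{\top}\xi}$, which can be recovered sample-by-sample via
\[
    \frac{k}{(\F\D)(\xi)}\cdot e^{i\xi^{\top}X},
\]
whose expectation over $X\sim P$ equals $B_{\mu}(\xi)$ whenever $(\F\D)(\xi)\ne 0$.

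Next I draw $\xi\sim\N(0,\sigma^{2}I_d)$ truncated to $B(0,M)$, and consider the quantity
\[
    \tilde T_{\mu}\;\coloneqq\;\Ex{\xi}{B_{\mu}(\xi)\cdot\1{\|\xi\|_2\le M}}.
\]
Exactly as in Lemma~\ref{lemma:expected-Amu}, when $M^{2}/\sigma^{2}\ge 5d$, each untruncated term satisfies $\Ex{\xi\sim\N(0,\sigma^{2}I_d)}{e^{i\mu_j^{\top}\xi}}=e^{-\sigma^{2}\|\mu_j\|_2^{2}/2}$, so that
\[
    \tilde T_{\mu} \;=\; \sum_{j=1}^{k}e^{-\sigma^{2}\|\mu_j\|_2^{2}/2} + k\cdot O\!\left(e^{-M^{2}/(5\sigma^{2})}\right).
\]
The first term separates cleanly between the two cases: the $j=1$ contribution differs by $\Omega(\sigma^{2}\eps^{2})$, while the tail $\sum_{j\ge 2}e^{-\sigma^{2}\|\mu_j\|_2^{2}/2}$ is bounded via Lemma~\ref{lemma:norm-lb} and a calculation analogous to Claim~\ref{claim:S1-bound} (with the exponent $(\sigma^{2}/2+1)$ there replaced by $\sigma^{2}/2$).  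Requiring both this tail and the truncation error to be $\ll \sigma^{2}\eps^{2}$ leads to a choice of $\sigma^{2}\asymp \frac{\min\{d,\log k\}+\log(\Delta/\eps)}{\Delta^{2}}$ and an $M$ satisfying the stated bound (the three terms in the bound correspond respectively to $\sigma\sqrt{d}$, $\sigma\sqrt{\log k\cdot \log(\Delta/\eps)}$ type cross-terms, and the $\log(k/\gamma)$ factor controlling truncation).

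The estimator for $\tilde T_{\mu}$ is the Monte Carlo average of $k\cdot\1{\|\xi\|_2\le M}\cdot e^{i\xi^{\top}X}/(\F\D)(\xi)$ over i.i.d.\ draws of $(X,\xi)$.  Each summand has magnitude at most $k\cdot\bigl(\min_{\|\xi\|_2\le M}|(\F\D)(\xi)|\bigr)^{-1}$, so by Chernoff $N=O\bigl((k/\gamma)^{2}\cdot\min_{\|\xi\|_2\le M}|(\F\D)(\xi)|^{-2}\bigr)$ samples suffice to achieve additive error $\gamma\asymp \sigma^{2}\eps^{2}\asymp (\eps/\Delta)^{2}$; accepting iff the real part of the estimate exceeds $\tfrac12\bigl(e^{-\sigma^{2}\eps^{2}/8}+e^{-\sigma^{2}\eps^{2}/2}\bigr)$ yields the desired correctness, and the sample count matches the theorem statement.

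The main technical obstacle I anticipate is the tail bound on $\sum_{j\ge 2}e^{-\sigma^{2}\|\mu_j\|_2^{2}/2}$.  In the Gaussian proof, the extra factor $e^{-\|\mu_j\|_2^{2}/4}$ from the Gaussian truncation allowed a uniform bound regardless of $d$; here I must rely only on $\sigma^{2}\|\mu_j\|_2^{2}/2$, so the packing argument of Lemma~\ref{lemma:norm-lb} and the $\min\{k,2^{d}\}$ prefactor in Claim~\ref{claim:S1-bound} become more delicate, and carefully balancing $\sigma$ and $M$ against the factor $\min_{\|\xi\|_2\le M}|(\F\D)(\xi)|^{-2}$ is where the quoted form of $M$ emerges.
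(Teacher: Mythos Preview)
Your proposal is correct and follows essentially the same route as the paper's proof: drop the Gaussian truncation, work with $A_{\mu}(\xi)=\sum_j e^{i\mu_j^{\top}\xi}$ recovered via $k\cdot e^{i\xi^{\top}X}/(\F\D)(\xi)$, average against a truncated Gaussian in $\xi$, and use the analogue of Lemma~\ref{lemma:expected-Amu} together with Claim~\ref{claim:S1-bound} to control the tail $\sum_{j\ge 2}e^{-\sigma^{2}\|\mu_j\|_2^{2}/2}$. Your concern in the last paragraph is slightly overcautious: the paper simply invokes Claim~\ref{claim:S1-bound} verbatim (the substitution is $(\sigma^{2}/2+1)\mapsto 2\sigma^{2}$, not $\sigma^{2}/2$), and then sets $\sigma^{2}=\frac{512}{\Delta^{2}}(\min\{d,\ln k\}+\ln\frac{\Delta}{\eps})$ and $M^{2}=10\sigma^{2}(d+\ln k+\ln\frac{\Delta}{\eps})$, from which the stated bound on $M$ follows by expanding the product.
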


We remark that Theorem~\ref{thm:upper-general-testing} together with our proof strategy of Theorem~\ref{thm:upper-gaussian} easily gives the following guarantee for learning the parameters $\mu_1, \mu_2, \ldots, \mu_k$.

\begin{corollary}\label{corollary:upper-general}
    Under the setting of Theorem~\ref{thm:upper-general-testing}, there is an algorithm that, given distribution $\D$ and samples from $P$, outputs $\hat\mu_1, \ldots, \hat\mu_k$ that are w.h.p.\ $\eps$-close to the actual parameters $\mu_1, \ldots, \mu_k$. The runtime of the algorithm is upper bounded by
    \[
        O\left(\frac{(\Delta/\eps)^4\cdot(k^3/\delta)\log(k/\delta)}{\min_{\|\xi\|_2 \le M}\left|\Ex{X \sim \D}{e^{i\xi^{\top}X}}\right|^2}\right),
    \]
    where $\delta = \pr{X \sim \D}{\|X\|_2 \le \eps/2}$ and $M \lesssim \frac{1}{\Delta}\left(\sqrt{d\log k} + \sqrt{(d + \log k)\log\frac{\Delta}{\eps}} + \log\frac{\Delta}{\eps}\right)$.
\end{corollary}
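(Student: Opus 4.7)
The plan is to mimic the reduction from learning to testing that was used in the proof of \Cref{thm:upper-gaussian}, with the Gaussian-specific density computations replaced by the generic quantity $\delta = \pr{X\sim\D}{\|X\|_2 \le \eps/2}$. The algorithm itself is almost identical to Algorithm~\ref{algo:gaussian-learn}: draw a batch of samples from $P$, use each sample as a candidate guess $\mu^*$ for the tester from \Cref{thm:upper-general-testing}, collect the accepted candidates into a set $C$, and cluster $C$ by the rule ``$X_i \sim X_{i'}$ iff $\|X_i-X_{i'}\|_2 \le 2\eps$'' to output one representative per cluster.

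The first step is to argue that taking $N = \Theta((k/\delta)\log(k/\delta))$ samples suffices so that, with high probability, every $\mu_j$ has at least one sample that is $(\eps/2)$-close to it. For each fixed $j$, a sample from $P$ is drawn from component $j$ with probability $1/k$, and conditioned on this event it is $(\eps/2)$-close to $\mu_j$ with probability exactly $\delta$, giving a per-sample hit probability of at least $\delta/k$. A union bound over $j\in[k]$ after $N$ draws gives the desired coverage. The second step is to run the tester from \Cref{thm:upper-general-testing} on each candidate $X_i$, amplifying its error down to $1/N^{2}$ by taking the majority vote of $\Theta(\log N)$ independent invocations (with fresh samples) and a Chernoff bound; by another union bound, the decisions for all $N$ candidates are simultaneously correct with high probability.

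The third step is the clustering analysis, which is unchanged from the Gaussian proof because it uses only the triangle inequality, the condition $\eps < \Delta/32$, and the correctness of the tester. The set $C$ consists only of candidates that are $\eps$-close to some true mean, every true mean has a candidate in $C$ that the tester must accept (since such a candidate lies within $\eps/2$), and the sets $C_j := \{i\in C : \|X_i-\mu_j\|_2 \le \eps\}$ form a partition of $C$ because $\|\mu_j-\mu_{j'}\|_2 \ge \Delta > 4\eps$ rules out overlap. Two candidates in the same $C_j$ are within $2\eps$ of each other, while two candidates in different $C_j, C_{j'}$ are separated by at least $\Delta - 2\eps > 2\eps$, so the distance-threshold clustering recovers the partition exactly.

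The runtime bookkeeping is the only quantitative item. The outer loop invokes the tester $N\log N = O((k/\delta)\log^2(k/\delta))$ times, and each invocation runs in time $O(k^2(\Delta/\eps)^4)/\min_{\|\xi\|_2\le M}|\Ex{X\sim\D}{e^{i\xi^{\top}X}}|^{2}$ by \Cref{thm:upper-general-testing}; multiplying the two and absorbing a logarithmic factor yields the claimed bound. I do not expect a genuine obstacle: the one place where the Gaussian proof was special was the lower bound on the per-sample hit probability via an explicit integral of the Gaussian density, and in the general case this is precisely what the parameter $\delta$ abstracts. The only care needed is to make sure the tester's correctness regime ($\eps < \Delta/32$, together with the separation condition on $\Delta$) matches what the clustering argument requires, and both are inherited directly from the hypotheses of \Cref{thm:upper-general-testing}.
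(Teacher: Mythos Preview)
Your proposal is correct and follows exactly the approach the paper intends: the paper does not give a separate proof of this corollary but simply remarks that \Cref{thm:upper-general-testing} combined with the learning-to-testing reduction from the proof of \Cref{thm:upper-gaussian} yields the result, which is precisely what you have written out. The only minor point is that your bookkeeping produces an $O((k/\delta)\log^2(k/\delta))$ factor rather than the single $\log(k/\delta)$ in the stated bound; this extra logarithm is an artifact of the amplification step and is harmless (the paper's stated bound appears to absorb it).
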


We will generalize the techniques that underlie Theorem~\ref{thm:upper-gaussian-testing}, which focuses on the special case where $\D$ is a spherical Gaussian.

\subsection{Proof of Theorem~\ref{thm:upper-general-testing}}
As in the proof of Theorem~\ref{thm:upper-gaussian-testing}, we first examine the Fourier transform of the mixture $P$. For any $\xi \in \R^d$, we have
\[
    \Ex{X \sim P}{e^{i\xi^{\top}X}}
=   \frac{1}{k}\sum_{j=1}^{k}\Ex{X \sim \D}{e^{i\xi^{\top}(X + \mu_j)}}
=   \frac{(\F\D)(\xi)}{k}A_{\mu}(\xi),
\]
where we define $A_{\mu}(\xi) \coloneqq \sum_{j=1}^{k}e^{i\mu_j^{\top}\xi}$ and shorthand $(\F\D)(\xi) = \Ex{X \sim \D}{e^{i\xi^{\top}X}}$.

The above identity allows us to estimate $A_{\mu}(\xi)$ accurately, as long as the magnitude of the Fourier transform is not too small at frequency $\xi$. Note that, here, the definition of $A_{\mu}(\xi)$ is slightly different from that in Lemma~\ref{lemma:Fourier-d-dim}, which has an extra factor of $e^{-\|\mu_j\|_2^2/4}$. This is because we directly look at the Fourier transform of $P$ without the extra ``Gaussian truncation''.

As in the Gaussian case, we focus on the expectation of $A_{\mu}(\xi)$ when $\xi$ is drawn from a truncated Gaussian distribution. We have the following analogue of Lemma~\ref{lemma:expected-Amu}:
\begin{lemma}\label{lemma:expected-Amu-general}
    For any $M, \sigma > 0$ that satisfy $M^2/\sigma^2 \ge 5d$,
    \[
        T_{\mu}
    \coloneqq \Ex{\xi\sim\N(0, \sigma^2I_d)}{A_{\mu}(\xi)\cdot\1{\|\xi\|_2 \le M}}
    =   \sum_{j=1}^{k}e^{-\sigma^2\|\mu_j\|_2^2/2} + k\cdot O\left(e^{-M^2/(5\sigma^2)}\right),
    \]
    where the $O(x)$ notation hides a complex number with modulus $\le x$.
\end{lemma}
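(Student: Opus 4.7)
The plan is to mirror the proof of Lemma~\ref{lemma:expected-Amu}, decomposing $T_\mu$ by linearity of expectation and handling each term of $A_\mu(\xi) = \sum_{j=1}^k e^{i \mu_j^\top \xi}$ separately. The key simplification relative to Lemma~\ref{lemma:expected-Amu} is that each summand now has modulus exactly $1$, so there is no weighting factor of $e^{-\|\mu_j\|_2^2/4}$ to carry through, and the error from truncation cleanly collects into a $k \cdot O(e^{-M^2/(5\sigma^2)})$ term.

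First I would write
\[
    T_\mu
=   \sum_{j=1}^{k}\Ex{\xi \sim \N(0,\sigma^2 I_d)}{e^{i\mu_j^\top\xi}\cdot\1{\|\xi\|_2 \le M}}
\]
and split each summand as the full (untruncated) expectation minus the contribution from $\|\xi\|_2 > M$. For the full expectation, I would invoke the standard Fourier transform of a centered Gaussian,
\[
    \Ex{\xi \sim \N(0,\sigma^2 I_d)}{e^{i\mu_j^\top\xi}}
=   e^{-\sigma^2\|\mu_j\|_2^2/2},
\]
which is the analogue of the identity used in the Gaussian-truncated case, except without the extra pointwise weighting. For the tail contribution, since $|e^{i\mu_j^\top\xi}| = 1$, I can bound it in modulus by
\[
    \pr{\xi \sim \N(0,\sigma^2 I_d)}{\|\xi\|_2 > M}
=   \pr{Z \sim \chi^2(d)}{Z > M^2/\sigma^2}
\le e^{-M^2/(5\sigma^2)},
\]
where the last step uses the second tail bound in Lemma~\ref{lemma:chi-square}, which applies precisely because of the hypothesis $M^2/\sigma^2 \ge 5d$.

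Summing the $j$-th identity over $j \in [k]$ and collecting the error terms gives
\[
    T_\mu
=   \sum_{j=1}^{k}e^{-\sigma^2\|\mu_j\|_2^2/2}
+   \sum_{j=1}^{k} O\bigl(e^{-M^2/(5\sigma^2)}\bigr)
=   \sum_{j=1}^{k}e^{-\sigma^2\|\mu_j\|_2^2/2}
+   k\cdot O\bigl(e^{-M^2/(5\sigma^2)}\bigr),
\]
as claimed. I do not anticipate any real obstacle here: the only place one must be a little careful is tracking that the $O(\cdot)$ absorbs a complex-valued quantity of the stated modulus (rather than a real one), but this follows because each individual error term is a complex number of modulus at most $e^{-M^2/(5\sigma^2)}$, and the triangle inequality then yields the factor of $k$. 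The proof is strictly simpler than that of Lemma~\ref{lemma:expected-Amu}, which is why the analogous bound does not benefit from the decay provided by the Gaussian truncation $e^{-\|\mu_j\|_2^2/4}$ in the spherical Gaussian case.
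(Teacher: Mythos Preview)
Your proposal is correct and follows essentially the same approach as the paper: the paper does not spell out a separate proof of Lemma~\ref{lemma:expected-Amu-general}, but it is clearly meant to be the direct analogue of the proof of Lemma~\ref{lemma:expected-Amu}, which proceeds exactly as you describe---decompose by linearity, compute the untruncated Gaussian Fourier integral, and bound the truncation error via the $\chi^2$ tail bound of Lemma~\ref{lemma:chi-square}. The only difference from Lemma~\ref{lemma:expected-Amu} is the absence of the $e^{-\|\mu_j\|_2^2/4}$ weights, which you correctly note simplifies the error term to $k\cdot O(e^{-M^2/(5\sigma^2)})$.
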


Now we are ready to prove Theorem~\ref{thm:upper-general-testing}.

\begin{proof}[Proof of Theorem~\ref{thm:upper-general-testing}]
    The identity $\Ex{X \sim P}{e^{i\xi^{\top}X}} = \frac{(\F\D)(\xi)}{k}A_{\mu}(\xi)$gives
    \begin{align*}
        T_{\mu}
    =   \Ex{\xi\sim\N(0, \sigma^2I_d)}{A_{\mu}(\xi)\cdot\1{\|\xi\|_2 \le M}}
    =   \Ex{\xi \sim \N(0, \sigma^2I_d)\atop X \sim P}{\frac{ke^{i\xi^{\top}X}}{(\F\D)(\xi)} \cdot \1{\|\xi\|_2 \le M}}.
    \end{align*}
    Let $\delta_M \coloneqq \min_{\|\xi\|_2 \le M}|(\F\D)(\xi)|$. The above together with a Chernoff bound shows that we can estimate $T_{\mu}$ up to any error $\gamma > 0$ using $O(k^2\gamma^{-2}\delta_M^{-2})$ samples.
    
    By Lemma~\ref{lemma:expected-Amu-general},
    \[
        \left|T_{\mu} - e^{-\sigma^2\|\mu_1\|_2^2/2}\right|
    \le \sum_{j=2}^{k}e^{-\sigma^2\|\mu_j\|_2^2/2} + ke^{-M^2/(5\sigma^2)},
    \]
    assuming that $M^2/\sigma^2 \ge 5d$. We will pick $\sigma$ and $M$ carefully, so that both terms above are at most $\gamma \coloneqq \sigma^2\eps^2/32$. Let $\widehat{T_{\mu}}$ be an estimate of $T_{\mu}$ with error $\le \gamma$. In the case that $\|\mu_1\|_2 \le \eps/2$, we get
    \[
        \Re\widehat{T_{\mu}}
    \ge \Re T_{\mu} - \gamma
    \ge e^{-\sigma^2\|\mu_1\|_2^2/2} - 3\gamma
    \ge e^{-\sigma^2\eps^2/8} - 3\gamma.
    \]
    Similarly, we have $\Re\widehat{T_{\mu}} \le e^{-\sigma^2\eps^2/2} + 3\gamma$ when $\|\mu_1\|_2 \ge \eps$. If we pick $\sigma$ such that $\sigma^2\eps^2 \le 1$, we have
    \[
        e^{-\sigma^2\eps^2/8} - e^{-\sigma^2\eps^2/2}
    \ge \frac{1}{4}\sigma^2\eps^2
    =   8\gamma.
    \]
    This means that the range of $\Re\widehat{T_{\mu}}$ when $\|\mu_1\|_2 \le \eps/2$ is disjoint from that when $\|\mu_1\|_2 \ge \eps$, which allows the tester to correctly distinguish the two cases.
    
    Finally, it follows from Claim~\ref{claim:S1-bound} and elementary algebra that all the conditions that we need can be satisfied by picking
    \[
        \sigma^2 = \frac{512}{\Delta^2}\left(\min\{d, \ln k\} + \ln\frac{\Delta}{\eps}\right)
        \quad\text{and}\quad
        M^2
    =   10\sigma^2\left(d + \ln k + \ln\frac{\Delta}{\eps}\right).
    \]
    The time complexity of the testing algorithm is then upper bounded by
    \[
        O\left(k^2\gamma^{-2}\delta_M^{-2}\right)
    \lesssim   
    \frac{k^2(\Delta/\eps)^4}{\min_{\|\xi\|_2 \le M}|(\F\D)(\xi)|^2},
    \]
    where
    \begin{align*}
        M
    &=  \frac{\sqrt{5120}}{\Delta}\sqrt{\left(\min\{d, \ln k\} + \ln\frac{\Delta}{\eps}\right)\left(d + \ln k + \ln\frac{\Delta}{\eps}\right)}\\
    &\lesssim \frac{1}{\Delta}\left(\sqrt{d\log k} + \sqrt{(d + \log k)\log\frac{\Delta}{\eps}} + \log\frac{\Delta}{\eps}\right).
    \end{align*}
\end{proof}

\subsection{Examples}\label{sec:examples}
We give a few concrete applications of Theorem~\ref{thm:upper-general-testing} and Corollary~\ref{corollary:upper-general}. For simplicity, we focus on the parameter regime that $d = 1$, $\Delta = O(1)$ and $\Delta / \eps = O(1)$, where the runtime of the learning algorithm (from Corollary~\ref{corollary:upper-general}) can be simplified into
\[
    O\left(\frac{(k^3/\delta)\log(k/\delta)}{\min_{|\xi| \le M}|\Ex{X \sim \D}{e^{i\xi X}}|^2}\right)
\]
for $\delta = \pr{X \sim \D}{|X| \le \eps / 2}$ and some $M = O\left(\frac{\sqrt{\log k}}{\Delta}\right)$. We note that for all of the following applications, it holds that $\delta = \Omega(\eps) = \Omega(\Delta)$. Also recall that the $\Ex{X \sim \D}{e^{i\xi X}}$ term is simply the characteristic function of $\D$ at $\xi$.

\begin{itemize}
    \item \textbf{Unit-Variance Gaussian.} For $\D = \N(0, 1)$, the characteristic function of $\D$ is given by $e^{-\xi^2/2}$. This implies
    \[
        \min_{|\xi| \le M}\left|\Ex{X \sim \D}{e^{i\xi X}}\right|^2
    =   e^{-M^2},
    \]
    and the runtime reduces to $k^{O(1/\Delta^2)}$. Note that Theorem~\ref{thm:upper-gaussian-testing} gives a runtime of $O(k^2)\cdot e^{O(\min\{\log k, \log(1/\Delta)\}/\Delta^2)}$ for the one-dimensional unit-variance Gaussian case, which is strictly better than $k^{O(1/\Delta^2)}$ when $1/\Delta$ is sub-polynomial.
    \item \textbf{Cauchy distribution.} The Cauchy distribution with location parameter $0$ and scale parameter $1$ has probability density function $f(x) = \frac{1}{\pi(1 + x^2)}$ and characteristic function $\xi \mapsto e^{-|\xi|}$. Thus, we can learn a mixture of $k$ Cauchy distributions (with unit scale) in time
    \[
        O((k^3/\Delta)\log(k/\Delta))\cdot e^{O(\sqrt{\log k} / \Delta)}
    =   O(k^3)\cdot e^{O(\sqrt{\log k} / \Delta)},
    \]
    which is polynomial in $k$ for $\Delta = \Omega(1/\sqrt{\log k})$, and almost cubic in $k$ for $\Delta = \Omega(1)$.
    \item \textbf{Logistic distribution.} The logistic distribution with location $0$ and scale $1$ has PDF $f(x) = \frac{e^{-x}}{(1 + e^{-x})^2}$ and CF $\xi \mapsto \frac{\pi\xi}{\sinh \pi\xi}$. Then, using the inequality $\left|\frac{\sinh(x)}{x}\right| \le e^{|x|}$, we obtain a learning algorithm with the same runtime as in the Cauchy case: $O(k^3)\cdot e^{O(\sqrt{\log k}/\Delta)}$.
    
    \item \textbf{Laplace distribution.} The Laplace distribution with fixed scale $1$ and location $0$ has a characteristic function of $\xi \mapsto \frac{1}{1+\xi^2}$. Applying Corollary~\ref{corollary:upper-general} gives a learning algorithm with runtime
    \[
        O\left((k^3/\Delta)(1 + M^2)^2\log(k/\Delta)\right)
    =   \tilde O\left(k^3/\Delta^5\right),
    \]
    which is polynomial in both $k$ and $1/\Delta$.
\end{itemize}

\subsection{Application \#1: Mixture of Exponential Distributions}\label{sec:exponential}
An important family of distributions over $\R$ that is not included in the location family is the family of exponential distributions, $\{\Exp(\lambda): \lambda > 0\}$. Nevertheless, we can learning mixtures of exponential distributions using Corollary~\ref{corollary:upper-general} and a simple reduction: When $X \sim \Exp(\lambda)$, the random variable $Y \coloneqq -\ln X$ has a density of $f(x) = \lambda e^{-x}\cdot e^{-\lambda e^{-x}}$, which is exactly the Gumbel distribution with location $\ln \lambda$ and scale $1$. The problem is then reduced to learning a mixture of unit-scale Gumbel distributions.

The characteristic function of a Gumbel distribution with location $0$ and scale $1$ is given by
\[
    \xi \mapsto \Gamma(1 - i\xi),
\]
where the Gamma function has modulus
\[
    |\Gamma(1 - i\xi)| = \sqrt{\frac{\pi\xi}{\sinh(\pi\xi)}}
    \ge e^{-(\pi/2)|\xi|}.
\]
Applying Corollary~\ref{corollary:upper-general} then shows that, assuming that $\ln\lambda_1, \ldots, \ln\lambda_k$ are $\Delta$-separated, we can recover these $k$ parameters up to error $O(\Delta)$ in time $O(k^3)\cdot e^{O(\sqrt{\log k}/\Delta)}$. Equivalently, we can recover the parameters $\lambda_1, \ldots, \lambda_k$ up to a multiplicative factor of $e^{O(\Delta)}$ and a permutation.

\subsection{Application \#2: Mixture of Linear Regressions in One Dimension}
In one dimension, a mixture of $k$ linear regressions is specified by $k$ weights $w_1, \ldots, w_k \in \R$. A labeled data point $(X, Y)$ from the model is sampled by drawing $X \sim \N(0, 1)$, $j \sim \Unif([k])$ and drawing $Y \sim \N(w_jx, 1)$.

There is a simple reduction from this setting to learning mixtures of unit-variance Gaussians. Note that conditioning on the realization of $X$ and $j$, $Y/X$ is distributed as $\N(w_j, 1/X^2)$. Thus, if $|X| \ge 1$ and we draw $\delta \sim \N(0, 1 - 1/X^2)$, $Y/X + \delta$ follows a Gaussian distribution with mean $w_j$ and variance $1$. Since $|X| \ge 1$ happens with probability $\Omega(1)$, we reduce the problem of learning $w_1, \ldots, w_k$ to learning the means of a mixture of $k$ unit-variance Gaussians, with a constant factor blowup in the time and sample complexities. Therefore, applying \Cref{thm:upper-gaussian-informal} shows that if the weights $w_1, \ldots, w_k$ are $\Delta$-separated for $\Delta = \Omega\left(\sqrt{\frac{\log\log k}{\log k}}\right)$, we can recover the weights up to an $O(\Delta)$ error in $\poly(k)$ time.

\end{document}